\documentclass[english]{article}
\usepackage[OT1]{fontenc}
\usepackage[latin9]{inputenc}
\usepackage{geometry}
\geometry{verbose,tmargin=1in,bmargin=1in,lmargin=1in,rmargin=1in}
\usepackage{babel}
\usepackage{float}
\usepackage{mathtools}
\usepackage{bm}
\usepackage{dsfont}
\usepackage{amsmath}
\usepackage{amssymb}
\usepackage{graphicx}
\usepackage[authoryear]{natbib}
\usepackage[unicode=true,pdfusetitle,
 bookmarks=true,bookmarksnumbered=false,bookmarksopen=false,
 breaklinks=false,pdfborder={0 0 1},backref=false,colorlinks=false]
 {hyperref}

\makeatletter

\providecommand{\tabularnewline}{\\}
\floatstyle{ruled}
\newfloat{algorithm}{tbp}{loa}
\providecommand{\algorithmname}{Algorithm}
\floatname{algorithm}{\protect\algorithmname}

\usepackage{babel}

\usepackage{algorithmic}

\usepackage{amsthm}
\usepackage{enumitem}

\theoremstyle{plain}

\newtheorem{lemma}{\textbf{Lemma}}
\newtheorem{theorem}{\textbf{Theorem}}\setcounter{theorem}{0}

\newtheorem{fact}{\textbf{Fact}}
\newtheorem{proposition}{\textbf{Proposition}}\setcounter{theorem}{0}

\theoremstyle{definition}


\usepackage{color}

\usepackage{dsfont}

\newcommand{\myalg}{$\mathrm{RP\text{-}MLE}$}
\newcommand{\myalgM}{$\mathrm{MRP\text{-}MLE}$}
\newcommand{\myalgW}{$\mathrm{WP\text{-}MLE}$}

\usepackage{authblk}

\author[1]{Yuepeng Yang} 
\author[2]{Cong Ma} 
\affil[1]{Department of Statistics and Data Science, Yale University} 
\affil[2]{Department of Statistics, University of Chicago} 
\date{June 21, 2024, Revised on October 29,  2025}
\@ifundefined{showcaptionsetup}{}{%
 \PassOptionsToPackage{caption=false}{subfig}}
\usepackage{subfig}
\makeatother

\begin{document}
\title{Random pairing MLE for estimation of item parameters\\ in Rasch model}
\maketitle

\begin{abstract}
The Rasch model, a classical model in the item response theory, is
widely used in psychometrics to model the relationship between individuals'
latent traits and their binary responses to assessments or questionnaires.
In this paper, we introduce a new likelihood-based estimator---random
pairing maximum likelihood estimator (\myalg) and its bootstrapped
variant multiple random pairing MLE (\myalgM) which faithfully estimate
the item parameters in the Rasch model. The new estimators have several
appealing features compared to existing ones. First, both work for
sparse observations, an increasingly important scenario in the big
data era. Second, both estimators are provably minimax optimal in
terms of finite sample $\ell_{\infty}$ estimation error. Lastly,
both admit precise distributional characterization that allows uncertainty
quantification on the item parameters, e.g., construction of confidence
intervals for the item parameters. The main idea underlying \myalg\
and \myalgM\ is to randomly pair user-item responses to form item-item
comparisons. This is carefully designed to reduce the problem size
while retaining statistical independence. We also provide empirical
evidence of the efficacy of the two new estimators using both simulated
and real data. 
\end{abstract}

\section{Introduction}

The item response theory (IRT)~\citep{embretson2013item} is a framework
widely used in psychometrics to model the relationship between individuals'
latent traits (such as ability or personality) and their responses
to assessments or questionnaires. It is particularly useful in the
development, analysis, and scoring of tests and assessments; see the
recent survey in~\cite{chen2025item} for a statistical account of IRT.

Among statistical models in IRT, the Rasch model~\citep{Rasch1960}
is a simple but fundamental one for modeling binary responses. Specifically,
for a user $t$ (e.g., test-taker) and an item $i$ (e.g., test problem),
the Rasch model assumes that the response of user $t$ to item $i$
is binary and obeys
\[
\mathbb{P}\left[\text{user \ensuremath{t} ``loses to'' item \ensuremath{i}}\right]=\frac{e^{\theta_{i}^{\star}}}{e^{\zeta_{t}^{\star}}+e^{\theta_{i}^{\star}}},
\]
where $\zeta_{t}^{\star},\theta_{i}^{\star}\in\mathbb{R}$ are latent
traits of user $t$ and item $i$, respectively. The term ``loses
to'' here refers to negative responses, such as answering an exam
question incorrectly, writing a negative review of a product, etc.
We also call this response a comparison between user $t$ and item
$i$.

In this paper, we focus on estimating the item parameters $\bm{\theta}^{\star}$,
which is one of the four main statistical tasks surrounding the Rasch
model (or IRT more generally) listed in~\cite{chen2025item}. Estimating
the item parameters is practically important. For instance, in education
testing, $\bm{\theta}^{\star}$ could reveal the difficulty of the
exam questions, while in product reviews, $\bm{\theta}^{\star}$ could
reveal the popularity of the products. Various methods have been proposed
for estimating the item parameters $\bm{\theta}^{\star}$, including
the joint maximum likelihood estimator (JMLE), the marginal maximum
likelihood estimator, the conditional maximum likelihood estimator
(CMLE), and the spectral estimator recently proposed in~\cite{nguyen2022spectral,nguyen2023optimal}.
We refer readers to a recent article~\citep{robitzsch2021comparison}
for comparisons between different item parameter estimation methods.
However, three main gaps remain in tackling item estimation in the
Rasch model: 
\begin{itemize}
\item \textbf{Non-asymptotic guarantee}. Apart from the recently proposed
spectral estimator \citep{nguyen2022spectral,nguyen2023optimal},
most theoretical guarantees for the likelihood-based estimators are
asymptotic. Since all the estimation procedures are necessarily applied
with finite samples, the asymptotic guarantee alone fails to inform
practitioners about the performance of different estimators when working
with a limited number of samples. 
\item \textbf{Sparse observations}.\textbf{ }It is not uncommon to encounter
situations where each user only responds to a handful of questions.
This brings the challenge of incomplete or sparse observations. Many
methods, such as CMLE, allow incomplete data \citep{Molenaar1995},
but most of them lack theoretical support in the sparse regime. 
While the spectral estimator~\citep{nguyen2022spectral,nguyen2023optimal}
is capable of handling incomplete observations, its theory still requires
the observations to be relatively dense. We will elaborate on this
point later. 
\item \textbf{Uncertainty quantification}. Beyond estimation, uncertainty
quantification on the item parameters is central to realizing the
full potential of the Rasch model. However, existing results do not
address this problem under sparse observations. An exception is the
recent work by \cite{Chen2023}, which is based on joint estimation and
inference on the item parameters $\bm{\theta}^{\star}$ and the user
parameters $\bm{\zeta}^{\star}$. Their sampling scheme is more restrictive
and requires a relatively dense sampling rate. 
\end{itemize}
In light of these gaps, we raise the following question:

\medskip

\emph{Can we develop an estimator for the item parameters }$\bm{\theta}^{\star}$\emph{
that (1) enjoys optimal estimation guarantee in finite sample, and
(2) is amenable to tight uncertainty quantification, when the observations
are sparse?}

\subsection{Main contributions}

The main contribution of our work is the proposal of a novel estimator
named random pairing maximum likelihood estimator (\myalg\ in short)
that achieves the two desiderata listed above. 

In essence, \myalg\ compiles user-item comparisons to item-item comparisons
by randomly pairing responses of the same user to different items.
This pairing procedure is carefully designed to extract information
of the item parameters while retaining statistical independence. After
this compilation step, item parameters $\bm{\theta}^{\star}$ are
estimated by the MLE $\widehat{\bm{\theta}}$ given the item-item
comparisons. 

Even when the observations are extremely sparse, \myalg\ achieves
the following:
\begin{itemize}
\item Regarding estimation, we show that both \myalg\ and its bootstrapped
version enjoy optimal finite sample $\ell_{\infty}$ error guarantee.
Compared to the conventional $\ell_{2}$ error guarantee, the $\ell_{\infty}$
guarantee, as an entrywise guarantee, is more fine-grained. Consequently,
we also show that \myalg\ can recover the top-$K$ items with optimal
sample complexity. 
\item While the optimal $\ell_{\infty}$ error guarantee directly yields
optimal $\ell_{2}$ guarantee, such guarantee is only correct in an
order-wise sense. We provide a refined finite-sample $\ell_{2}$ error
guarantee of \myalg\ that is precise even in the leading constant. 
\item Supplementing the estimation guarantee, we also build an inferential
framework based on \myalg\ $\widehat{\bm{\theta}}$. More specifically,
we precisely characterize the asymptotic distribution of $\widehat{\bm{\theta}}$.
This result facilitates several inferential tasks such as hypothesis
testing and construction of confidence regions of $\bm{\theta}^{\star}$.
\end{itemize}
We test our methods on both synthetic and real data, which clearly
show competitive empirical estimation performance. The inferential
result on synthetic data also closely matches our theoretical predictions.

\subsection{Prior art}

\paragraph{Item response theory.}

The item response theory is a popular statistical framework for modeling
response data. It often involves a probabilistic model that links
categorical responses to latent traits of both users and items. In the early
endeavors, \cite{Rasch1960} introduced the Rasch model
studied herein, and \cite{Lord1968} describes a more general
framework using parametric models. Popular IRT models include the
Rasch model, the two-parameter model (2PL), and the three-parameter
logistic model (3PL). As response data widely appears in real life,
IRT finds application in numerous fields including educational assessment
\citep{DeChamplain2010}, psychometrics \citep{Lord1968}, political
science \citep{VanHauwaert2020}, and medical assessment \citep{Fries2005}.
See \cite{chen2025item} for an overview of IRT.

\paragraph{Latent score estimation for Rasch model.}

An important statistical question in the Rasch model is to estimate
the item parameters. As the Rasch model is an explicit probabilistic
model, many estimation methods are based on the principle of maximizing
likelihood. For instance, marginal MLE assumes a prior on the user
parameters that is either given or optimized within a parametric distribution
family. The item parameter is then estimated by maximizing the marginal
likelihood. A drawback is that MMLE relies on a good prior. On the
other hand, joint MLE (JMLE) makes no distributional assumption and
maximizes the joint likelihood w.r.t.~both the item and user parameters.
However, it is not consistent for estimating the item parameters when
the number of items is fixed \citep{Ghosh1995}. Interested readers
may also consult \cite{Linacre1999} for an overview of other classical
estimators.

Several methods are more relevant to our proposed estimator \myalg\
as they follow a similar philosophy to form item-item comparisons
from user-item responses. Conditional MLE (CMLE) considers the tuple
of all items that are related to a user instead of examining the induced
item pairs. It maximizes the likelihood condition on the total number
of positive response a user has. Theoretically, \cite{Andersen1973}
has shown that CMLE is asymptotically normal and consistent. However,
no non-asymptotic rate in the setting of sparse observation has been
established. Pseudo MLE (PMLE) \citep{zwinderman1995pairwise} maximizes
the sum of the log-likelihood of all pairs of responses from the same
users to different items. However, due to the dependence, no satisfying
finite sample performance guarantee has been established. Another
related approach is the spectral method, in which a Markov chain on
the space of items is formed and the item parameters are estimated
via the stationary distribution of the Markov chain. The most recent
works in this category are \cite{nguyen2022spectral,nguyen2023optimal},
which essentially use the same idea as pseudo MLE in forming item-item
comparisons.

\paragraph{The Bradley-Terry-Luce model with sparse comparisons.}

An informed reader may realize that the Rasch model resembles the
Bradley-Terry-Luce (BTL) model \citep{Luce1959,bradley1952rank} in
the ranking literature. Indeed, one can view the Rasch model as a
special case of the BTL model that distinguishes the two groups of
users and items, and only allows inter-group comparisons. There has
been a recent surge in interest in studying top-$K$ ranking in the
BTL model \citep{simons1999asymptotics,yan2012sparse,chen2015spectral,jang2016top,chen2019spectral,han2020asymptotic,chen2022partial,gao2023uncertainty,liu2023lagrangian}
and its extensions~\citep{han2023unified, han2023general, fan2024covariate,fan2024uncertainty,fan2025spectral,fan2025ranking},
especially under sparse observations of the pairwise responses. Most
notably, under a uniform sampling scheme, \cite{chen2019spectral}
shows that (regularized) MLE and spectral methods are both optimal
in top-$K$ ranking and \cite{gao2023uncertainty} provides inference
results for both methods. 

Another line of research focuses on non-uniform sampling. \cite{hajek2014minimax}
and \cite{shah2016estimation} each studies the $\ell_{2}$ error
of MLE with a general sampling graph. They obtain high probability
upper bound and minimax lower bound that match under in some scenarios.
In particular, \cite{shah2016estimation} extensively discusses the
implication of their result in different graph topology. More recently,
general sampling graph has also been studied in top-$K$ ranking.
Several articles \citep{han2023unified, chen2023ranking,li2022ell_} investigate the performance of
MLE in the BTL model with a general comparison graph and later \cite{Yang2024}
improves the analysis to show the optimality of (weighted) MLE for
the BTL model in both uniform and semi-random sampling. 

\paragraph{Notation. }

For a positive integer $n$, we denote $[n]=\{1,2,\ldots,n\}$. For
any $a,b\in\mathbb{R}$, $a\wedge b$ means the minimum of $a,b$
and $a\vee b$ means the maximum of $a,b$. We use $\overset{\mathrm{a.s}}{\rightarrow}$,
$\overset{\mathrm{p}}{\rightarrow}$, and $\overset{\mathrm{d}}{\rightarrow}$
to denote convergence almost surely, in probability, and in distribution
respectively. For a symmetric matrix $\bm{A}\in\mathbb{R}^{n\times n}$,
we use $\lambda_{1}(\bm{A})\ge\lambda_{2}(\bm{A})\ge\cdots\ge\lambda_{n}(\bm{A})$
to denote its eigenvalues and $\bm{A}^{\dagger}$ to denote its Moore-Penrose
pseudo-inverse. For symmetric matrices $\bm{A},\bm{B}\in\mathbb{R}^{n\times n}$,
$\bm{A}\preceq\bm{B}$ means $\bm{B}-\bm{A}$ is positive semidefinite,
i.e., $\bm{v}^{\top}(\bm{B}-\bm{A})\bm{v}\ge0$ for any $\bm{v}\in\mathbb{R}^{n}$.
We use $\bm{e}_{i}$ to denote the standard unit vector with $1$
at $i$-th coordinate and 0 elsewhere. Unless specified otherwise,
$\log(\cdot)$ denotes the natural log.

\section{Problem setup and new estimators}

In this section, we first introduce the formal setup of the item parameter
estimation problem in the Rasch model. Then we present the news estimator
\myalg\ and \myalgM\ along with the rationale behind its development.

\subsection{Problem setup \label{subsec:problem_setup}}

The Rasch model considers pairwise comparisons between elements of
two groups: users and items. Let $n$ (resp.~$m$) be the number
of users (resp.~items). Rasch assumes a user parameter $\bm{\zeta}^{\star}\in\mathbb{R}^{n}$
and an item parameter $\bm{\theta}^{\star}\in\mathbb{R}^{m}$ that
measures the latent traits (e.g., difficulty of a problem) of users
and items, respectively. For a subset of possible user-item pairs
$\mathcal{E}_{X}\subset[n]\times[m]$, we observe binary responses
$\{X_{ti}\}_{(t,i)\in\mathcal{E}_{X}}$ obeying 
\begin{equation}
\mathbb{P}[X_{ti}=1]=\frac{e^{\theta_{i}^{\star}}}{e^{\zeta_{t}^{\star}}+e^{\theta_{i}^{\star}}}.\label{eq:sampling_prob}
\end{equation}
Here $X_{ti}=1$ means user $t$ has negative response against item
$i$ (e.g., unable to solve a problem). The goal is to estimate $\bm{\theta}^{\star}$,
the item parameters. 

To model sparse user-item responses, we assume that $\mathbb{P}[(t,i)\text{ is compared}]=p$
independently for every $(t,i)\in[n]\times[m]$. To put it in the
language of graph theory, we denote the associated bipartite comparison
graph to be $\mathcal{G}_{X}=(\mathcal{V}_{X},\mathcal{E}_{X})$,
where $\mathcal{V}_{X}$ consists of $n$ users and $m$ items. Then
essentially, we are assuming that the bipartite graph follows an Erd\H{o}s-R\'enyi
random model.\footnote{Alternatively, we can assume each user responds
to $mp$ items uniformly at random. Our estimator and performance
guarantee continue to work in this sampling scheme.}

Before moving on, we define condition numbers to characterize the
range of the latent traits. Let $\kappa_{1}$, $\kappa_{2}$, and
$\kappa$ be defined by $\log\;(\kappa_{1})\coloneqq\max_{ij}\{|\theta_{i}^{\star}-\theta_{j}^{\star}|\}$,
$\log(\kappa_{2})\coloneqq\max_{ti}\{|\zeta_{t}^{\star}-\theta_{i}^{\star}|\}$,
and $\kappa\coloneqq\max\{\kappa_{1},\kappa_{2}\}$, respectively. 

\subsection{Random pairing maximum likelihood estimator\label{subsec:algo}}

\begin{algorithm}[t]
\caption{Random Pairing Maximum Likelihood Estimator (\myalg)\label{alg:RP-MLE}}

\begin{enumerate}
\item For each tester $t$, 
\begin{enumerate}
\item Randomly split the $m_{t}$ problems taken by tester $t$ into $\lfloor m_{t}/2\rfloor$
pairs of problems.
\item For each $(i,j)\in[m]\times[m]$, do the following: 
\begin{enumerate}
\item If $(i,j)$ is selected as a pair in Step 1(a), $R_{ij}^{t}=1$. Furthermore,
if $X_{ti}\neq X_{tj}$, let $Y_{ij}^{t}=\mathds{1}\{X_{ti}<X_{tj}\}$
and $L_{ij}^{t}=1$; if $X_{ti}=X_{tj}$, let $L_{ij}^{t}=0$. 
\item If $(i,j)$ is not selected as a pair in Step 1(a), let $L_{ij}^{t}=0$
and $R_{ij}^{t}=0$. 
\end{enumerate}
\end{enumerate}
\item Let $\mathcal{E}_{Y}$ be a set of edges defined by $\mathcal{E}_{Y}\coloneqq\{(i,j):\sum_{t=1}^{n}L_{ij}^{t}\ge1\}$
and let $\mathcal{G}_{Y}=([m],\mathcal{E}_{Y})$. For each $(i,j)\in\mathcal{E}_{Y}$,
let $L_{ij}\coloneqq\sum_{t=1}^{n}L_{ij}^{t}$ and $Y_{ij}\coloneqq(1/L_{ij})\sum_{\{t:L_{ij}^{t}=1\}}Y_{ij}^{t}$.
\item Compute MLE on $Y_{ij}$, i.e.,
$
\widehat{\bm{\theta}}\coloneqq\arg\min_{\bm{\theta}:\bm{1}_{m}^{\top}\bm{\theta}=0}\mathcal{L}(\bm{\theta})$,
where 
\begin{equation}
\mathcal{L}(\bm{\theta})=\sum_{(i,j)\in\mathcal{E}_{Y},i>j}L_{ij}\left(-Y_{ji}(\theta_{i}-\theta_{j})+\log(1+e^{\theta_{i}-\theta_{j}})\right).\label{eq:MLE_loss}
\end{equation}
\item Return the top-$K$ items by selecting the top-$K$ entries of $\widehat{\bm{\theta}}$. 
\end{enumerate}
\end{algorithm}

In this section, we present our main method \myalg; see Algorithm~\ref{alg:RP-MLE}.
The algorithm can be divided into two parts. The first part---Steps
1 and 2---uses random pairing to compile the observed user-item responses
$\bm{X}\in\mathbb{R}^{n\times m}$ to item-item comparisons $\bm{Y}\in\mathbb{R}^{m\times m}$.
The second part---Steps 3 and 4---computes a standard MLE on the
item-item comparisons. Some intuition regarding the development of
\myalg\ is in order. 

\paragraph{Random pairing to construct item-item comparisons. }

The idea of pairing is that by matching the responses $X_{ti}$ with
$X_{tj}$, we form a comparison between items $i$ and $j$ to directly
extract information of item parameters $\theta_{i}^{\star}$ and $\theta_{j}^{\star}$.
More specifically, the item-item comparisons $\bm{Y}$ follow the
Bradley-Terry-Luce model \citep{bradley1952rank,Luce1959}, i.e., $\mathbb{P}[Y_{ij}^{t}=1]=e^{\theta_{j}^{\star}}/(e^{\theta_{i}^{\star}}+e^{\theta_{j}^{\star}})$. For the claims made in this part, please
see Section~\ref{subsec:Reduction} for a formal argument. 

By compiling user-item responses to item-item comparisons, we reduce
the size of the data matrix from $n\times m$ to $m\times m$, and
also the number of intrinsic parameters from $n+m$ to $m$, since
the likelihood function~(\ref{eq:MLE_loss}) of $Y_{ij}^{t}$ is
completely independent with the user parameter $\zeta_{t}^{\star}$. 

More importantly, the pairing is performed in a disjoint fashion.
This ensures that all constructed item-item comparisons $Y_{ij}^{t}$
are independent with each other; see Section~\ref{subsec:Reduction}
for a formal statement. This is the key ingredient that enables us
to improve over previous implementation of item-item comparisons,
such as pseudo-likelihood \citep{choppin1982fully,zwinderman1995pairwise}
and spectral methods \citep{nguyen2022spectral,nguyen2023optimal}. 

\paragraph{A variant via bootstrapping. }

A drawback of this random pairing is that it potentially induces a
loss of information since not every possible pairing is considered.
Once $X_{ti}$ is paired with $X_{tj}$, we cannot pair $X_{ti}$
with another response $X_{tl}$. That being said, we will later show
that the $\ell_{\infty}$ error of \myalg\ is still rate-optimal
up to logarithmic factors. Hence the loss of information can at most
incur a small constant factor in terms of estimation error. Nevertheless,
we provide a remedy to this phenomenon in \myalgM\ (Algorithm~\ref{alg:MRP-MLE})
by running (in other words, bootstrapping) the \myalg\ multiple times
with different random data splitting and averaging the resulting estimates.
\myalgM\ trivially enjoys the same non-asymptotic $\ell_{\infty}$
error rate (cf.~Theorem~\ref{thm:rasch_infty}) while improving
the estimation error in practice over \myalg. See Figure~\ref{fig:err_diff_method}
in Section~\ref{subsec:Simulation} for the empirical evidence.

\begin{algorithm}[t]
\caption{Multiple Random Pairing Maximum Likelihood Estimator (\myalgM) \label{alg:MRP-MLE}}

\begin{enumerate}
\item Let $n_{\mathrm{split}}$ be the number of runs. For $i=1,\ldots,n_{\mathrm{split}}$,
run \myalg\ (Algorithm~\ref{alg:RP-MLE}), each time with an independent
random splitting in Step 1. Let the estimated latent scores be $\widehat{\bm{\theta}}^{(i)}$.
\item Estimate the latent score with
\[
\widehat{\bm{\theta}}_{\mathrm{MRP}}=\frac{1}{n_{\mathrm{split}}}\sum_{i=1}^{n_{\mathrm{split}}}\widehat{\bm{\theta}}_{(i)}.
\]
\item Estimate the top-$K$ items by selecting the top-$K$ entries of $\widehat{\bm{\theta}}_{\mathrm{MRP}}$.
\end{enumerate}
\end{algorithm}

\section{Main results}

In this section, we collect the main theoretical guarantees for \myalg\
and its variant \myalgM. Section~\ref{subsec:infty-error} focuses
on the finite sample $\ell_{\infty}$ error bound. In Section~\ref{subsec:Non_asymp_expansion},
we present a non-asymptotic expansion that describes the distribution
of \myalg, and we apply this expansion to reach a Berry-Esseen type
theorem and a much sharper characterization of the $\ell_{2}$ error
of \myalg. Lastly in Section~\ref{subsec:Asymptotic_normality},
we prove the asymptotic normality of \myalgM\ when $m$ and $p$
are fixed and draw a connection between \myalgM~and (weighted) pseudo
MLE. 

\subsection{$\ell_{\infty}$ error bounds and top-$K$ recovery \label{subsec:infty-error}}

Without loss of generality, we assume that the scores of the items
are ordered, i.e., $\theta_{1}^{\star}\ge\theta_{2}^{\star}\geq\cdots\geq\theta_{m}^{\star}$,
and denote $\Delta_{K}\coloneqq\theta_{K}^{\star}-\theta_{K+1}^{\star}$.
In words, $\Delta_{K}$ measures the difference between the difficulty
levels of items $K$ and $K+1$. The following theorem provides $\ell_{\infty}$
error bounds and top-$K$ recovery guarantee for both \myalg\ and
\myalgM. We defer its proof to Section~\ref{subsec:Analysis_infty}. 

\begin{theorem}\label{thm:rasch_infty}Suppose that $mp\ge2$ and
$np\ge C_{1}\kappa_{1}^{4}\kappa_{2}^{5}\log^{3}(n)$ for some sufficiently
large constant $C_{1}>0$. Suppose that there exists some constant
$\alpha>0$ such that $m\le n^{\alpha}$. Let $\widehat{\bm{\theta}}$
be the \myalg\ estimator. With probability at least $1-O(n^{-10})$,
$\widehat{\bm{\theta}}$ satisfies
\[
\|\widehat{\bm{\theta}}-\bm{\theta}^{\star}\|_{\infty}\le C_{2}\kappa_{1}\kappa_{2}^{1/2}\sqrt{\frac{\log(n)}{np}}.
\]
Consequently, the estimator is able to exactly recover the top-$K$
items as soon as 
\[
np\ge\frac{C_{3}\kappa_{1}^{2}\kappa_{2}^{1}\log(n)}{\Delta_{K}^{2}}.
\]
Here $C_{2},C_{3}>0$ are some universal constants. All the claims
continue to hold for \myalgM\ as long as there exists some constant
$\beta>0$ such that $n_{\mathrm{split}}\le n^{\beta}$.

\end{theorem}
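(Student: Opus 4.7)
The plan is to exploit the two-stage structure of \myalg: first reducing the Rasch problem to a BTL problem via random pairing, and then invoking an $\ell_{\infty}$ analysis for the MLE on the induced BTL instance. I would begin by formalizing the reduction promised in Section~\ref{subsec:Reduction}: conditionally on the pairing indicators $\{R_{ij}^{t}\}$ and on the event $\{X_{ti}\neq X_{tj}\}$, the response $Y_{ij}^{t}$ is Bernoulli with success probability $e^{\theta_{j}^{\star}}/(e^{\theta_{i}^{\star}}+e^{\theta_{j}^{\star}})$, and crucially the $Y_{ij}^{t}$ across distinct pairs and across distinct $t$ are mutually independent. Independence across $t$ is immediate from the independence of different users; within a user $t$, disjointness of the random pairing ensures that the indices in different informative pairs are disjoint, so conditional independence follows from the independence of the $X_{t\cdot}$'s. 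This step is the backbone of the analysis.

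Next I would control the induced comparison graph $\mathcal{G}_{Y}$ and the weights $L_{ij}$. For a fixed user $t$ with $n_{t}\sim\mathrm{Bin}(m,p)$ items, a given pair $(i,j)$ is selected with probability $\mathds{1}\{i,j\in S_{t}\}/(n_{t}-1)$, so $\mathbb{E}[R_{ij}^{t}]\asymp p^{2}/(mp)=p/m$; further, the gap probability $\mathbb{P}[X_{ti}\neq X_{tj}]\asymp 1/\kappa_{2}$ uniformly in $t$. Summing over $t$ and applying Bernstein/Chernoff, one shows that, with high probability, every edge $(i,j)$ satisfies $L_{ij}\asymp np/(m\kappa_{2})$ provided $np\gtrsim \kappa_{2}\log n$, and the edge density of $\mathcal{G}_{Y}$ concentrates around $q\coloneqq 1-(1-\Theta(p/(m\kappa_{2})))^{n}$. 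Under the stated regime $np\gtrsim \kappa^{O(1)}\log^{3}n$, $\mathcal{G}_{Y}$ is a sufficiently dense Erd\H{o}s--R\'enyi-like graph with a well-controlled spectral gap on its Laplacian, on the order of $mq$.

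Armed with this BTL reduction, I would then carry out an $\ell_{\infty}$ analysis of the MLE defined in \eqref{eq:MLE_loss}, along the lines of the leave-one-out strategy of \cite{chen2019spectral,Yang2024}. The argument has three ingredients: (i) strong convexity of $\mathcal{L}$ along directions orthogonal to $\bm{1}_{m}$, obtained by lower-bounding $\nabla^{2}\mathcal{L}(\bm{\theta})$ by a weighted graph Laplacian whose smallest nontrivial eigenvalue is $\gtrsim mqL/\kappa_{1}^{2}$, where $L$ is the typical value of $L_{ij}$; (ii) an $\ell_{2}$ bound on $\widehat{\bm{\theta}}-\bm{\theta}^{\star}$ obtained by combining (i) with a Bernstein bound on $\|\nabla\mathcal{L}(\bm{\theta}^{\star})\|_{2}$; and (iii) an entrywise leave-one-out decoupling: for each $i$, define an auxiliary estimator $\widehat{\bm{\theta}}^{(i)}$ that drops all comparisons involving item $i$, show that $\widehat{\bm{\theta}}_{i}^{(i)}$ has small error via a one-dimensional argument using only observations at $i$, and bound $|\widehat{\bm{\theta}}_{i}-\widehat{\bm{\theta}}_{i}^{(i)}|$ by a perturbation argument relying on the $\ell_{2}$ bound from (ii). Putting these together gives $\|\widehat{\bm{\theta}}-\bm{\theta}^{\star}\|_{\infty}\lesssim \kappa_{1}\sqrt{\kappa_{2}\log n/(np)}$ on an event of probability $1-O(n^{-10})$. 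Top-$K$ recovery follows immediately: whenever the $\ell_{\infty}$ bound is smaller than $\Delta_{K}/2$, the ordering of the top-$K$ entries of $\widehat{\bm{\theta}}$ agrees with that of $\bm{\theta}^{\star}$, yielding the stated sample complexity. The extension to $\mathsf{MRP\text{-}MLE}$ is a union bound over $n_{\mathrm{split}}\le n^{\beta}$ independent runs combined with the trivial inequality $\|n_{\mathrm{split}}^{-1}\sum_{i}\widehat{\bm{\theta}}^{(i)}-\bm{\theta}^{\star}\|_{\infty}\le \max_{i}\|\widehat{\bm{\theta}}^{(i)}-\bm{\theta}^{\star}\|_{\infty}$.

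The main technical obstacle is step (iii): unlike the standard BTL setup with a fixed Erd\H{o}s--R\'enyi comparison graph and i.i.d.\ Bernoulli labels on each edge, here both the presence of the edge $(i,j)$ and the number of informative comparisons $L_{ij}$ are themselves random and coupled through the random pairings within each user. Carefully decoupling these sources of randomness, so that the leave-one-out auxiliary estimator remains independent of the edges touching vertex $i$ while retaining enough comparisons for strong convexity, is where the analysis differs most substantially from the classical BTL $\ell_{\infty}$ arguments and will require the concentration bounds for $L_{ij}$ and the Laplacian spectrum established above.
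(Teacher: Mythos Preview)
Your high-level route---reduce to BTL via random pairing, control the induced comparison graph, then run an $\ell_{\infty}$ analysis for the BTL MLE---matches the paper's strategy. The paper, however, does not redo the leave-one-out argument from scratch: it invokes the general-graph result (Theorem~3 of \cite{Yang2024}, restated here as Lemma~\ref{lemma:MLE_general}), which gives $\|\widehat{\bm{\theta}}-\bm{\theta}^{\star}\|_{\infty}\lesssim\kappa_{1}\sqrt{\log(n)/\lambda_{m-1}(\bm{L}_{L})}$ once one has verified the spectral/degree condition $[\lambda_{m-1}(\bm{L}_{L})]^{5}\gtrsim\kappa_{1}^{4}d_{\max}^{4}\log^{2}(n)$. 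The paper then feeds in the aggregate bounds $d_{\max}\lesssim np$ and $\lambda_{m-1}(\bm{L}_{L})\gtrsim np/\kappa_{2}$ (Lemmas~\ref{lemma:degree_Y} and~\ref{lemma:spectral}), which is where the $\kappa_{2}^{1/2}$ and the $\kappa_{1}^{4}\kappa_{2}^{5}\log^{3}(n)$ sample condition come from.

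There is a genuine gap in your intermediate step. You assert that ``every edge $(i,j)$ satisfies $L_{ij}\asymp np/(m\kappa_{2})$ provided $np\gtrsim\kappa_{2}\log n$.'' This is false under the theorem's hypotheses. The only constraints are $mp\ge 2$, $m\le n^{\alpha}$, and $np\gtrsim\kappa^{O(1)}\log^{3}(n)$; nothing forces $np/m$ to be large. For instance, take $m=\sqrt{n}$ and $p=2/m$: then $np=2\sqrt{n}$ is well above the threshold, yet $\mathbb{E}[L_{ij}]\asymp np/(m\kappa_{2})\asymp 1/\kappa_{2}$ is $O(1)$, so individual $L_{ij}$'s do not concentrate and many edges will have $L_{ij}=0$. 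Likewise, $\mathcal{G}_{Y}$ is not Erd\H{o}s--R\'enyi-like in this regime (edges within a user's paired items are negatively correlated), so treating it as such to read off a spectral gap of order $mq$ is not justified. The paper sidesteps this entirely by never controlling $L_{ij}$ edgewise: it bounds only the \emph{degrees} $d_{i}=\sum_{j}L_{ij}\asymp np/\kappa_{2}$ (which \emph{do} concentrate, since they are sums of $\Theta(np)$ bounded contributions across users) and applies matrix Chernoff directly to $\bm{L}_{L}=\sum_{t}\bm{L}_{L}^{t}$ to get $\lambda_{m-1}(\bm{L}_{L})\gtrsim np/\kappa_{2}$. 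If you want to carry out the leave-one-out argument yourself rather than cite \cite{Yang2024}, you must work with these aggregate quantities throughout; any step that relies on uniform control of individual $L_{ij}$ will break in the sparse regime the theorem covers.
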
Some remarks are in order.

\paragraph{Finite sample minimax optimality. }

Based on results from the ranking literature, it is reasonable to
guess that the optimal $\ell_{\infty}$ error is $O(\sqrt{m/(mnp)})=O(1/\sqrt{np})$.
Here $m$ is the number of item parameters, and $mnp$ is the number
of user-item comparisons. This guess is indeed correct, as formalized
in the following result from~\cite{nguyen2023optimal}. \begin{proposition}[Minimax
lower bound, Theorems~3.3 and 3.4 in \cite{nguyen2023optimal}]\label{prop:L_infty_lb}
Assume that $np\ge C_{1}$ for some sufficiently large constant $C_{1}>0$.
For any $n$ and $m$, there exists a class of user and item parameters
$\Theta$ such that
\[
\inf_{\widehat{\bm{\theta}}}\sup_{(\bm{\zeta}^{\star},\bm{\theta}^{\star})\in\Theta}\mathbb{E}\left\Vert \widehat{\bm{\theta}}-\bm{\theta}^{\star}\right\Vert _{2}^{2}\ge\frac{C_{2}m}{np},\qquad\text{and}\qquad\inf_{\widehat{\bm{\theta}}}\sup_{(\bm{\zeta}^{\star},\bm{\theta}^{\star})\in\Theta}\mathbb{E}\left\Vert \widehat{\bm{\theta}}-\bm{\theta}^{\star}\right\Vert _{\infty}^{2}\ge\frac{C_{2}}{np},
\]
where $C_{2}>0$ is some constant. Moreover if $np\le C_{K}\log(m)/\Delta_{K}^{2}$
for some constant $C_{K}>0$, we have 
\[
\inf_{\widehat{\bm{\theta}}}\sup_{(\bm{\zeta}^{\star},\bm{\theta}^{\star})\in\Theta}\mathbb{P}\left[\widehat{\bm{\theta}}\text{ fails to identify all top-}\ensuremath{K}\text{ items}\right]\ge\frac{1}{2}.
\]
\end{proposition}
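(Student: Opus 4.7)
The plan is to establish the three claims via standard minimax techniques: Le Cam's two-point method for the $\ell_{\infty}$ bound, Assouad's lemma for the $\ell_{2}$ bound, and Fano's method for the top-$K$ impossibility. In all three cases, I would take the nuisance user parameters to be $\zeta_{t}^{\star} \equiv 0$ and restrict $\bm{\theta}^{\star}$ to live in a small ball around $\bm{0}$, so that the per-observation Fisher information in the $\bm{\theta}$-direction is bounded above and below by positive constants. Under the Erd\H{o}s--R\'enyi sampling $(t,i)\in\mathcal{E}_X$ with probability $p$, conditional on $\mathcal{E}_X$ the $X_{ti}$ are independent Bernoullis with parameter $\sigma(\theta_{i}^{\star} - \zeta_{t}^{\star})$, and concentration gives $|\mathcal{E}_X \cap (\{t\}\times[m])| \asymp mp$ and $|\mathcal{E}_X \cap ([n]\times\{i\})| \asymp np$ with high probability.

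For the $\ell_{\infty}$ bound I would use the two hypotheses $\bm{\theta}^{(0)}=\bm{0}$ and $\bm{\theta}^{(1)}=\delta\bm{e}_{1}$ with $\delta = c/\sqrt{np}$ for a small constant $c>0$. Only entries in column $1$ of $\bm{X}$ have different distributions, and a direct computation of the Bernoulli KL divergence gives
\[
D_{\mathrm{KL}}(\mathbb{P}_{0}\,\|\,\mathbb{P}_{1}) \;\lesssim\; np\cdot\delta^{2} \;\lesssim\; 1,
\]
so Le Cam's lemma yields $\inf_{\widehat{\bm{\theta}}}\sup\mathbb{E}\|\widehat{\bm{\theta}}-\bm{\theta}^{\star}\|_{\infty}^{2}\gtrsim\delta^{2}\gtrsim 1/(np)$. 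For the $\ell_{2}$ bound, I would upgrade to the hypercube family $\bm{\theta}^{(\bm{\epsilon})}=\delta\bm{\epsilon}$ for $\bm{\epsilon}\in\{-1,+1\}^{m}$ with the same $\delta$. Flipping a single coordinate $\epsilon_{i}$ changes the law only on column $i$, again with KL divergence $\lesssim np\delta^{2}\lesssim 1$; Assouad's lemma then gives $\inf_{\widehat{\bm{\theta}}}\sup\mathbb{E}\|\widehat{\bm{\theta}}-\bm{\theta}^{\star}\|_{2}^{2}\gtrsim m\delta^{2}\gtrsim m/(np)$, modulo a harmless centering to enforce the sum-to-zero convention.

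For the top-$K$ impossibility I would invoke Fano's inequality with $m-K$ alternatives. Define the base configuration $\theta_{1}^{\star}=\cdots=\theta_{K}^{\star}=\Delta_{K}$ and $\theta_{K+1}^{\star}=\cdots=\theta_{m}^{\star}=0$, and for each $j\in\{K+1,\ldots,m\}$ let the $j$th alternative swap $\theta_{K}^{\star}$ and $\theta_{j}^{\star}$. Any successful top-$K$ identifier must determine which index is swapped, so it solves an $(m-K)$-ary hypothesis testing problem. Each pairwise KL divergence is supported on columns $K$ and $j$ and is bounded by $\lesssim np\Delta_{K}^{2}$; Fano gives failure probability at least $1/2$ whenever $np\Delta_{K}^{2}\lesssim\log(m-K)\asymp\log m$, which matches the stated regime $np\le C_{K}\log(m)/\Delta_{K}^{2}$.

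The main obstacle is making sure the restricted parameter class $\Theta$ is simultaneously rich enough to host the Assouad hypercube and the Fano swap family, while remaining within a regime where $\sigma'$ is bounded below by a constant so that the KL upper bounds $np\delta^{2}$ and $np\Delta_{K}^{2}$ are tight; this is why the construction fixes $\bm{\zeta}^{\star}=\bm{0}$ and keeps all $\theta_{i}^{\star}$ in a bounded interval, which also avoids any pathology from very small or very large success probabilities. Everything else is a routine mixture-versus-product KL computation combined with the standard Le Cam/Assouad/Fano reductions.
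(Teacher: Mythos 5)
This proposition is not proved in the paper at all: it is stated explicitly as a citation of Theorems~3.3 and 3.4 of \cite{nguyen2023optimal}, imported verbatim to serve as a benchmark against which the upper bounds of Theorem~\ref{thm:rasch_infty} are judged. There is therefore no ``paper's own proof'' to compare your attempt to; the relevant proof lives entirely in the cited reference.

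That said, your sketch is the textbook route for minimax lower bounds of this type and is in outline sound. A few points worth tightening if you were to write this out fully. First, because the item parameters are only identified up to a global shift (and the estimator in this paper is normalized by $\bm{1}_{m}^{\top}\bm{\theta}=0$), your two-point and hypercube hypotheses should be centered; you note this for the $\ell_{2}$ case, but it applies to the $\ell_{\infty}$ two-point argument as well---replacing $\delta\bm{e}_{1}$ by $\delta(\bm{e}_{1}-\tfrac{1}{m}\bm{1}_{m})$ leaves the $\ell_{\infty}$ separation essentially $\delta$, so nothing breaks. Second, for the top-$K$ Fano argument your denominator is $\log(m-K)$, not $\log m$; these match only when $K$ is bounded away from $m$, so you should either state that assumption or modify the packing (the reference handles this). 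Third, the claim that each pairwise KL is $\lesssim np\Delta_{K}^{2}$ requires $\Delta_{K}=O(1)$ so that $\sigma'$ is bounded below; since the regime of interest is $\Delta_{K}\to 0$, that is fine, but it should be said. None of these is a real gap, and I would expect the cited proof to proceed along similar Le Cam/Assouad/Fano lines, but you should verify the precise constructions against \cite{nguyen2023optimal} rather than infer them.
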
 Comparing our upper bounds with the lower bound
in the proposition, we can see that both \myalg\ and \myalgM\ are
rate-optimal in $\ell_{\infty}$ estimation error and top-$K$ recovery
sample complexity, up to logarithmic and $\kappa$ factors.

\paragraph{Sample size requirement. }

While the rates are optimal, it is worth noting that in Theorem~\ref{thm:rasch_infty}
we have made several sample size requirements. We now elaborate on
them.

First, the assumption $mp\ge2$ is a mild requirement on the expected
number of items compared by each user. This is required as we need
user $t$ to compare at least two items to form a comparison between
items. In fact, if a user only responds to one item, it is clear that
this data point is not useful at all for item parameter estimation. 

Second, it is a standard and necessary requirement to have $np\gtrsim\log(n)$
to make sure that each item is compared to at least one user with
high probability. In Theorem~\ref{thm:rasch_infty} we require an
extra $\log^{2}(n)$ factor to suppress a quadratic error term that
comes up in the analysis. This cubic log factor can possibly be loose,
but it is a minor issue and we leave it to future research. 

Lastly, $m\le n^{\alpha}$ and $n_{\mathrm{split}}\le n^{\beta}$
are both minor as we only need these to allow union bounds over $m$
and $n_{\mathrm{split}}$.

\paragraph{Comparison with \cite{nguyen2023optimal}. }

The closest result to our paper in terms of $\ell_{\infty}$ guarantee
for the Rasch model appears in the recent work by \cite{nguyen2023optimal}.
Their spectral method uses a similar construction of the item-item
comparisons but without disjoint pairing. To provide detailed comparisons,
we restate their results below.

\begin{proposition}[Informal, Theorem~3.1 in \cite{nguyen2023optimal}]Assume
that $p\gtrsim\log(m)/\sqrt{n}$ and $mp\gtrsim\log(n)$, with probability
at least $1-O(m^{-10}+n^{-10})$, spectral estimator $\widehat{\bm{\theta}}_{\mathrm{spectral}}$
satisfies
\[
\|\widehat{\bm{\theta}}_{\mathrm{spectral}}-\widehat{\bm{\theta}}\|_{\infty}\lesssim\kappa^{9}\sqrt{\frac{\log(m)}{np}}.
\]
\end{proposition}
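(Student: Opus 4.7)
The plan is to analyze the spectral estimator via the standard Markov-chain / stationary-distribution / entrywise-perturbation pipeline, adapted to the user--item structure of the Rasch model. The spectral estimator implicitly constructs, from the observed responses $\{X_{ti}\}$, a weighted directed graph on the $m$ items: for each ordered pair $(i,j)$ and each user $t$ who responded to both items, a weight proportional to $\mathds{1}\{X_{ti}=1, X_{tj}=0\}$ is added to edge $(i,j)$. After row-normalization one obtains a stochastic matrix $\widehat{P}$ on $[m]$; the estimator $\widehat{\bm{\theta}}_{\mathrm{spectral}}$ is defined (up to centering) by $\widehat{\theta}_{\mathrm{spectral},i}=\log \widehat{\pi}_i$, where $\widehat{\pi}$ is the stationary distribution of $\widehat{P}$. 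The target is to show $\|\widehat{\bm{\theta}}_{\mathrm{spectral}}-\bm{\theta}^{\star}\|_{\infty}\lesssim \kappa^{9}\sqrt{\log(m)/(np)}$ (reading $\widehat{\bm{\theta}}$ in the statement as $\bm{\theta}^{\star}$).

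The first step is to identify the population transition matrix $P^{\star}=\mathbb{E}[\widehat{P}\mid \mathcal{G}_X]$ (or an appropriately normalized version) and verify that it is reversible with respect to $\pi^{\star}\propto e^{\bm{\theta}^{\star}}$, so that $\log \pi^{\star}=\bm{\theta}^{\star}$ up to an additive constant. A direct calculation using \eqref{eq:sampling_prob} gives $P^{\star}_{ij}\propto e^{\theta_j^{\star}}/(e^{\theta_i^{\star}}+e^{\theta_j^{\star}})$ (marginalizing over the Bernoulli user response), which is exactly the BTL transition kernel. Then I would lower-bound the spectral gap $1-\lambda_2(P^{\star})$ by a power of $1/\kappa$, using the fact that in the complete-graph regime the transition kernel mixes rapidly; the usual Cheeger / conductance argument gives a gap of order $\kappa^{-O(1)}$.

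Next comes the spectral perturbation step: bound $\|\widehat{P}-P^{\star}\|$. Here the main obstacle sits. Unlike \myalg, the spectral construction re-uses each response $X_{ti}$ across all $m-1$ pairs that contain item $i$, so the entries of $\widehat{P}-P^{\star}$ are statistically dependent across the columns of a given row. The remedy is to decompose $\widehat{P}-P^{\star}=\sum_{t}Z_t$ into an independent-across-users sum (the Erd\H os--R\'enyi sampling and the responses are mutually independent across $t$), and apply matrix Bernstein to $\sum_t Z_t$. The variance statistic is controlled by $(\sum_t \mathbb{E}[Z_t Z_t^\top])$-type quantities, each of which is $O(\kappa^{O(1)}/(np))$ after normalization; this is the place where the density requirement $p\gtrsim \log(m)/\sqrt{n}$ enters, as it is needed to dominate a higher-order term coming from the random denominators in the row-normalization. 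The conclusion is $\|\widehat{P}-P^{\star}\|\lesssim \kappa^{O(1)}\sqrt{\log(m)/(np)}$.

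The final step converts this operator-norm bound into the entrywise bound on the stationary distribution via the leave-one-out machinery of Chen--Fan--Ma--Yan. For each $k\in[m]$, introduce $\widehat{P}^{(k)}$ obtained by replacing the $k$-th row of $\widehat{P}$ by the population row $P^{\star}_{k,\cdot}$; its stationary distribution $\widehat{\pi}^{(k)}$ is independent of the data entering row $k$ of $\widehat{P}$, and a two-way perturbation inequality $|\widehat{\pi}_k-\pi^{\star}_k|\le |\widehat{\pi}_k-\widehat{\pi}^{(k)}_k|+|\widehat{\pi}^{(k)}_k-\pi^{\star}_k|$ combined with the spectral-gap bound controls each term by $\kappa^{O(1)}\sqrt{\log(m)/(np)}\cdot \|\pi^{\star}\|_\infty$. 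Dividing by $\pi^{\star}_i$ (which is $e^{-O(\kappa)}$-small) and applying the mean-value form $\log \widehat{\pi}_i-\log \pi^{\star}_i=(\widehat{\pi}_i-\pi^{\star}_i)/\xi_i$ produces an extra $e^{O(\kappa)}$ factor and yields the claimed $\kappa^{9}$ dependence. The primary technical difficulty, as indicated above, is the matrix Bernstein step under the non-disjoint pairing, which is precisely what motivates the disjoint construction in \myalg\ and why the spectral analysis requires the stronger density $p\gtrsim \log(m)/\sqrt{n}$.
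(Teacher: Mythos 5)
This proposition is not proved in the paper at all: it is explicitly labeled ``[Informal, Theorem~3.1 in \cite{nguyen2023optimal}]'' and appears only as a restatement of a prior result, quoted to motivate the comparison with \myalg\ (the paper's own contribution). There is therefore no in-paper proof to compare your attempt against. Incidentally, you correctly read the displayed bound as $\|\widehat{\bm{\theta}}_{\mathrm{spectral}}-\bm{\theta}^{\star}\|_{\infty}$ rather than the $\|\widehat{\bm{\theta}}_{\mathrm{spectral}}-\widehat{\bm{\theta}}\|_{\infty}$ that appears in the source, which looks like a typo.

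That said, your sketch is a sensible reconstruction of the kind of argument the cited paper uses. The spectral-ranking pipeline you outline --- identify the population transition kernel as a reversible chain with stationary distribution $\pi^{\star}\propto e^{\bm{\theta}^{\star}}$, lower-bound the spectral gap by a $\kappa^{-O(1)}$ factor via conductance, control $\|\widehat{P}-P^{\star}\|$ by matrix Bernstein using the per-user independent decomposition, then convert to an entrywise bound via the leave-one-out construction --- is exactly the Chen--Fan--Ma--Yan / Rank-Centrality machinery that the reference adapts to the Rasch setting. You also put your finger on the two points the paper itself highlights when contrasting with \myalg: the non-disjoint re-use of each $X_{ti}$ creates row-wise dependence that forces a matrix-concentration argument rather than a clean BTL reduction, and this is precisely what drives the stronger density requirement $p\gtrsim\log(m)/\sqrt{n}$ and the much larger $\kappa^{9}$ prefactor compared to the $\kappa_{1}\kappa_{2}^{1/2}$ dependence achieved by \myalg\ in Theorem~\ref{thm:rasch_infty}. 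If you want to turn the sketch into a full proof you would need to consult \cite{nguyen2023optimal} for the exact normalization of $\widehat{P}$ and the precise bookkeeping of the $\kappa$ powers, but as a high-level plan it is consistent with the literature.
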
 This error rate is similar to ours. However, the
required sample size is much larger as they require $p\gtrsim\log(m)/\sqrt{n}$.
Our result makes a significant improvement by allowing a much smaller
sampling rate $p$, cf\@.~$mp\ge2$ and $np\gtrsim\log^{3}(n)$.
In fact, as we have argued earlier, it is nearly the sparsest possible
regime for estimating item parameters. In addition, our methods enjoy
a significantly better error rate dependency on $\kappa$. In Section~\ref{subsec:Simulation},
we provide empirical evidence for this improvement: when $\kappa$
is large, \myalg\ and \myalgM\ outperform the spectral methods in
\cite{nguyen2023optimal}.

\paragraph*{Analysis via reduction to BTL model.} As mentioned in Section~\ref{subsec:algo}, the random pairing in \myalg~reduces the problem to the Bradley-Terry-Luce model with non-uniform sampling. This reduction results in an item-item comparison graph with nice spectral properties, and allows us to invoke the general theory of
MLE in the BTL model established in the recent work by~\cite{Yang2024}. 
See Section~\ref{subsec:Analysis_infty} for the full analysis.

\subsection{Non-asymptotic expansion of \myalg \label{subsec:Non_asymp_expansion}}

An important aspect of statistical estimators is the quantification
of the variability. In this section we provide a non-asymptotic expansion,
which precisely characterizes of the distribution of the \myalg\
estimator $\widehat{\bm{\theta}}$. We supplement this result with
a Berry-Esseen theorem and an application to obtain a precise $\ell_{2}$
error characterization.

We start with some necessary notation. Let $\sigma(x)=e^{x}/(1+e^{x})$
be the sigmoid function. Let $z_{ij}\coloneqq e^{\theta_{i}^{\star}}e^{\theta_{j}^{\star}}/(e^{\theta_{i}^{\star}}+e^{\theta_{j}^{\star}})^{2}$,
$\widehat{z}_{ij}\coloneqq e^{\hat{\theta}_{i}}e^{\hat{\theta}_{j}}/(e^{\hat{\theta}_{i}}+e^{\hat{\theta}_{j}})^{2}$
and $\epsilon_{ij}^{t}\coloneqq Y_{ji}^{t}-\sigma(\theta_{i}^{\star}-\theta_{j}^{\star})$.
Let $L_{\mathrm{total}}\coloneqq\sum_{i>j:(i,j)\in\mathcal{E}_{Y}}L_{ij}$
be the total number of observed comparisons in $\mathcal{G}_{Y}$. 

We define $\bm{\bm{B}}\in\mathbb{R}^{m\times L_{\mathrm{total}}}$
and $\widehat{\bm{\epsilon}}\in\mathbb{R}^{L_{\mathrm{total}}}$ via
\[
\bm{B}\coloneqq\left[\cdots,\sqrt{z_{ij}}(\bm{e}_{i}-\bm{e}_{j}),\cdots\right]_{i>j:(i,j)\in\mathcal{E}_{Y}}\text{\ensuremath{\quad}(repeat \ensuremath{L_{ij}} times for edge \ensuremath{(i,j)})}
\]
and
\[
\widehat{\bm{\epsilon}}\coloneqq\left[\cdots,\epsilon_{ij}^{t}/\sqrt{z_{ij}},\cdots\right]_{(i,j,t):i>j,(i,j)\in\mathcal{E}_{Y},L_{ij}^{t}=1}\in\mathbb{R}^{L_{\mathrm{total}}}.
\]
Moreover, define a weighted graph Laplacian 
\begin{equation}
\bm{L}_{L\tilde{z}}\coloneqq\sum_{(i,j)\in\mathcal{E}_{Y},i>j}L_{ij}\widetilde{z}_{ij}(\bm{e}_{i}-\bm{e}_{j})(\bm{e}_{i}-\bm{e}_{j})^{\top}\label{eq:Laplacian}
\end{equation}
for $\tilde{z}$ being $z$ or $\widehat{z}$, and let $\bm{L}_{L\tilde{z}}^{\dagger}$
be its pseudo-inverse.

The following theorem characterizes the distribution of \myalg\ with
a non-asymptotic expansion. The analysis is deferred to Section~A.3
and the full proof is deferred to Section~\ref{subsec:Proof_rasch_dist}. 

\begin{theorem}\label{thm:rasch_distribution} Instate the assumptions
of Theorem~\ref{thm:rasch_infty}. With probability at least $1-O(n^{-10})$,
the estimator $\widehat{\bm{\theta}}$ given by the Algorithm~\ref{alg:RP-MLE}
can be written as
\begin{align}
\widehat{\bm{\theta}}-\bm{\theta}^{\star} & =-\left[\nabla^{2}\mathcal{L}(\bm{\theta}^{\star})\right]^{\dagger}\nabla\mathcal{L}(\bm{\theta}^{\star})+\bm{r}=-\bm{L}_{Lz}^{\dagger}\bm{B}\widehat{\bm{\epsilon}}+\bm{r},\label{eq:delta_hat_RV}
\end{align}
where $\bm{r}\in\mathbb{R}^{m}$ is a random vector obeying $\|\bm{r}\|_{\infty}\le C\kappa_{1}^{6}\log^{2}(n)/(np)$
for some constant $C>0$.

\end{theorem} This theorem shows $\widehat{\bm{\theta}}-\bm{\theta}^{\star}$
can be well approximated by $-[\nabla^{2}\mathcal{L}(\bm{\theta}^{\star})]^{\dagger}\nabla\mathcal{L}(\bm{\theta}^{\star})$,
a form that frequently appears in the analysis of maximum likelihood
estimators. Moreover, it can be written as a linear transformation
of the random vector $\widehat{\bm{\epsilon}}$, whose entries are
independent outcomes of the item-item comparisons shifted and scaled
to be mean-zero and variance-one. The term $\bm{L}_{Lz}^{\dagger}\bm{B}$
accounts for the geometry induced by the comparison graph $\mathcal{G}_{Y}$.
As the residual term $\bm{r}$ has small magnitude, we may analyze
the properties of $\widehat{\bm{\theta}}-\bm{\theta}^{\star}$ by
focusing on the leading term $-\bm{L}_{Lz}^{\dagger}\bm{B}\widehat{\bm{\epsilon}}$.

We compare our result with the inference result in~\cite{Chen2023}.
Theorem~8 therein studies the Rasch model and provides inferential
results for a joint estimator of $\bm{\theta}^{\star}$ and $\bm{\zeta}^{\star}$.
However, it requires a dense sampling scheme when $n\ge m$, with
\[
p\gtrsim\sqrt{\frac{1}{m}}\vee\frac{\log^{2}(m)}{n}\vee\frac{n\log^{2}(n)}{m^{2}}.
\]
It also requires that both $m$ and $n$ tend to infinity. These two
assumptions are significantly more restrictive than ours.

\paragraph{Normal approximation. }

The main term in (\ref{eq:delta_hat_RV}) can be approximated with
a normal random variable, allowing various applications such as hypothesis
testing on $\bm{\theta}^{\star}$. Formally, we present the following
Berry-Esseen type theorem. The proof is deferred to Section~\ref{subsec:Proof_conv_dist}.

\begin{proposition}\label{prop:conv_dist} Instate the assumptions
of Theorem~\ref{thm:rasch_infty}. Let $\bm{x}$ be a normal random
variable in $\mathbb{R}^{m}$ with variance $\bm{L}_{Lz}^{\dagger}$.
Let $\mathcal{C}_{m}$ be the set of all the measurable convex subset
of $\{\bm{\theta}\in\mathbb{R}^{m}:\bm{\theta}^{\top}\bm{1}_{m}=0\}$.
Then we have that 
\[
\sup_{A\in\mathcal{C}_{m}}\left|\mathbb{P}\left[\bm{L}_{Lz}^{\dagger}\bm{B}\widehat{\bm{\epsilon}}\in A\mid\mathcal{G}_{Y}\right]-\mathbb{P}(\bm{x}\in A)\right|\le C_{1}n^{-10}+C_{2}\frac{m^{5/4}\kappa_{1}^{3/2}\kappa_{2}^{3/2}}{(np)^{1/2}}.
\]

\end{proposition}

\paragraph{Refined $\ell_{2}$ error characterization.}

Another possible application of Theorem~\ref{thm:rasch_distribution}
is a refined characterization of the $\ell_{2}$ estimation error
$\|\widehat{\bm{\theta}}-\bm{\theta}^{\star}\|$. The $\ell_{\infty}$
error guarantee in Theorem~\ref{thm:rasch_infty} immediately implies
an $\ell_{2}$ error bound 
\begin{equation}
\|\widehat{\bm{\theta}}-\bm{\theta}^{\star}\|\le C\kappa_{1}\kappa_{2}^{1/2}\sqrt{\frac{m\log(n)}{np}},\label{eq:l2_naive}
\end{equation}
which is rate-optimal compared to the minimax lower bound in Proposition~\ref{prop:L_infty_lb}.
However, this guarantee is only correct in an order-wise sense. Here,
we present a refined characterization of $\|\widehat{\bm{\theta}}-\bm{\theta}^{\star}\|$
that is precise in the leading constant. In the following theorem,
we show that $\|\widehat{\bm{\theta}}-\bm{\theta}^{\star}\|$ concentrates
tightly around $[\mathrm{Trace}(\bm{L}_{L\tilde{z}}^{\dagger})]^{1/2}$.
We defer the complete proof to Section~\ref{subsec:Proof_L2}.

\begin{proposition}\label{prop:rasch_l2} Instate the assumptions
of Theorem~\ref{thm:rasch_infty}. Then for some constants $C_{1},C_{2}>0$,
with probability at least $1-O(n^{-10})$, we have 
\begin{align}
\left|\|\widehat{\bm{\theta}}-\bm{\theta}^{\star}\|-\sqrt{\mathrm{Trace}(\bm{L}_{Lz}^{\dagger})}\right| & \le C_{1}\kappa_{1}^{3}\kappa_{2}\sqrt{\frac{\log(n)}{np}}+\frac{C_{2}\kappa_{1}^{6}\sqrt{m}\log^{2}(n)}{np};\label{eq:rasch_l2}\\
\left|\|\widehat{\bm{\theta}}-\bm{\theta}^{\star}\|-\sqrt{\mathrm{Trace}(\bm{L}_{L\widehat{z}}^{\dagger})}\right| & \le C_{1}\kappa_{1}^{3}\kappa_{2}\sqrt{\frac{\log(n)}{np}}+\frac{C_{2}(\kappa_{1}^{6}+\kappa_{1}^{7/2}\kappa_{2}^{2})\sqrt{m}\log^{2}(n)}{np}.\label{eq:rasch_l2_hat}
\end{align}

\end{proposition}

Theorem~\ref{prop:rasch_l2} is more refined compared to (\ref{eq:l2_naive}).
First, it provides both upper and lower bounds for $\|\widehat{\bm{\theta}}-\bm{\theta}^{\star}\|$.
Second, there is no hidden constant in front of the leading term $[\mathrm{Trace}(\bm{L}_{Lz}^{\dagger})]^{1/2}$.
In addition, inspecting the proof of Theorem~\ref{prop:rasch_l2},
we see that 
\[
\sqrt{\frac{m-1}{np}}\le\sqrt{\mathrm{Trace}(\bm{L}_{Lz}^{\dagger})}\le4\kappa_{1}^{1/2}\kappa_{2}^{1/2}\sqrt{\frac{m}{np}},
\]
and the same holds for $[\mathrm{Trace}(\bm{L}_{L\widehat{z}}^{\dagger})]^{1/2}$.
Consequently, the right hand sides of both (\ref{eq:rasch_l2}) and
(\ref{eq:rasch_l2_hat}) are lower order terms compared to $[\mathrm{Trace}(\bm{L}_{Lz}^{\dagger})]^{1/2}$
when $n,m\rightarrow\infty$. Indeed this recovers the naive $\ell_{2}$
bound (\ref{eq:l2_naive}) under appropriate sample size assumptions.

\paragraph*{Analysis via projected gradient descent trajectory.}

Inspired by \cite{chen2023ranking}, we analyze MLE via the projected gradient descent trajectory. This approach gives us an alternative to the leave-one-out type argument in \cite{chen2019spectral,chen2022partial} and the leave-two-out argument in \cite{gao2023uncertainty}. In particular, unlike the leave-one-out and leave-two-out arguments, the projected gradient descent approach does not require independence in the sampling of the compared pairs. This is crucial to our analysis as the disjoint pairing in Step 1(a) of Algorithm~\ref{alg:RP-MLE} induces dependent item-item edges. 
See Section~\ref{subsec:Analysis_non_asymp_expansion} for the full analysis.

\subsection{Asymptotic normality of \myalgM\ and pseudo MLE \label{subsec:Asymptotic_normality}}

In this section, we consider the inference setting where $m$ and
$p$ are fixed and $n$ tends to infinity. We establish the asymptotic
normality of \myalgM\ and connect it to a weighted variant of pseudo
MLE, which we call \myalgW. We show that these two estimators are
asymptotically equal in distribution. 

We start by describing the setup of an infinite sequence of estimators
$\{\widehat{\bm{\theta}}_{\mathrm{MRP}}^{(n)}\}$ and relevant notations.
We will consider the user parameters to be fixed and study the asymptotic
normality of $\widehat{\bm{\theta}}_{\mathrm{MRP}}^{(n)}$ that accounts
for the randomness in sampling, random pairing, and the comparison
between paired outcomes. We use subscripts to denote the source of
randomness in each expectation. For sampling, i.e., the generation
of comparison graph $\mathcal{G}_{X}$, we use $\mathbb{E}_{\mathrm{s}}$;
for random pairing, where we match item-item-user tuple, we use $\mathbb{E}_{\mathrm{r}}$;
for forming an item-item comparison, i.e., determining whether the
responses $X_{ti}$ and $X_{tj}$ are different for a given item-item-user
tuple $(i,j,t)$, we use $\mathbb{E}_{\mathrm{d}}$; for comparison,
i.e., given $X_{ti}\neq X_{tj}$, whether $X_{ti}<X_{tj}$, we use
$\mathbb{E}_{\mathrm{c}}$. Multiple letters can be combined with
+ sign in the subscript. Let $\mathcal{L}_{k}(\bm{\theta})$ be the
loss function for \myalg\ using $k$-th random splitting and $\mathcal{L}_{k}^{(t)}(\bm{\theta})$
be the sum of the terms corresponding to user $t$, i.e., $\mathcal{L}_{k}(\bm{\theta})\coloneqq\sum_{t=1}^{n}\mathcal{L}_{k}^{(t)}(\bm{\theta})$
and
\[
\mathcal{L}_{k}^{(t)}(\bm{\theta})\coloneqq-\sum_{\substack{(i,j):i>j:\\
(i,j,t)\in\Omega_{k}
}
}\left[\log\left(\frac{e^{\theta_{i}}}{e^{\theta_{i}}+e^{\theta_{j}}}\right)\mathds{1}\{X_{ti}>X_{tj}\}+\log\left(\frac{e^{\theta_{j}}}{e^{\theta_{i}}+e^{\theta_{j}}}\right)\mathds{1}\{X_{ti}<X_{tj}\}\right].
\]
Here $\Omega_{k}$ denotes the set of paired item-item-user tuples
for the $k$-th splitting. 

We assume there is an infinite sequence of users, which has an infinite
sequence of user parameters $\{\zeta_{n}^{\star}\}_{n\in\mathbb{N}^{+}}$,
random sampling $\{A_{it}:i\in[m]\}_{t=1}^{\infty}$, and responses
$\{X_{it}:A_{it}=1\}_{t=1}^{\infty}$. Moreover we assume there are
random splittings $\Omega_{k}$ for $k=1,\ldots,n_{\mathrm{split}}$.
We label the estimators as $\widehat{\bm{\theta}}_{(k)}^{(n)}$ to
denote the \myalgM\ with $k$-th splitting and the random sampling,
responses, and random splittings associated with the first $n$ users.
Similarly we use $\widehat{\bm{\theta}}_{\mathrm{WP}}^{(n)}$ to denote
\myalgW\ with first $n$ users. Moreover, we suppose that the following
limits of average expectation and covariance matrices exist:
\begin{subequations}\label{eq:limits_fix_m}
\begin{align}
\bm{H}^{\infty} & \coloneqq\lim_{n\rightarrow\infty}\frac{1}{n}\sum_{t=1}^{n}\mathbb{E}_{\mathrm{s+r+d+c}}\nabla^{2}\mathcal{L}_{1}^{(t)}(\bm{\theta}^{\star});\label{eq:H_infty}\\
\bm{V}_{\mathrm{same}}^{\infty} & \coloneqq\lim_{n\rightarrow\infty}\frac{1}{n}\sum_{t=1}^{n}\mathbb{E}_{\mathrm{s+r+d+c}}\nabla\mathcal{L}_{1}^{(t)}(\bm{\theta}^{\star})\nabla\mathcal{L}_{1}^{(t)}(\bm{\theta}^{\star})^{\top};\label{eq:V_same_infty}\\
\bm{V}_{\mathrm{diff}}^{\infty} & \coloneqq\lim_{n\rightarrow\infty}\frac{1}{n}\sum_{t=1}^{n}\mathbb{E}_{\mathrm{s+r+d+c}}\nabla\mathcal{L}_{1}^{(t)}(\bm{\theta}^{\star})\nabla\mathcal{L}_{2}^{(t)}(\bm{\theta}^{\star})^{\top}\label{eq:V_diff_infty}
\end{align}
\end{subequations}
Note that by the Cauchy-Schwarz inequality (see
Section~\ref{subsec:Proof_Vsame_Vdiff} for a complete proof), we
have 
\begin{equation}
\bm{V}_{\mathrm{diff}}^{\infty}\preceq\bm{V}_{\mathrm{same}}^{\infty}.\label{eq:Vsame_Vdiff}
\end{equation}
The two sides are the same when $m_{t}=2$ for all $t$.

With this setup of a infinite sequence of estimators, we may study
the asymptotic normality of \myalgM\ as $n$ tends to infinity, accounting
for all randomness in $\mathbb{E}_{\mathrm{s+r+d+c}}$. We have the
following result on \myalgM\ $\widehat{\bm{\theta}}_{\mathrm{MRP}}^{(n)}=(1/n_{\mathrm{split}})\sum_{i=1}^{n_{\mathrm{split}}}\widehat{\bm{\theta}}_{(i)}^{(n)}$.
The analysis is deferred to Section~\ref{subsec:Analysis_asymp_normal}
and the full proof is deferred to Section~\ref{subsec:Proof_asymp_MRPMLE}.

\begin{theorem}\label{thm:asymp_normality}Instate the assumptions
of Theorem~\ref{thm:rasch_infty}. Consider \myalgM\ with $n_{\mathrm{split}}$
random splits with fixed $m$ and $p$. Suppose that the limits in
(\ref{eq:limits_fix_m}) exist. Then as $n\rightarrow\infty$, 
\begin{equation}
\sqrt{n}\left(\widehat{\bm{\theta}}_{\mathrm{MRP}}^{(n)}-\bm{\theta}^{\star}\right)\overset{\mathrm{d}}{\rightarrow}\mathcal{N}\left(\bm{0},\left(\bm{H}^{\infty}\right)^{\dagger}\left[\frac{1}{n_{\mathrm{split}}}\bm{V}_{\mathrm{same}}^{\infty}+\frac{n_{\mathrm{split}}-1}{n_{\mathrm{split}}}\bm{V}_{\mathrm{diff}}^{\infty}\right]\left(\bm{H}^{\infty}\right)^{\dagger}\right).\label{eq:MRPMLE_asymp_normality}
\end{equation}

\end{theorem}Theorem~\ref{thm:asymp_normality} provides an asymptotic
result for inference of \myalgM\  when $m$ and $p$ are fixed and
$n\rightarrow\infty$. In particular, it reveals the decrease in asymptotic
covariance of $\widehat{\bm{\theta}}_{\mathrm{MRP}}^{(n)}$ from $(\bm{H}^{\infty})^{\dagger}\bm{V}_{\mathrm{same}}^{\infty}(\bm{H}^{\infty})^{\dagger}$
to $(\bm{H}^{\infty})^{\dagger}\bm{V}_{\mathrm{same}}^{\infty}(\bm{H}^{\infty})^{\dagger}$
as $n_{\mathrm{split}}$ goes from $1$ to $\infty$. In what follows,
we first connect this result with the asymptotic normality of \myalgW,
which takes all possible item-item pairs with overlap instead of doing
random pairing. Then we quantify the asymptotic variance of \myalgM\
in a special instance to see how much using multiple random splitting
helps.

\paragraph{Asymptotic normality of weighted pseudo MLE.}

Pseudo MLE \citep{zwinderman1995pairwise} is a method for item parameter
estimation similar to our approach \myalg\ and \myalgM. Instead
of random pairing, pseudo MLE forms all possible item-item pairs with
overlaps. For the non-asymptotic analysis, the lack of independence
between comparisons induced by the overlaps is clumsy. However, here
we show that a variant of pseudo MLE is asymptotically normal and
relates to our method \myalgM.

Now we formally introduce the weighted pseudo MLE, denoted as \myalgW.
Let $m_{t}$ be the total number of responses of user $t$ and $\widetilde{m}_{t}$
be the largest even number smaller or equal to $m_{t}$. Consider
the negative log-likelihood functions 
\[
\mathcal{L}_{\mathrm{WP}}^{(t)}(\bm{\theta})\coloneqq-\sum_{\substack{(i,j):i>j:\\
(t,i),(t,j)\in\mathcal{G}_{X}
}
}\frac{\widetilde{m}_{t}}{m_{t}(m_{t}-1)}\left[\log\left(\frac{e^{\theta_{i}}}{e^{\theta_{i}}+e^{\theta_{j}}}\right)\mathds{1}\{X_{ti}>X_{tj}\}+\log\left(\frac{e^{\theta_{j}}}{e^{\theta_{i}}+e^{\theta_{j}}}\right)\mathds{1}\{X_{ti}<X_{tj}\}\right]
\]
and
\[
\mathcal{L}_{\mathrm{WP}}(\bm{\theta})\coloneqq\sum_{t=1}^{n}\mathcal{L}_{\mathrm{WP}}^{(t)}(\bm{\theta}).
\]
Then \myalgW\ is defined as 
\begin{equation}
\widehat{\bm{\theta}}_{\mathrm{WP}}=\arg\min_{\bm{\theta}\in\mathbb{R}^{m},\bm{\theta}^{\top}\bm{1}_{m}=0}\mathcal{L}_{\mathrm{WP}}(\bm{\theta}).\label{eq:WPMLE_def}
\end{equation}
Similar to $\widehat{\bm{\theta}}_{\mathrm{MRP}}^{(n)}$, we also
use the notation $\widehat{\bm{\theta}}_{\mathrm{WP}}^{(n)}$ when
appropriate. The intuition behind this reweighting is to account for
the different numbers of responses for each user. It can be easily
shown that $\mathbb{E}_{\mathrm{r}}\mathcal{L}_{k}(\bm{\theta})=\mathcal{L}_{\mathrm{WP}}(\bm{\theta})$.
Similar to the result for \myalgM, we have the following asymptotic
normality result for \myalgW. The proof is deferred to Section~\ref{subsec:Proof_asymp_WPMLE}.

\begin{theorem}\label{thm:asymp_normality_WPMLE} Instate the assumptions
of Theorem~\ref{thm:rasch_infty}. Consider \myalgW\ with fixed
$m$ and $p$. Suppose that the limits in (\ref{eq:limits_fix_m})
exist. In addition, assume that for every fixed $\bm{\theta}$ obeying
$\|\bm{\theta}-\bm{\theta}^{\star}\|_{\infty}\le10$, the following
limit exists: 
\[
\overline{\mathcal{L}}_{\mathrm{WP}}(\bm{\theta})=\lim_{n\rightarrow\infty}\frac{1}{n}\sum_{t=1}^{n}\mathbb{E}_{\mathrm{s+d+c}}\mathcal{L}_{\mathrm{WP}}^{(t)}(\bm{\theta}).
\]
Then as $n\rightarrow\infty$, 
\begin{equation}
\sqrt{n}\left(\widehat{\bm{\theta}}_{\mathrm{WP}}^{(n)}-\bm{\theta}^{\star}\right)\overset{\mathrm{d}}{\rightarrow}\mathcal{N}\left(\bm{0},\left(\bm{H}^{\infty}\right)^{\dagger}\bm{V}_{\mathrm{diff}}^{\infty}\left(\bm{H}^{\infty}\right)^{\dagger}\right).\label{eq:WPMLE_asymp_normality}
\end{equation}

\end{theorem} This theorem establishes the asymptotic normality of
$\widehat{\bm{\theta}}_{\mathrm{WP}}^{(n)}$. We observe that the
asymptotic covariance of $\widehat{\bm{\theta}}_{\mathrm{WP}}^{(n)}$
is equal to the asymptotic covariance $\widehat{\bm{\theta}}_{\mathrm{MRP}}^{(n)}$
as $n_{\mathrm{split}}$ tends to infinity. This connects \myalgW\
and \myalgM, showing that they are asymptotically equivalent in distribution
when $n\rightarrow\infty$ and $n_{\mathrm{split}}\rightarrow\infty$.

\paragraph{Quantifying the shift of asymptotic covariance in \myalgM. }

We have shown in (\ref{eq:MRPMLE_asymp_normality}) that the asymptotic
covariance of \myalgM\ goes from $(\bm{H}^{\infty})^{\dagger}\bm{V}_{\mathrm{same}}^{\infty}(\bm{H}^{\infty})^{\dagger}$
when $n_{\mathrm{split}}=1$ to $(\bm{H}^{\infty})^{\dagger}\bm{V}_{\mathrm{diff}}^{\infty}(\bm{H}^{\infty})^{\dagger}$
when $n_{\mathrm{split}}\rightarrow\infty$. We have also established
a qualified comparison in (\ref{eq:Vsame_Vdiff}) that shows 
\[
(\bm{H}^{\infty})^{\dagger}\bm{V}_{\mathrm{diff}}^{\infty}(\bm{H}^{\infty})^{\dagger}\preceq(\bm{H}^{\infty})^{\dagger}\bm{V}_{\mathrm{same}}^{\infty}(\bm{H}^{\infty})^{\dagger}.
\]
However, $\bm{V}_{\mathrm{same}}^{\infty}$ and $\bm{V}_{\mathrm{diff}}^{\infty}$
are not explicit. Here we make a quantified illustration in a special
case to better understand this shift in asymptotic covariance. We
make a few simplifications to make it straightforward. First, we suppose
that the user parameters $\zeta_{t}^{\star}$ are independently drawn
from a distribution $\pi$, and we denote the expectation with respect
to the random user parameter with $\mathbb{E}_{\mathrm{u}}$. Second,
we set the item parameter to be $\bm{\theta}^{\star}=\bm{0}_{m}$.
Third, we assume the sampling model where each user response to $mp$
items uniformly at random, for some even integer $mp$. We also need
a constant $\beta$, which is a scalar defined by 
\begin{equation}
\beta\coloneqq\mathbb{E}_{\mathrm{u}}\left[\frac{e^{\zeta_{1}^{\star}}}{(e^{\zeta_{1}^{\star}}+1)^{2}}\right].\label{eq:beta_def}
\end{equation}
Note that $\zeta_{1}^{\star}$ in this definition can be replaced
by $\zeta_{t}^{\star}$ for any $t$. 

In this special setting, we have the following result that quantifies
the shift of asymptotic covariance for different $n_{\mathrm{split}}$.
This proposition is special case for Theorem~\ref{thm:asymp_normality}.
The proof is deferred to Section~\ref{subsec:proof_special_case}.

\begin{proposition}\label{prop:quant_asymp_normality}Instate the
assumptions of Theorem~\ref{thm:rasch_infty}. For fixed $m$ and
$p$, as $n\rightarrow\infty$,
\begin{align}
\sqrt{n}\left(\widehat{\bm{\theta}}_{\mathrm{MRP}}^{(n)}-\bm{\theta}^{\star}\right) & \overset{\mathrm{d}}{\rightarrow}\mathcal{N}\left(\bm{0},\frac{8(m-1)}{\beta mp}\left(\frac{1}{n_{\mathrm{split}}}+\frac{n_{\mathrm{split}}-1}{n_{\mathrm{split}}}\cdot\frac{mp}{2(mp-1)}\right)\left[\bm{I}_{m}-\frac{1}{m}\bm{1}_{m}\bm{1}_{m}^{\top}\right]\right);\label{eq:asymp_covariance_special}\\
\sqrt{n}\left(\widehat{\bm{\theta}}_{\mathrm{WP}}^{(n)}-\bm{\theta}^{\star}\right) & \overset{\mathrm{d}}{\rightarrow}\mathcal{N}\left(\bm{0},\frac{8(m-1)}{\beta mp}\cdot\frac{mp}{2(mp-1)}\left[\bm{I}_{m}-\frac{1}{m}\bm{1}_{m}\bm{1}_{m}^{\top}\right]\right).\nonumber 
\end{align}
  \end{proposition}This proposition shows that in this special instance,
the norm of the asymptotic covariance roughly scales as 
\[
\frac{1}{n_{\mathrm{split}}}+\frac{n_{\mathrm{split}}-1}{n_{\mathrm{split}}}\cdot\frac{mp}{2(mp-1)}.
\]
This equals $1$ when $n_{\mathrm{split}}=1$ and goes to $mp/(2mp-2)$
when $n_{\mathrm{split}}\rightarrow\infty$. The use of multiple random
splitting in \myalgM\ or \myalgW\ can shrink the asymptotic covariance
of the estimators by a factor of $mp/(2mp-2)$.

\section{Experiment}

In this section, we demonstrate the empirical performance of \myalg\
and \myalgM\ using both simulated and real data. 

\subsection{Simulations\label{subsec:Simulation}}

We use simulated data to validate our theoretical results and compare
our estimators with existing ones for the Rasch model. The data generating
process follows the model specified in Section~\ref{subsec:problem_setup}.
Unless specified otherwise, in each trial, the user and item parameters
are randomly drawn from
\[
\widetilde{\bm{\zeta}}^{\star}\sim\mathcal{N}(0,\bm{I}_{n}),\qquad\text{and}\qquad\widetilde{\bm{\theta}}^{\star}\sim\mathcal{N}(0,\bm{I}_{m}).
\]
Afterwards, $\bm{\zeta}^{\star}$ and $\bm{\theta}^{\star}$ is computed
by shifting $\widetilde{\bm{\zeta}}^{\star}$ and $\widetilde{\bm{\theta}}^{\star}$
to zero mean.

\subsubsection{$\ell_{\infty}$ estimation error}

We investigate the $\ell_{\infty}$ estimation error with the following
goals:
\begin{enumerate}
\item We validate the theoretical result in $\ell_{\infty}$ estimation
error of \myalg\ in Theorem~\ref{thm:rasch_infty}. 
\item We show how much advantage the \myalgM\ brings through multiple runs
of data splitting. 
\item We compare our methods with existing comparison-based algorithms,
including the case where $\kappa_{1},\kappa_{2}$ are large.
\end{enumerate}

\paragraph{Validating the theoretical result.}

Theorem~\ref{thm:rasch_infty} tells us that the $\ell_{\infty}$
error scales as $1/\sqrt{np}$. Figure~\ref{fig:err_vs_np} shows
that $\|\widehat{\bm{\theta}}-\bm{\theta}^{\star}\|_{\infty}$ exhibits
a near-linear relationship with respect to both $1/\sqrt{n}$ and
$1/\sqrt{p}$, which is consistent with our theoretical predictions.

\begin{figure}
\begin{centering}
\subfloat[$\|\widehat{\bm{\theta}}-\bm{\theta}^{\star}\|_{\infty}$ v.s. $1/\sqrt{n}$.
The parameter is chosen to be $m=50,p=0.1$ and $n$ varies from $10000$
to $40000$.]{\includegraphics[width=0.45\textwidth]{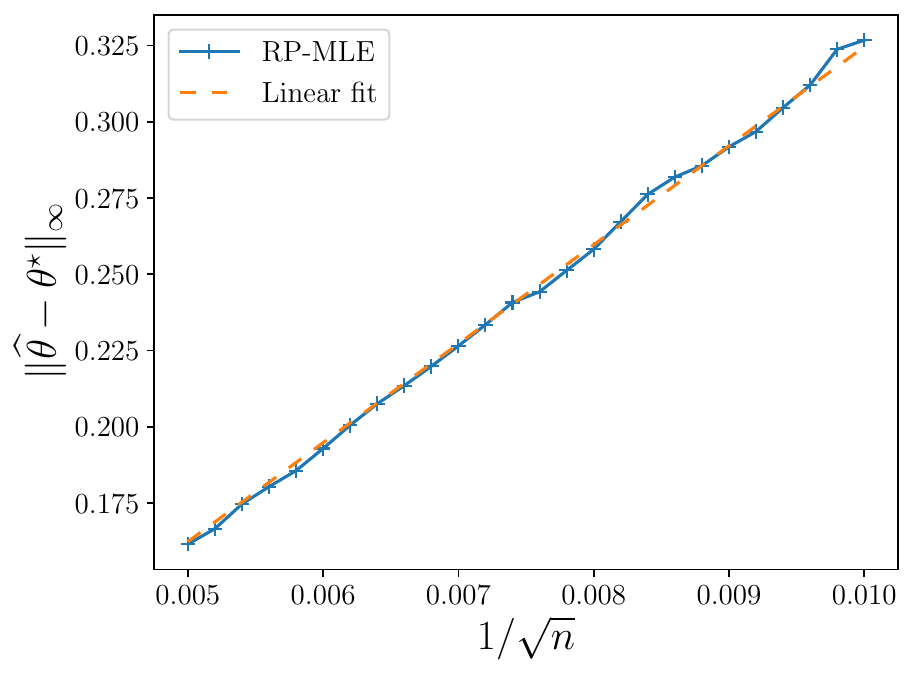}}\qquad{}\subfloat[$\|\widehat{\bm{\theta}}-\bm{\theta}^{\star}\|_{\infty}$ v.s. $1/\sqrt{p}$.
The parameter is chosen to be $m=50,n=10000$ and $p$ varies from
$1/9$ to $1$. ]{\includegraphics[width=0.45\textwidth]{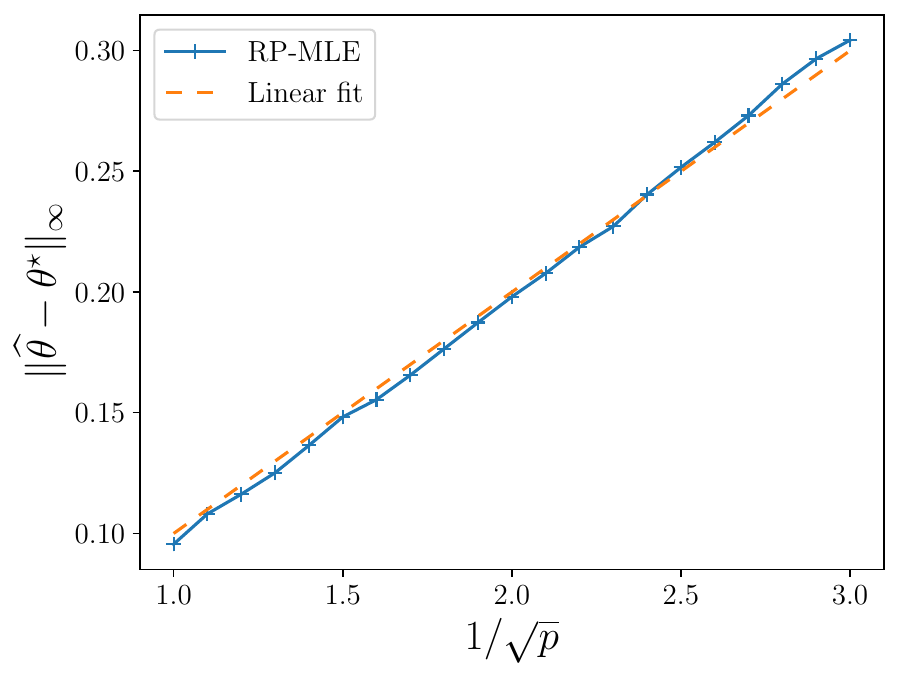}}
\par\end{centering}
\centering{}\caption{\label{fig:err_vs_np}Estimation error $\|\widehat{\bm{\theta}}-\bm{\theta}^{\star}\|_{\infty}$
of \myalg\ with varying $n$ and $p$. Each point represents the
average of 1000 trials.}
\end{figure}

\paragraph{Multiple runs in \myalgM.}

As we have discussed after Theorem~\ref{thm:rasch_infty}, the random
data splitting could incur a small loss of information. We have introduced
a remedy \myalgM\ (Algorithm~\ref{alg:MRP-MLE}) to address this
by averaging over multiple runs with independent data splitting. Moreover,
in Proposition~\ref{prop:quant_asymp_normality} we have a quantitative
characterization of the improvement in $\ell_{2}$ error achieved
through multiple data splittings. 

Figure~\ref{fig:multirun_infty} shows that by averaging over more
runs of data splittings, \myalgM\ achieves an improved $\ell_{\infty}$
estimation error that improves over PMLE and is close to \myalgW.
In addition, we observe in Figure~\ref{fig:multirun_l2} that the
improvement in squared $\ell_{2}$ error scales linearly with $1/n_{\mathrm{split}}$,
consistent with the theoretical findings in Proposition~\ref{prop:quant_asymp_normality}.

\begin{figure}
\begin{centering}
\subfloat[\label{fig:multirun_infty}The $\ell_{\infty}$ estimation error of
\myalgM\ v.s. $n_{\mathrm{split}}$. The dash-dotted and dashed lines
are the performance of PMLE and \myalgW, respectively. ]{\includegraphics[width=0.45\textwidth]{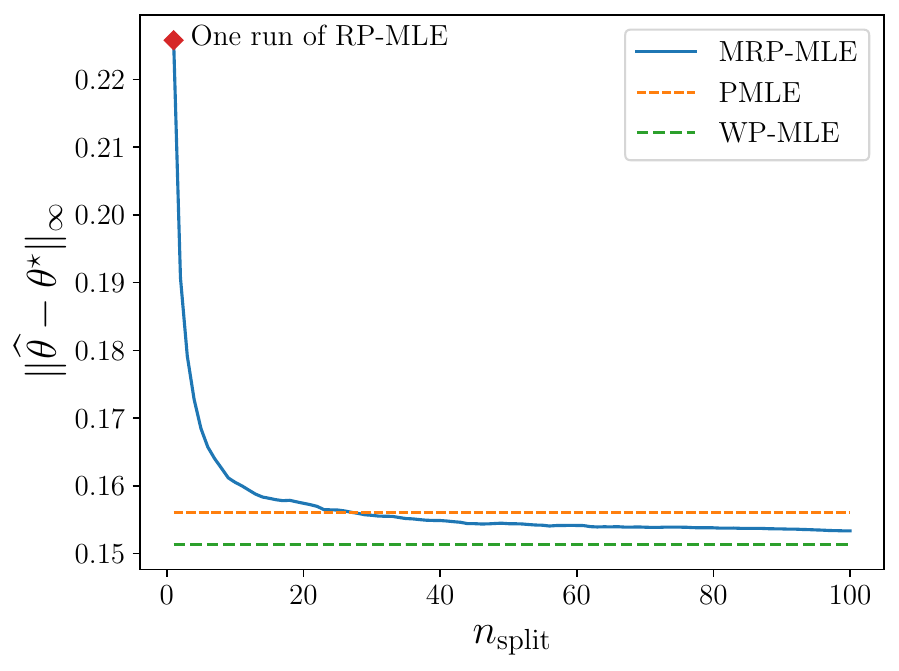}}\qquad{}\subfloat[\label{fig:multirun_l2} The squared error $\|\widehat{\bm{\theta}}-\bm{\theta}^{\star}\|^{2}$
of \myalgM\ v.s. $1/n_{\mathrm{split}}$ with varying $n_{\mathrm{split}}$.
The dash-dotted and dashed lines are the performance of PMLE and \myalgW,
respectively. ]{\includegraphics[width=0.45\textwidth]{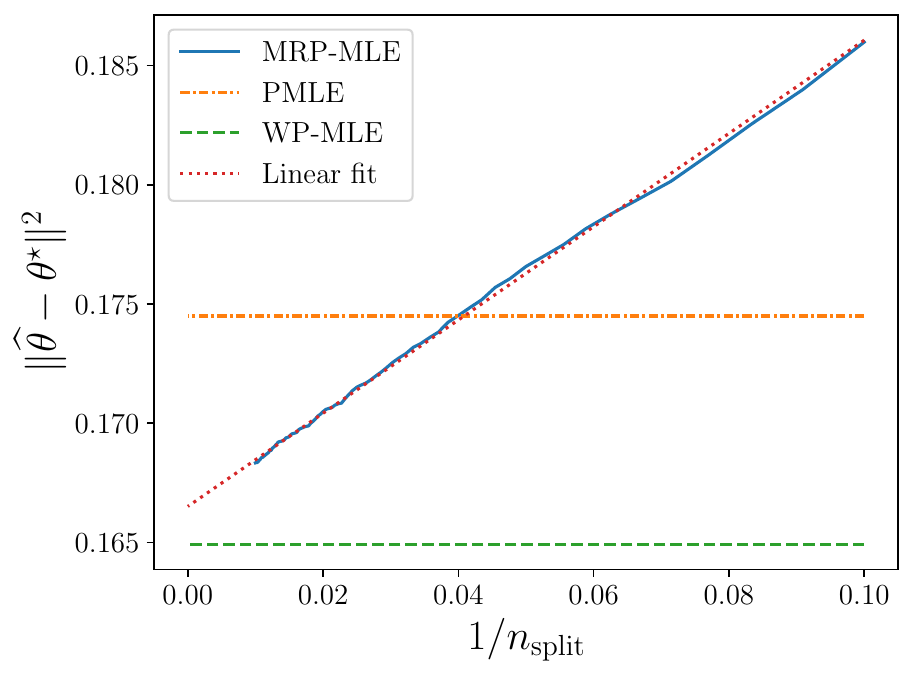}}
\par\end{centering}
\centering{}\caption{\label{fig:multirun}Estimation error of \myalgM\ with varying number
of data splittings. For each trial, we record $\|\frac{1}{k}\sum_{i=1}^{k}\widehat{\bm{\theta}}_{(i)}-\bm{\theta}^{\star}\|$
for $k=1,\ldots,100$. The parameters are chosen to be $m=50,p=0.2,n=10000$.
The latent scores are all 0 and the each user is assigned with $mp$
item uniformly-at-random. Each point is averaged over 1000 trials.}
\end{figure}

\begin{figure}
\centering{}
\end{figure}

\paragraph{Comparison with existing estimators.}

We compare our algorithms with two other comparison-based algorithms:
the pseudo MLE (PMLE) and the spectral method from \cite{nguyen2023optimal}.
In Figure~\ref{fig:err_diff_method}, We can see that the performance
of \myalgM\ is comparable to PMLE and slightly outperforms the spectral
method. Our proposed algorithm not only offers stronger theoretical
guarantees but also demonstrates competitive practical performance.

\paragraph{Performance with large $\kappa_{1},\kappa_{2}$.}

While we assume $\kappa=\max\{\kappa_{1},\kappa_{2}\}=O(1)$ is most
of this article, scenarios with large $\kappa$ can be practically
relevant. To evaluate the performance in such cases, we compare the
$\ell_{\infty}$ error of different methods under different condition
numbers. For a fixed $\kappa$, we draw the user and item parameters
as
\[
\widetilde{\bm{\zeta}}^{\star}\sim\mathrm{Unif}(0,\log(\kappa))\qquad\text{and}\qquad\widetilde{\bm{\theta}}^{\star}\sim\mathrm{Unif}(0,\log(\kappa))
\]
and compute $\bm{\zeta}^{\star}$ and $\bm{\theta}^{\star}$ by shifting
$\widetilde{\bm{\zeta}}^{\star}$ and $\widetilde{\bm{\theta}}^{\star}$
to have zero mean. Figure~\ref{fig:err_kappa} illustrates the performance
of different estimators as $\kappa$ varies. The MLE-based approaches
including \myalg\ and \myalgM\ achieve better $\ell_{\infty}$ error
than the spectral method when $\kappa$ is large.

\begin{figure}
\begin{minipage}[t]{0.45\textwidth}%
\begin{center}
\includegraphics[width=1\textwidth]{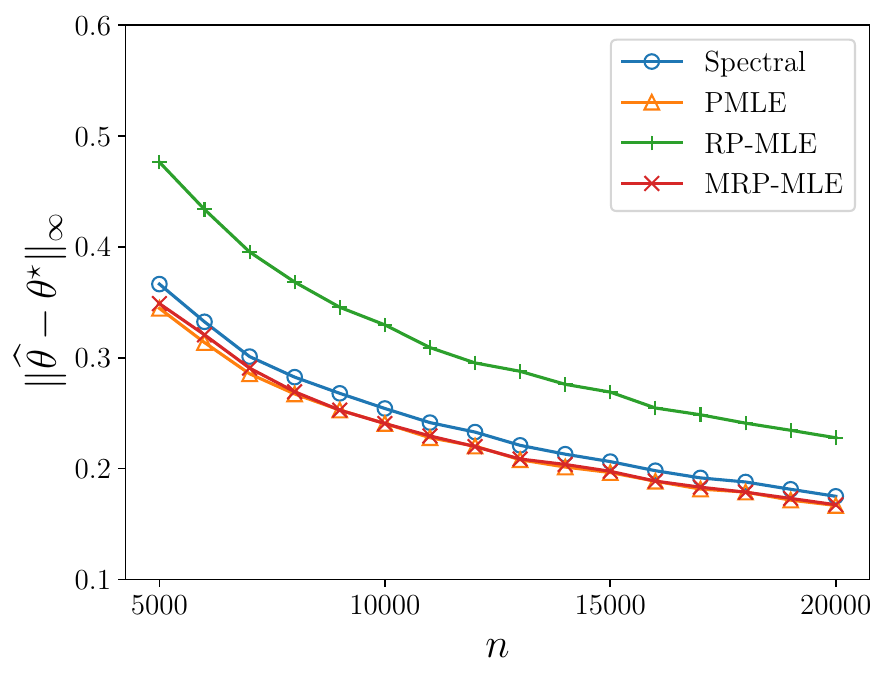}
\par\end{center}
\caption{\label{fig:err_diff_method} $\|\widehat{\bm{\theta}}-\bm{\theta}^{\star}\|_{\infty}$
v.s. $n$ using Spectral method, PMLE, \myalg, and \myalgM\ using
20 data splittings. The parameter is chosen to be $m=50,p=0.1$ and
$n$ varies from $5000$ to $20000$. The result is averaged over
1000 trials.}
\end{minipage}\hfill{}%
\begin{minipage}[t]{0.45\textwidth}%
\begin{center}
\includegraphics[width=1\textwidth]{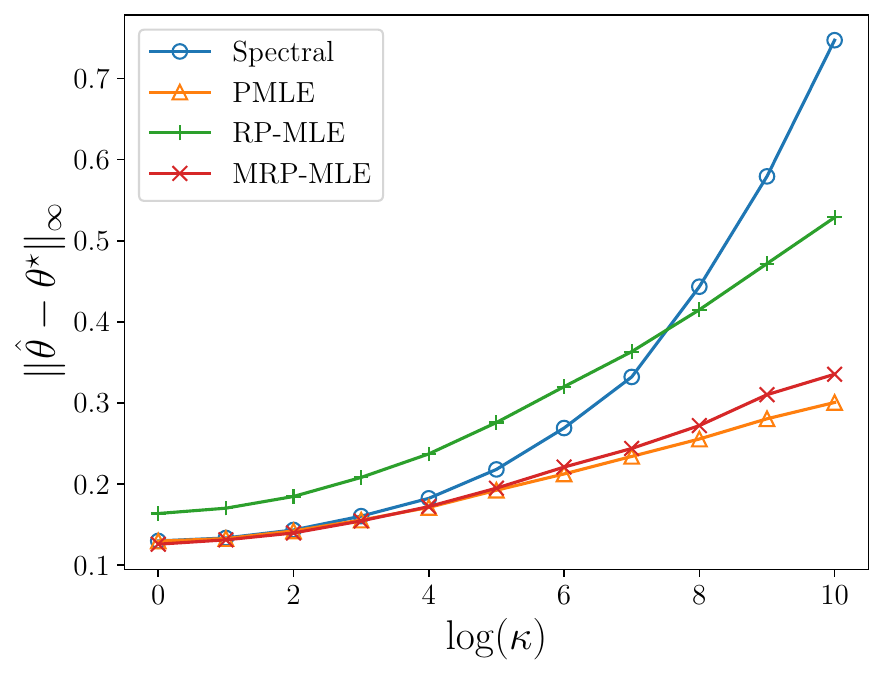}
\par\end{center}
\caption{\label{fig:err_kappa} $\|\widehat{\bm{\theta}}-\bm{\theta}^{\star}\|_{\infty}$
v.s. $\log(\kappa)$ using Spectral method, PMLE, \myalg, and \myalgM\
using 20 data splittings. The parameter is chosen to be $m=50,p=0.1,n=20000$
and $\kappa$ varies from $1$ to $e^{10}$. The result is averaged
over 1000 trials.}
\end{minipage}
\end{figure}

\subsubsection{Top-$K$ recovery}

We investigate the performance of different algorithms in top-$K$
recovery. Set $\theta_{i}^{\star}=(1-K/m)\Delta_{K}$ for $i\le K$
and $\theta_{i}^{\star}=(-K/m)\Delta_{K}$ otherwise. For any estimator
$\widehat{\bm{\theta}}$, we define top-$K$ recovery rate to be 
\[
\frac{1}{K}\left|\left\{ i\le K:i\in A_{K}\right\} \right|,
\]
where $A_{K}$ is an arbitrary $K$-element set such that $\widehat{\theta}_{i}\ge\widehat{\theta}_{j}$
for any $i\in A_{K},j\notin A_{K}$. We compare the top-$K$ recovery
rate of PMLE and the spectral method in \cite{nguyen2023optimal}
with \myalg\ and \myalgM\ in Figure~\ref{fig:TopK}. The recovery
rate of PMLE, spectral method and \myalgM\ is similar, indicating
again that our algorithm performs well in practice.

\begin{figure}
\centering{}%
\begin{minipage}[t]{0.45\columnwidth}%
\begin{center}
\includegraphics[width=1\textwidth]{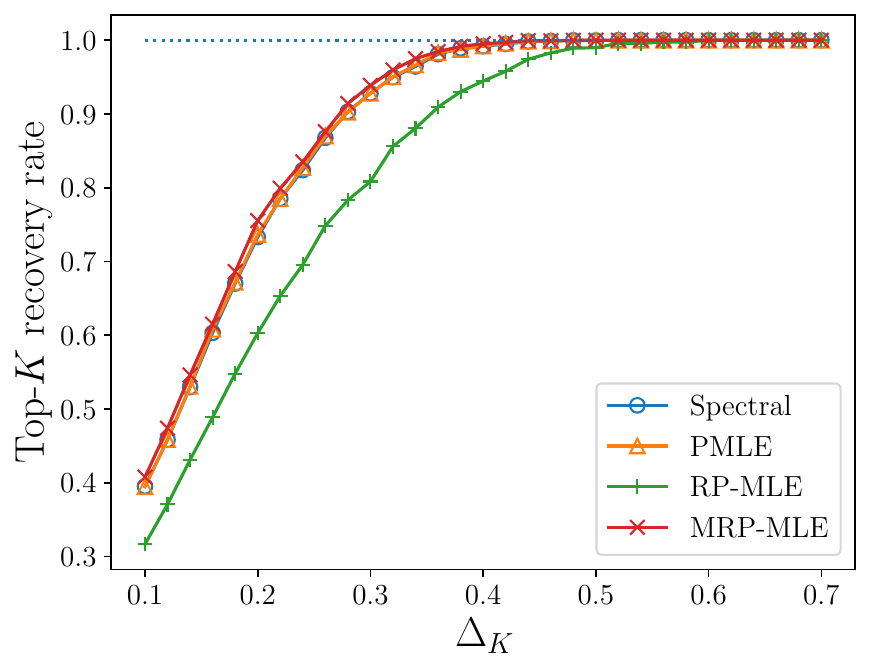}
\par\end{center}
\caption{\label{fig:TopK}Top-$K$ recovery rate using spectral method, PMLE,
\myalg\, and \myalgM\ using 20 data splittings. The parameter is
chosen to be $m=10000,m=50,p=0.1,K=5$ and $\Delta_{K}$ varies from
0.1 to 0.7. The result is averaged over 1000 trials.}
\end{minipage}\hfill{}%
\begin{minipage}[t]{0.45\columnwidth}%
\begin{center}
\includegraphics[width=1\textwidth]{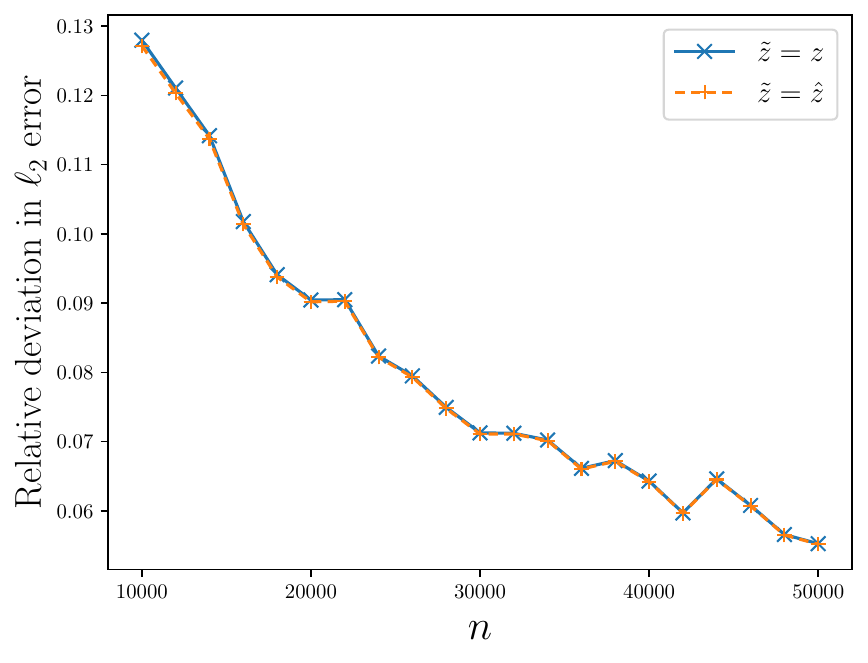}
\par\end{center}
\caption{\label{fig:refined_l2} The relative deviation of $\|\widehat{\bm{\theta}}-\bm{\theta}^{\star}\|$
from $\sqrt{\mathrm{Trace}(\bm{L}_{L\tilde{z}}^{\dagger})}$ v.s.
$n$ for both $\widetilde{z}=\widehat{z}$ and $\widetilde{z}=z$.
The parameter is chosen to be $p=0.1$, $n$ varies from $10000$
to $50000$ and $m=n/500$. The result is averaged over 1000 trials.}
\end{minipage}
\end{figure}

\subsubsection{Refined $\ell_{2}$ estimation error}

In Theorem~\ref{prop:rasch_l2} we have shown that the $\ell_{2}$
error concentrate around $\sqrt{\mathrm{Trace}(\bm{L}_{L\tilde{z}}^{\dagger})}$
for $\widetilde{z}\in\left\{ \widehat{z},z\right\} $. In each trial
we compute the following quantity
\[
\frac{\left|\|\widehat{\bm{\theta}}-\bm{\theta}^{\star}\|-\sqrt{\mathrm{Trace}(\bm{L}_{L\tilde{z}}^{\dagger})}\right|}{\sqrt{\mathrm{Trace}(\bm{L}_{L\tilde{z}}^{\dagger})}}
\]
for both $\widetilde{z}=\widehat{z}$ and $\widetilde{z}=z$. This
measures the relative deviation of $\|\widehat{\bm{\theta}}-\bm{\theta}^{\star}\|$
from $\sqrt{\mathrm{Trace}(\bm{L}_{L\tilde{z}}^{\dagger})}$. In Figure~\ref{fig:refined_l2}
we consider the regime where $p$ and $n/m$ is fixed. In this case,
Theorem~\ref{prop:rasch_l2} implies that 
\[
\frac{\left|\|\widehat{\bm{\theta}}-\bm{\theta}^{\star}\|-\sqrt{\mathrm{Trace}(\bm{L}_{L\tilde{z}}^{\dagger})}\right|}{\sqrt{\mathrm{Trace}(\bm{L}_{L\tilde{z}}^{\dagger})}}\lesssim\frac{\sqrt{\frac{1}{np}}+\frac{\sqrt{m}}{np}}{\sqrt{\frac{m}{np}}}\lesssim\frac{1}{\sqrt{n}}.
\]
In other words, $\|\widehat{\bm{\theta}}-\bm{\theta}^{\star}\|$ concentrate
tightly around $\mathrm{Trace}(\bm{L}_{L\tilde{z}}^{\dagger})$. We
can see that the deviation is very small between $\widetilde{z}=\widehat{z}$
and $\widetilde{z}=z$. In both cases, the relative deviation of $\|\widehat{\bm{\theta}}-\bm{\theta}^{\star}\|$
from $\sqrt{\mathrm{Trace}(\bm{L}_{L\tilde{z}}^{\dagger})}$ decreases
as $n$ and $m$ increase as expected.

\subsubsection{Confidence intervals for \myalgM\ and \myalgW\label{subsec:CI}}

\begin{figure}
\begin{centering}
\includegraphics[width=0.5\textwidth]{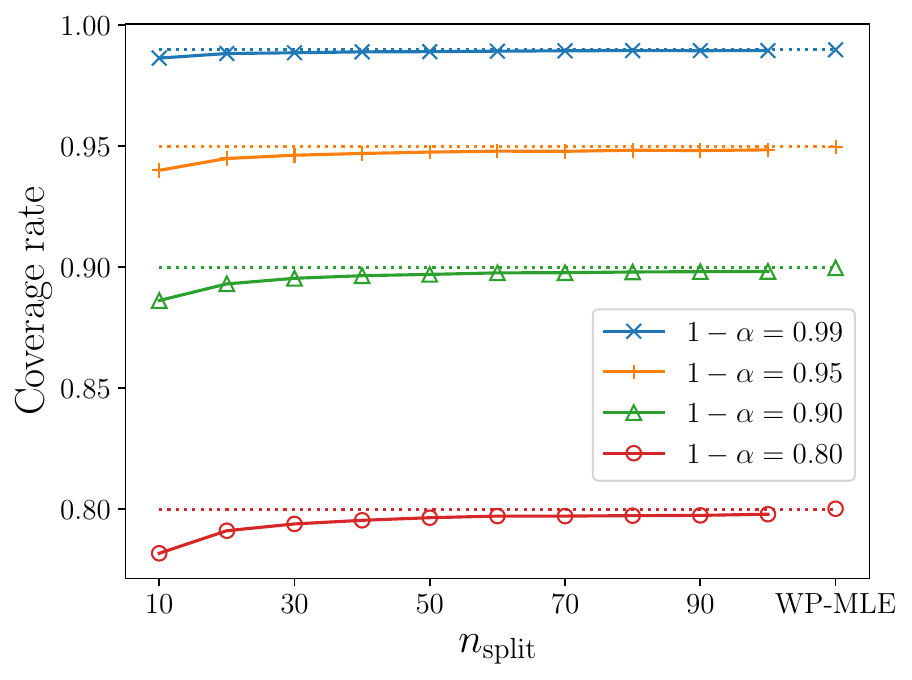}\caption{\label{fig:CI}Theoretical coverage rate $1-\alpha$ (dashed line)
and empirical coverage rate $1-\widehat{\alpha}$ of the two-sided
confidence intervals $[\mathcal{C}_{i}^{-}(\alpha/2),\mathcal{C}_{i}^{+}(\alpha/2)]$.
The first 10 points on the left in each level are computed with \myalgM\
with $n_{\mathrm{split}}$ vaying from 10 to 100. The rightmost point
on each level is computed with \myalgW. The parameters are set to
be $n=10000,m=20,p=0.5$. The confidence interval is $1-\alpha=0.8,0.9,0.95,0.99$.
Each point is averaged over $10000$ trials.}
\par\end{centering}
\end{figure}
The asymptotic normality results in Theorem~\ref{thm:asymp_normality}
and \ref{thm:asymp_normality_WPMLE} allow us to construct confidence
intervals for $\theta_{i}^{\star}$ with \myalgM\ and \myalgW. For
large enough $n_{\mathrm{split}}$, we can approximate the variance
of both estimators with 
$\frac{1}{n}(\bm{H}^{\infty})^{\dagger}\bm{V}_{\mathrm{diff}}^{\infty}(\bm{H}^{\infty})^{\dagger}$, 
where $\bm{V}_{\mathrm{diff}}^{\infty}$ and $\bm{H}^{\infty}$ are
estimated using the following plug-in estimates:
\begin{align*}
\widehat{\bm{H}}^{\infty} & =\frac{1}{n\cdot n_{\mathrm{split}}}\sum_{i=1}^{n_{\mathrm{split}}}\sum_{t=1}^{n}\nabla^{2}\mathcal{L}_{i}^{(t)}(\widehat{\bm{\theta}}),\\
\widehat{\bm{V}}_{\mathrm{diff}}^{\infty} & =\frac{1}{n}\sum_{t=1}^{n}\nabla\mathcal{L}_{\mathrm{WP}}^{(t)}(\widehat{\bm{\theta}})\nabla\mathcal{L}_{\mathrm{WP}}^{(t)}(\widehat{\bm{\theta}})^{\top}.
\end{align*}
For a given confidence level $1-\alpha$, the confidence interval
$[\mathcal{C}_{i}^{-}(\alpha/2),\mathcal{C}_{i}^{+}(\alpha/2)]$ is
\begin{align*}
\mathcal{C}_{i}^{-}(\alpha/2) & \coloneqq\widehat{\theta}_{i}-z_{1-\alpha/2}\cdot\left[\frac{1}{n}(\widehat{\bm{H}}^{\infty})^{\dagger}\widehat{\bm{V}}_{\mathrm{diff}}^{\infty}(\widehat{\bm{H}}^{\infty})^{\dagger}\right]_{ii}^{1/2},\\
\mathcal{C}_{i}^{+}(\alpha/2) & \coloneqq\widehat{\theta}_{i}+z_{1-\alpha/2}\cdot\left[\frac{1}{n}(\widehat{\bm{H}}^{\infty})^{\dagger}\widehat{\bm{V}}_{\mathrm{diff}}^{\infty}(\widehat{\bm{H}}^{\infty})^{\dagger}\right]_{ii}^{1/2}.
\end{align*}
In Figure~\ref{fig:CI}, we compare the empirical coverage rate
\[
1-\widehat{\alpha}\coloneqq\frac{1}{m}\sum_{i=1}^{m}\mathds{1}\{\mathcal{C}_{i}^{-}(\alpha/2)\le\theta_{i}^{\star}\le\mathcal{C}_{i}^{+}(\alpha/2)\}.
\]
of these two-sided confidence intervals with the theoretical ones.
We can see that when $n_{\mathrm{split}}$ is reasonably large, the
empirical coverage rate matches well with the theoretical coverage
rate for both \myalgM\ and \myalgW.

\subsection{LSAT dataset}

We study a real-world dataset (LSAT) on the Law School Admissions
Test from \cite{Bock1970}. LSAT has full observation of 1000 people
answering 5 problems, with each person-item pair recording whether
the answer was correct. The second row in Table~\ref{tab:lsat} lists
how many people answer each problem correctly. From the first look,
Problem 3 appears to be the hardest question. 

We proceed to quantify the hardness of these problems under the Rasch
model and infer how confident we are in claiming it is the hardest.
Using \myalgM\, we compute a latent score estimate and construct
two-sided confidence intervals at significance level $\alpha=0.01$
for each coordinate, following the methodology introduced in Section~\ref{subsec:CI}.
The result is summarized in Table~\ref{tab:lsat}, where higher latent
score correspond to greater difficulty. The estimated parameters align
inversely with the total number of correct answers. Notably, the lower
bound of the confidence interval for $\theta_{3}^{\star}$ is larger
than the upper bounds of the confidence intervals of $\theta_{1}^{\star},\theta_{2}^{\star},\theta_{4}^{\star}$,
and $\theta_{5}^{\star}$. With Bonferroni correction, we can conclude
that with 95\% confidence Problem 3 is the most difficult problem
in this dataset.

\begin{table}
\begin{centering}
\par\end{centering}
\begin{centering}
\begin{tabular}{|c|c|c|c|c|c|}
\hline 
Problem & 1 & 2 & 3 & 4 & 5\tabularnewline
\hline 
Total correct & 924 & 709 & 553 & 763 & 870\tabularnewline
$\theta$ estimate & -1.2824  & 0.4511 & 1.2800 & 0.1926 & -0.6413\tabularnewline
CI lower bound & -1.5579  & 0.2696 & 1.0958 & 0.0017 & -0.8711\tabularnewline
CI upper bound & -1.0069 & 0.6327 & 1.4641 & 0.3834 & -0.4116\tabularnewline
\hline 
\end{tabular}
\par\end{centering}
\caption{\label{tab:lsat} Latent score estimate calculated using \myalgM\
with 20 data splittings and confidence interval calculated with the
construction introduced in Section~\ref{subsec:CI}. Higher latent
score here means higher difficulty. The significance level is chosen
to be $\alpha=0.01$ for each coordinate. }
\end{table}

Now we assume Problem 3 is the top-$1$ item in latent score and investigate
the top-$1$ recovery rate of different algorithms on LSAT under incomplete
observation. In each trial we randomly select $\widetilde{n}$ people,
and for each of them we randomly select their outcome on $\widetilde{m}$
problems. We then estimate $\bm{\theta}^{\star}$ using the subsampled
data with different methods and compare the proportion of trials where
the top-$1$ item is correctly identified. In Figure~\ref{fig:Top_1_lsat},
we see results similar to the simulation. Our algorithm \myalgM\
has a similar recovery rate compared to PMLE and spectral method.
\begin{figure} 
\begin{centering}
\includegraphics[width=0.5\textwidth]{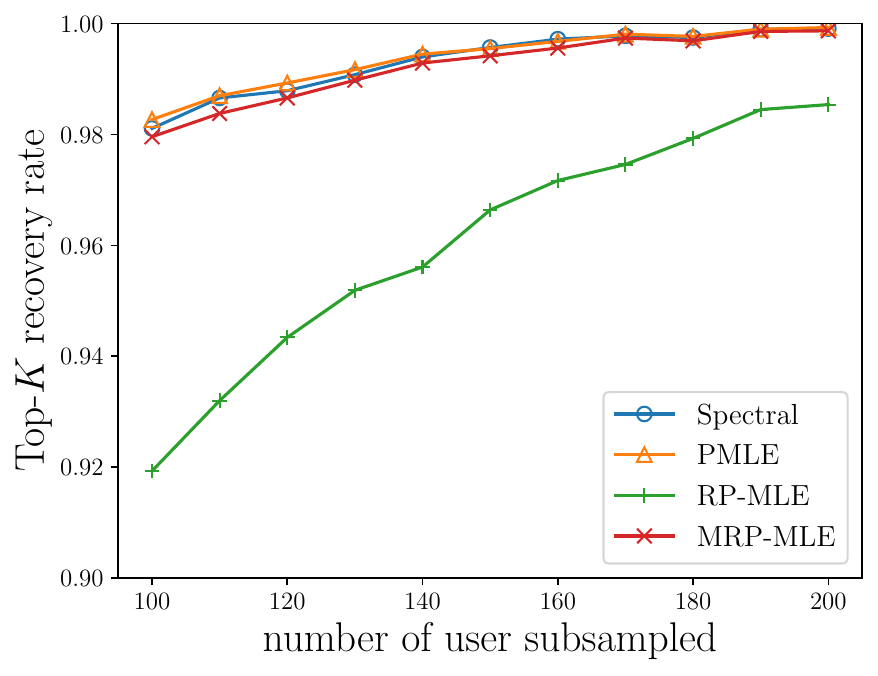}
\par\end{centering}
\caption{\label{fig:Top_1_lsat}Top-1 recovery rate using Spectral method,
PMLE, \myalg\, and \myalgM\ using 20 data splittings. The parameters
are chosen to be $\widetilde{m}=4$ and $\widetilde{n}$ varies from
100 to 200. The result is averaged over 10000 trials.}

\end{figure}

\section{Discussion}

In this paper, we propose two new likelihood-based estimators \myalg\
and \myalgM\ for item parameter estimation in the Rasch model. Both
enjoy optimal finite sample estimation guarantee and asymptotic normality
that allows for tight uncertainty quantification. All this is achieved
even when the user-item response data are extremely sparse (cf.~\cite{nguyen2023optimal}).
Below, we identify several questions that are interesting for further
investigation: 
\begin{itemize}
\item \textbf{Does PMLE or CMLE achieve optimal theoretical guarantee?}\emph{
}In our experiments, pseudo MLE has shown a similar performance to
\myalgM. This naturally leads to the question of whether PMLE can
enjoy the same theoretical guarantee. This is relevant to our work
because our methods can be viewed as a modification of pseudo MLE
by incorporating random disjoint pairing to decouple statistical dependency
among paired $Y_{ij}$'s. It remains unclear whether such dependency
is a fundamental bottleneck. On the other hand, conditional MLE is
another popular method used in practice. In Section~A.1
of the supplement we will mention that CMLE can also be viewed as a reduction to a less studied item-only
model. This reduction is more complicated for analysis as it constructs
item tuples rather than item pairs. It would be interesting to know
whether we can transfer the techniques we have used here to develop
a non-asymptotic analysis for CMLE. 
\item \textbf{Extending random pairing to other models in IRT.} Some IRT
models parameterize the latent score of users and items differently
from the Rasch model. For instance, consider the two-parameter logistic
model (2PL) with discrimination parameter on the users. It assumes
that $X_{ti}$, the response of user $t$ to item $i$, follows the
law
\[
\mathbb{P}[X_{ti}=1]=\frac{1}{1+\exp(a_{t}^{\star}(\zeta_{t}^{\star}-\theta_{i}^{\star}))},
\]
where $\bm{\theta}^{\star}$ is the latent scores of the items , $\bm{\zeta}^{\star}$
is the latent scores of the users, while $\bm{a}^{\star}$ is the
discrimination parameters. Unlike the Rasch model, in the 2PL model,
\[
\mathbb{P}[X_{ti}>X_{tj}\mid X_{ti}\neq X_{tj}]=\frac{\exp(a_{t}^{\star}\theta_{i}^{\star})}{\exp(a_{t}^{\star}\theta_{i}^{\star})+\exp(a_{t}^{\star}\theta_{j}^{\star})}
\]
is not independent of the user discrimination parameter $\bm{a}^{\star}$.
Therefore the reduction to the BTL model is no longer true in this
case. However, one could employ a partially-Bayesian approach by putting
a prior on $\bm{\alpha}^{\star}$ and maximize this marginally likelihood,
which is a function of $\theta_{i}^{\star}-\theta_{j}^{\star}$ independent
of $\bm{\zeta}$. It is interesting and non-trivial to extend the
idea of random pairing to the 2PL model. 
\item \textbf{Extension to joint estimation of user and item parameters.
}It is sometimes of interest to estimate both the user and the item
parameters. We expect our method \myalgM\ continues to work with
slight modifications. In a high level, the idea is to estimate the
mean-shifted parameters $\bm{\theta}^\star - (1/m) \bm{1}_m\bm{1}_m^\top \bm{\theta}^\star$
and $\bm{\zeta}^\star - (1/m) \bm{1}_m\bm{1}_m^\top \bm{\zeta}^\star$
using \myalgM\ twice. In the end, one estimates the difference in
the means using MLE over the comparison outcomes. We leave the detailed
investigation to future work. 
\end{itemize}

\bibliographystyle{alpha}
\bibliography{All-of-Bibs}

\appendix

\section{Analysis\label{sec:Analysis}}

In this section, we present the main steps to obtain theoretical results
in the previous section. Section~\ref{subsec:Reduction} provides
a complete argument on the reduction to the BTL model we mentioned
in Section~\ref{subsec:algo}. Section~\ref{subsec:Analysis_infty}
provides the analysis of the $\ell_{\infty}$ error, Section~\ref{subsec:Analysis_non_asymp_expansion}
provides the analysis of the non-asymptotic expansion, and Section~\ref{subsec:Analysis_asymp_normal}
sketches the proof of the asymptotic normality of~\myalgM\ and \myalgW.

\subsection{Reduction to Bradley-Terry-Luce model\label{subsec:Reduction}}

  A key component in \myalg\ is the random pairing in Steps 1 and
2 of Algorithm~\ref{alg:RP-MLE}. It compiles the user-item responses
$\bm{X}$ to item-item comparisons $\bm{Y}$. In this section, we
make a detailed argument that $\bm{Y}$ follows the Bradley-Terry-Luce
model with a non-uniform sampling scheme. 

Recall that $L_{ij}^{t}\coloneqq\mathds{1}\{X_{ti}\neq X_{tj}\}$
and $Y_{ij}^{t}\coloneqq\mathds{1}\{X_{ti}<X_{tj}\}$. The following
fact provides the distribution of $Y_{ij}^{t}$ conditional on $L_{ij}^{t}=1$.
We defer its proof to Section~\ref{subsec:Proof_BTL}.

\begin{fact}\label{fact:BTL} Let $i,j$ be two items and $t$ be
a user. Suppose that user $t$ has responded to both items $i$ and
$j$. Let $X_{ti}$ and $X_{tj}$ be the responses sampled from the
probability model (\ref{eq:sampling_prob}). Then we have
\[
\mathbb{P}[X_{ti}<X_{tj}\mid L_{ij}^{t}=1]=\frac{e^{\theta_{j}^{\star}}}{e^{\theta_{i}^{\star}}+e^{\theta_{j}^{\star}}},
\]
\begin{equation}
\mathbb{P}[L_{ij}^{t}=1]\ge\frac{2\kappa_{2}}{(1+\kappa_{2})^{2}}.\label{eq:prob_L_ij}
\end{equation}

\end{fact}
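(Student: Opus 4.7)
The first identity is a direct conditional-probability calculation. For a fixed user $t$ with latent trait $\zeta_{t}^{\star}$, the responses $X_{ti}$ and $X_{tj}$ are independent Bernoulli random variables whose success probabilities are given by~(\ref{eq:sampling_prob}). The plan is to decompose the event $\{L_{ij}^{t}=1\}$ as the disjoint union $\{X_{ti}=0,X_{tj}=1\}\cup\{X_{ti}=1,X_{tj}=0\}$ and use independence to write each joint probability as a product. Adding the two products and simplifying shows that $e^{\zeta_{t}^{\star}}$ factors out of the sum, and forming the ratio $\mathbb{P}[X_{ti}=0,X_{tj}=1]/\mathbb{P}[L_{ij}^{t}=1]$ then makes $e^{\zeta_{t}^{\star}}$ cancel cleanly, leaving the BTL form $e^{\theta_{j}^{\star}}/(e^{\theta_{i}^{\star}}+e^{\theta_{j}^{\star}})$.

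For the lower bound on $\mathbb{P}[L_{ij}^{t}=1]$, the previous step already yields the closed form
\[
\mathbb{P}[L_{ij}^{t}=1] = \frac{e^{\zeta_{t}^{\star}}(e^{\theta_{i}^{\star}}+e^{\theta_{j}^{\star}})}{(e^{\zeta_{t}^{\star}}+e^{\theta_{i}^{\star}})(e^{\zeta_{t}^{\star}}+e^{\theta_{j}^{\star}})}.
\]
The change of variables $x \coloneqq e^{\theta_{i}^{\star}-\zeta_{t}^{\star}}$ and $y \coloneqq e^{\theta_{j}^{\star}-\zeta_{t}^{\star}}$ rewrites this as $f(x,y) \coloneqq (x+y)/[(1+x)(1+y)]$. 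By the definition of $\kappa_{2}$, both $x$ and $y$ lie in the interval $[1/\kappa_{2},\kappa_{2}]$, so the problem reduces to minimizing $f$ over the box $[1/\kappa_{2},\kappa_{2}]^{2}$.

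A short differentiation gives $\partial f/\partial x = (1-y)/[(1+x)^{2}(1+y)]$ and symmetrically for $y$, so $f$ is separately monotone in each argument (the sign of monotonicity depending on whether the other variable exceeds $1$). This forces the minimum on the box to be attained at one of the four corners. Evaluating, the diagonal corners give $f(\kappa_{2},\kappa_{2})=f(1/\kappa_{2},1/\kappa_{2})=2\kappa_{2}/(1+\kappa_{2})^{2}$ while the cross corners give $f(\kappa_{2},1/\kappa_{2})=f(1/\kappa_{2},\kappa_{2})=(1+\kappa_{2}^{2})/(1+\kappa_{2})^{2}$, and AM--GM ($\kappa_{2}^{2}+1 \ge 2\kappa_{2}$) shows the cross corners dominate. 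Hence the minimum is exactly $2\kappa_{2}/(1+\kappa_{2})^{2}$, yielding the claimed bound. No step here is genuinely hard: the only subtle point is that $f$ is not jointly monotone on the full box, so the reduction to the corners must invoke separate monotonicity in each coordinate rather than a global gradient argument.
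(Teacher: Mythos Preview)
Your proposal is correct and follows essentially the same approach as the paper: both compute the conditional probability by direct expansion and cancellation of $e^{\zeta_t^\star}$, and both lower-bound $\mathbb{P}[L_{ij}^t=1]$ by introducing $f(a,b)=(a+b)/[(1+a)(1+b)]$ on $[1/\kappa_2,\kappa_2]^2$, computing partial derivatives, and identifying the minimum at the diagonal corners. The only cosmetic difference is that the paper also lists the interior critical point $(1,1)$ among the candidates before comparing values, whereas your separate-monotonicity argument goes straight to the four corners; both are valid and amount to the same calculation.
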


Fact~\ref{fact:BTL} shows that conditional on $L_{ij}^{t}=1$, $Y_{ij}^{t}$
follows the BTL model with parameters $\bm{\theta}^{\star}$. More
importantly, as we deploy random pairing (cf.~Step 1a), each response
$X_{ti}$ is used at most once. As a result, conditional on $\{L_{ij}^{t}\}_{ijt}$,
$Y_{ij}^{t}$'s are jointly independent across users and items. In
light of these, we can equivalently describe the data generating process
of $\bm{Y}$ as follows:
\begin{enumerate}
\item For each user-item pair $(t,i)$, there is a comparison between them
with probability $p$ independently.
\item Randomly split the $m_{t}$ problems taken by user $t$ into $\lfloor m_{t}/2\rfloor$
pairs of problems. (Step 1(a) of Algorithm~\ref{alg:RP-MLE})
\item For all $(i,j,t)$, items $i$ and $j$ are compared by user $t$
if $L_{ij}^{t}\coloneqq\mathds{1}\{X_{ti}\neq X_{tj}\}=1$. 
\item Conditioned on $L_{ij}^{t}=1$, one observes the outcome $Y_{ij}^{t}\coloneqq\mathds{1}\{X_{ti}<X_{tj}\}$.
\end{enumerate}
Steps~1--3 generates a non-uniform comparison graph $\mathcal{E}_{Y}$
between items. Step 4 reveals the independent outcomes of these comparisons
following the BTL model, conditional on the graph $\mathcal{E}_{Y}$.
This justifies that (\ref{eq:MLE_loss}) is truly the likelihood function
of the BTL model conditional on the comparison graph $\mathcal{E}_{Y}$. 

In addition, we would like to comment on another popular method conditional
MLE, which can also be viewed as a reduction to a item-only model.
The CMLE maximizes the likelihood conditioned on total number of positive
responses. It can be computed that
\[
\mathbb{P}\left[X_{ti_{1}},X_{ti_{2}},\cdots,X_{ti_{m_{t}}}\mid\sum_{l=1}^{m_{t}}X_{ti_{l}}=k\right]=\frac{\prod_{l=1}^{m_{t}}e^{\theta_{i_{l}}^{\star}}\mathds{1}\{X_{ti_{l}}=1\}}{\sum_{\bm{\alpha}\in\{0,1\}^{m_{t}}}\prod_{l=1}^{m_{t}}e^{\theta_{i_{l}}^{\star}}\mathds{1}\{\alpha_{l}=1\}}.
\]
It is easy to see that the conditional probability is not dependent
on the user parameters $\bm{\zeta}^{\star}$. However, the model CMLE
reduces to is less studied than the BTL model, especially in the setting
of sparse observations. 

\subsection{Analysis for entrywise error bound \label{subsec:Analysis_infty}}

We have seen that analyzing \myalg\ under the Rasch model can be
reduced to analyzing the MLE under the BTL model. This reduction allows
us to invoke the result 
in the recent work~\cite{Yang2024} established for MLE in the BTL model with a general comparison graph. 

To facilitate the presentation, we introduce the necessary notation.
For any $i\in[m]$, let $d_{i}\coloneqq\sum_{j:j\neq i}L_{ij}$ be
the weighted degree of item $i$ in $\mathcal{G}_{Y}$ and $d_{\max}=\max_{i\in[m]}d_{i}$.
Let the weighted graph Laplacian $\bm{L}_{L}$ be
\[
\bm{L}_{L}\coloneqq\sum_{i,j:i>j}L_{ij}(\bm{e}_{i}-\bm{e}_{j})(\bm{e}_{i}-\bm{e}_{j})^{\top}.
\]
The following lemma adapts Theorem~3 of the recent work \cite{Yang2024}
to our setting.

\begin{lemma}[Theorem~3 in \cite{Yang2024}]\label{lemma:MLE_general}
Assume that $\mathcal{G}_{Y}$ is connected, and that 
\begin{equation}
[\lambda_{m-1}(\bm{L}_{L})]^{5}\ge C_{1}\kappa_{1}^{4}(d_{\max})^{4}\log^{2}(n)\label{eq:MLE_general_cond}
\end{equation}
for some large enough constant $C_{1}>0$. Then with probability at
least $1-n^{-10}$, we have
\[
\|\widehat{\bm{\theta}}-\bm{\theta}^{\star}\|_{\infty}\le C_{2}\kappa_{1}\sqrt{\frac{\log(n)}{\lambda_{m-1}(\bm{L}_{L})}}
\]
for some constant $C_{2}>0$.

\end{lemma}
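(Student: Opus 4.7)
The plan is to obtain this lemma as a direct specialization of Theorem~3 of \cite{Yang2024}. The key conceptual step, already laid out in Section~\ref{subsec:Reduction}, is that conditioning on the $\sigma$-algebra generated by $(\mathcal{E}_Y, \{L_{ij}\})$ converts our estimation task into a textbook BTL problem. Indeed, by Fact~\ref{fact:BTL} together with the disjointness of the pairing in Step~1(a) of Algorithm~\ref{alg:RP-MLE}, the family $\{Y_{ij}^t : L_{ij}^t = 1\}$ consists of jointly independent Bernoulli variables with success probability $e^{\theta_j^\star}/(e^{\theta_i^\star}+e^{\theta_j^\star})$. Consequently, the objective $\mathcal{L}(\bm{\theta})$ in (\ref{eq:MLE_loss}), together with the centering constraint $\bm{1}_m^\top \bm{\theta} = 0$, is precisely the BTL negative log-likelihood on the weighted graph $(\mathcal{G}_Y, \{L_{ij}\})$, with $L_{ij}$ playing the role of the number of comparisons observed on edge $(i,j)$.

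Next I would set up the dictionary between our notation and that of \cite{Yang2024}: their spectral quantity is $\lambda_{m-1}(\bm{L}_L)$, their maximum weighted degree is $d_{\max}$, and their dynamic-range parameter for the item scores is captured by $\kappa_1 = \exp(\max_{ij}|\theta_i^\star - \theta_j^\star|)$. With this identification, the spectral-gap hypothesis (\ref{eq:MLE_general_cond}) is exactly the condition required by their Theorem~3, up to the cosmetic replacement of $\log(m)$ by $\log(n)$, which is absorbed into the constants using the globally-assumed relation $m \le n^{\alpha}$. Applying their theorem then produces the stated $\ell_\infty$ bound with probability at least $1 - n^{-10}$.

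The main technical point to execute carefully is the conditioning: $(\mathcal{E}_Y, \{L_{ij}\})$ is itself random, depending both on the Erd\H{o}s-R\'enyi sampling of user-item pairs and on the uniform random pairing in Step~1(a). However, Theorem~3 of \cite{Yang2024} is a deterministic-in-the-graph statement whose $1-n^{-10}$ probability comes entirely from the BTL outcomes. Hence the lemma follows by conditioning on $(\mathcal{E}_Y, \{L_{ij}\})$, invoking their theorem on every realization satisfying (\ref{eq:MLE_general_cond}), and integrating back. Verifying that (\ref{eq:MLE_general_cond}) does hold with high probability over the sampling of $\bm{X}$ and the pairing is a separate matter, deferred to the proof of Theorem~\ref{thm:rasch_infty} and not part of this lemma.
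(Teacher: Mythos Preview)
Your proposal is correct and aligns with the paper's approach: the paper does not give a separate proof of this lemma but simply cites it as Theorem~3 of \cite{Yang2024} adapted to the present setting, and your write-up spells out precisely why that citation applies (the conditional BTL reduction via Fact~\ref{fact:BTL}, the notation dictionary, and the conditioning on $(\mathcal{E}_Y,\{L_{ij}\})$). Nothing further is needed.
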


To leverage this general result, we need to characterize the spectral
and degree properties of the comparison graph $\mathcal{G}_{Y}$,
which is achieved in the following two lemmas. The proofs are deferred
to Section~\ref{sec:Proofs_deg_spec}.

\begin{lemma}[Degree bound in $\mathcal{G}_{Y}$]\label{lem:degree_Y}
Suppose that $np\ge C\kappa_{2}^{2}\log(n)$ for some large enough
constant $C>0$ and $m\le n^{\alpha}$ for some sufficiently large
constant $\alpha>0$. With probability at least $1-2n^{-10}$, for
all $i\in[m]$, 
\begin{equation}
\frac{1}{24\kappa_{2}}np\le d_{i}\le\frac{3}{2}np.\label{eq:deg_Y}
\end{equation}

\end{lemma}

\begin{lemma}\label{lemma:spectral} Suppose $mp\ge2$, $np\ge C\kappa_{2}^{2}\log(n)$
for some large enough constant $C$, and $m\le n^{\alpha}$ for some
constant $\alpha>0$. With probability at least $1-10n^{-10}$, we
have 
\begin{equation}
\frac{np}{4\kappa_{2}}\le\lambda_{m-1}(\bm{L}_{L})\le\lambda_{1}(\bm{L}_{L})\le3np\label{eq:L_L_spectral},
\end{equation}
\begin{equation}
\frac{np}{16\kappa_{1}\kappa_{2}}\le\lambda_{m-1}(\bm{L}_{Lz})\le\lambda_{1}(\bm{L}_{Lz})\le np.\label{eq:L_Lz_spectral}
\end{equation}

\end{lemma}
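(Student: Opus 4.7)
The plan is to separate the four eigenvalue bounds in \eqref{eq:L_L_spectral}--\eqref{eq:L_Lz_spectral} into an easy part (the upper bounds and the reduction from $\bm{L}_{Lz}$ to $\bm{L}_L$) and a substantive part (the lower bound on $\lambda_{m-1}(\bm{L}_L)$). The pointwise sandwich $z_{ij} = e^{\theta_i^\star}e^{\theta_j^\star}/(e^{\theta_i^\star}+e^{\theta_j^\star})^2 \in [1/(4\kappa_1),\, 1/4]$, which follows from AM--GM together with $|\theta_i^\star - \theta_j^\star| \le \log\kappa_1$, gives $(1/(4\kappa_1))\bm{L}_L \preceq \bm{L}_{Lz} \preceq (1/4)\bm{L}_L$, so \eqref{eq:L_Lz_spectral} will follow immediately from \eqref{eq:L_L_spectral}. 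The upper bound $\lambda_1(\bm{L}_L) \le 3np$ in turn comes from the Gershgorin estimate $\lambda_1(\bm{L}) \le 2 d_{\max}$ for any nonnegatively-weighted Laplacian, combined with Lemma~\ref{lemma:degree_Y}. Everything therefore reduces to proving $\lambda_{m-1}(\bm{L}_L) \ge np/(4\kappa_2)$.

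For that I would first lower bound $\mathbb{E}[L_{ij}]$ and then transfer to the sample Laplacian via matrix concentration. Factoring the probability that the algorithm records $L_{ij}^t = 1$ as (prob user $t$ responds to both $i,j$) $\cdot$ (prob $(i,j)$ is selected in user $t$'s random matching, given both are responded to) $\cdot$ (prob $X_{ti}\ne X_{tj}$, given both are responded to), the three factors are $p^2$, $\mathbb{E}[1/(n_t-1)] \gtrsim 1/(mp)$ (by Jensen applied to $n_t - 2 \sim \mathrm{Bin}(m-2,p)$ and the hypothesis $mp \ge 2$), and $\ge 2\kappa_2/(1+\kappa_2)^2 \gtrsim 1/\kappa_2$ (by Fact~\ref{fact:BTL}), respectively. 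Summing over $t$ yields $\mathbb{E}[L_{ij}] \gtrsim np/(m\kappa_2)$ uniformly in $(i,j)$, whence $\mathbb{E}[\bm{L}_L]$ dominates $c(np/(m\kappa_2))$ times the complete-graph Laplacian $\bm{L}_{K_m}$ and $\lambda_{m-1}(\mathbb{E}[\bm{L}_L]) \gtrsim np/\kappa_2$.

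To transfer this to the sample Laplacian, write $\bm{L}_L = \sum_t \bm{L}_L^{(t)}$ with $\bm{L}_L^{(t)} \coloneqq \sum_{(i,j)} L_{ij}^t (\bm{e}_i - \bm{e}_j)(\bm{e}_i - \bm{e}_j)^\top$; these matrices are independent across $t$. Three properties will drive matrix Bernstein: each $\bm{L}_L^{(t)}$ is a matching-graph Laplacian, so $\|\bm{L}_L^{(t)}\| \le 2$; distinct matching edges are vertex-disjoint, giving the clean identity $(\bm{L}_L^{(t)})^2 = 2\bm{L}_L^{(t)}$ (cross terms annihilate, each rank-one summand squares to twice itself); consequently $\|\sum_t \mathbb{E}[(\bm{L}_L^{(t)})^2]\| = 2\|\mathbb{E}[\bm{L}_L]\| \le 4np$ by the $\mathbb{E}[d_i] \le np$ upper bound applied in expectation. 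Matrix Bernstein with $R = O(1)$ and $\sigma^2 = O(np)$ then yields $\|\bm{L}_L - \mathbb{E}[\bm{L}_L]\| \le np/(8\kappa_2)$ with probability $1 - O(n^{-10})$, precisely under the hypothesis $np \gtrsim \kappa_2^2 \log n$ (the union bound over $m \le n^\alpha$ coordinates is absorbed into the constant). Weyl's inequality then gives $\lambda_{m-1}(\bm{L}_L) \ge \lambda_{m-1}(\mathbb{E}[\bm{L}_L]) - \|\bm{L}_L - \mathbb{E}[\bm{L}_L]\| \ge np/(4\kappa_2)$ after tuning constants. The main subtlety, I expect, is the matching identity $(\bm{L}_L^{(t)})^2 = 2\bm{L}_L^{(t)}$: it is what keeps the Bernstein variance at $O(np)$ rather than something that would scale with $m$, and it is tightly linked to the disjoint random-pairing design that defines \myalg.
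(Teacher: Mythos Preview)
Your proposal is correct, and the reduction steps (upper bounds via $\lambda_1(\bm{L}_L)\le 2d_{\max}$ together with Lemma~\ref{lemma:degree_Y}, and the sandwich $(4\kappa_1)^{-1}\bm{L}_L\preceq\bm{L}_{Lz}\preceq\tfrac14\bm{L}_L$) match the paper exactly. The substantive difference is in the concentration step for $\lambda_{m-1}(\bm{L}_L)$. The paper restricts to $\bm{1}^\perp$ via a partial isometry $\bm{R}$, observes that each summand $\bm{R}\bm{L}_L^{t}\bm{R}^\top$ is PSD with top eigenvalue at most $2$, and applies the matrix \emph{Chernoff} lower-tail bound directly to $\lambda_{\min}$ of the sum; this needs only $np\gtrsim\kappa_2\log n$ for this step. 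You instead center and apply matrix \emph{Bernstein} combined with Weyl, using the matching identity $(\bm{L}_L^{(t)})^{2}=2\bm{L}_L^{(t)}$ to keep the variance at $O(np)$; this is a nice structural observation tied to the disjoint-pairing design, and it yields the bound under $np\gtrsim\kappa_2^{2}\log n$, which is exactly the stated hypothesis (the extra $\kappa_2$ is needed anyway for the degree bound). Two small imprecisions worth tightening in a full write-up: the matching probability conditional on both responses is $1/(n_t-1)$ only for even $n_t$ and $1/n_t$ for odd $n_t$, so your Jensen step should be phrased as a lower bound by $\mathbb{E}[1/n_t]\ge 1/(2+(m-2)p)\gtrsim 1/(mp)$; and the Bernstein variance should formally be $\|\sum_t\mathbb{E}[(\bm{L}_L^{(t)}-\mathbb{E}\bm{L}_L^{(t)})^2]\|$, which is indeed bounded by $\|\sum_t\mathbb{E}[(\bm{L}_L^{(t)})^2]\|=2\|\mathbb{E}\bm{L}_L\|$ since the subtracted term is PSD. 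Neither affects the conclusion.
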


\subsubsection{Proof of Theorem~\ref{thm:rasch_infty}}

Now we are ready to prove Theorem~\ref{thm:rasch_infty}. We focus
on analyzing \myalg, as the analysis of \myalgM\ follows immediately
from the union bound of the different data splitting and the triangular
inequality: 
\[
\|\widehat{\bm{\theta}}-\bm{\theta}^{\star}\|_{\infty}\leq\frac{1}{n_{\mathrm{split}}}\sum_{i=1}^{n_{\mathrm{split}}}\|\widehat{\bm{\theta}}^{(i)}-\bm{\theta}^{\star}\|_{\infty}.
\]
By assumption we have $mp\ge2$ and $np\ge C_{1}\kappa_{1}^{4}\kappa_{2}^{5}\log^{3}(n)$
for some constant $C_{1}>0$. Then we can apply Lemmas~\ref{lemma:spectral}
and \ref{lem:degree_Y} to see that 
\[
\frac{np}{4\kappa_{2}}\le\lambda_{m-1}(\bm{L}_{L}),\qquad\text{and}\qquad d_{\max}\le\frac{3}{2}np.
\]
We observe that (\ref{eq:MLE_general_cond}) is satisfied as long
as $np\ge C_{1}\kappa_{1}^{4}\kappa_{2}^{5}\log^{3}(n)$ for some
constant $C_{1}$ that is large enough. Invoking Lemma~\ref{lemma:MLE_general},
we conclude that 
\[
\|\widehat{\bm{\theta}}-\bm{\theta}^{\star}\|_{\infty}\le C_{2}\kappa_{1}\sqrt{\frac{\log(n)}{\lambda_{m-1}(\bm{L}_{L})}}\le2C_{2}\kappa_{1}\kappa_{2}^{1/2}\sqrt{\frac{\log(n)}{np}}.
\]

It remains to show the top-$K$ recovery sample complexity. As $\theta_{1}^{\star}\ge\cdots\ge\theta_{K}^{\star}>\theta_{K+1}^{\star}\ge\cdots\ge\theta_{m}^{\star}$
by assumption, it suffices to show $\widehat{\theta}_{i}-\widehat{\theta}_{j}>0$
for any $i\le K$ and $j>K$. Using the $\ell_{\infty}$ error bound,
we have that 
\begin{align*}
\widehat{\theta}_{i}-\widehat{\theta}_{j} & \ge\left(\theta_{i}^{\star}-\theta_{j}^{\star}\right)-\left|\widehat{\theta}_{i}-\theta_{i}^{\star}\right|-\left|\widehat{\theta}_{j}-\theta_{j}^{\star}\right|\ge\Delta_{K}-4C_{2}\kappa_{1}\kappa_{2}^{1/2}\sqrt{\frac{\log(n)}{np}}.
\end{align*}
Then $\widehat{\theta}_{i}-\widehat{\theta}_{j}>0$ as long as 
\[
np\ge\frac{16C_{2}^{2}\kappa_{1}^{2}\kappa_{2}\log(n)}{\Delta_{K}^{2}}.
\]

\subsection{Analysis for non-asymptotic expansion\label{subsec:Analysis_non_asymp_expansion}}

To make the main text concise, we provide a sketch of the proof of
Theorem~\ref{thm:rasch_distribution} and leave the full one to Section~\ref{subsec:Proof_rasch_dist}.

The proof is inspired by the proof of Theorem~1 in \cite{chen2023ranking},
which analyzes MLE via the trajectory of the preconditioned gradient
descent (PGD) dynamic starting from ground truth. More precisely,
letting $\bm{\theta}^{0}=\bm{\theta}^{\star}$, we consider the PGD
iterates defined by
\[
\bm{\theta}^{t+1}=\bm{\theta}^{t}-\eta\bm{L}_{Lz}^{\dagger}\nabla\mathcal{L}(\bm{\theta}^{t}),
\]
where $\eta>0$ is the step size of PGD. \cite{chen2023ranking} shows
that this dynamic converges to $\widehat{\bm{\theta}}$. We proceed
one step further by establishing precise distributional characterization
of $\widehat{\bm{\theta}}$ via analyzing PGD. With Taylor expansion,
the gradient can be decomposed into 
\[
\nabla\mathcal{L}(\bm{\theta}^{t})=\bm{L}_{Lz}(\bm{\theta}^{t}-\bm{\theta}^{\star})-\bm{B}\widehat{\bm{\epsilon}}+\bm{r}^{t},
\]
where $\bm{r}^{t}$ is a residual vector with small magnitude. Then
the PGD update becomes 
\[
\bm{\theta}^{t+1}-\bm{\theta}^{\star}=\left(1-\eta\right)(\bm{\theta}^{t}-\bm{\theta}^{\star})-\eta\left(\bm{L}_{Lz}^{\dagger}\bm{B}\widehat{\bm{\epsilon}}-\bm{L}_{Lz}^{\dagger}\bm{r}^{t}\right).
\]
We establish Theorem~\ref{thm:rasch_distribution} by solving this
recursive relation. More specifically, as $\bm{L}_{Lz}^{\dagger}\bm{B}\widehat{\bm{\epsilon}}$
does not depend on $t$ and $\|\bm{L}_{Lz}^{\dagger}\bm{r}^{t}\|_{\infty}$
can be controlled for each step $t$, taking $t\rightarrow\infty$,
we see that 
\[
\widehat{\bm{\theta}}-\bm{\theta}^{\star}=\lim_{t\rightarrow\infty}\bm{\theta}^{t}-\bm{\theta}^{\star}=-\bm{L}_{Lz}^{\dagger}\bm{B}\widehat{\bm{\epsilon}}+\bm{r}
\]
for some residual term $\bm{r}$ that is well controlled in $\ell_{\infty}$
norm. 

\subsection{Analysis for asymptotic normality\label{subsec:Analysis_asymp_normal}}

The analysis of the asymptotic normality when $m,p$ are fixed is
standard for maximum likelihood estimators. Here we illustrate the
idea on \myalg\ for one random splitting. By mean value theorem
\begin{align*}
\sum_{t=1}^{n}\nabla\mathcal{L}_{k}^{(t)}(\bm{\theta}^{\star}) & =\sum_{t=1}^{n}\nabla\mathcal{L}_{k}^{(t)}(\widehat{\bm{\theta}}_{k}^{(n)})+\left[\int_{\tau=0}^{1}\sum_{t=1}^{n}\nabla^{2}\mathcal{L}_{k}^{(t)}(\bm{\theta}^{\star}+\tau(\widehat{\bm{\theta}}_{k}^{(n)}-\bm{\theta}^{\star}))\right]\mathrm{d}\tau(\bm{\theta}^{\star}-\widehat{\bm{\theta}}_{k}^{(n)})\\
  & =\left[\int_{\tau=0}^{1}\sum_{t=1}^{n}\nabla^{2}\mathcal{L}_{k}^{(t)}(\bm{\theta}^{\star}+\tau(\widehat{\bm{\theta}}_{k}^{(n)}-\bm{\theta}^{\star}))\mathrm{d}\tau\right](\bm{\theta}^{\star}-\widehat{\bm{\theta}}_{k}^{(n)}).
\end{align*}
Here the second row comes from the optimality condition of $\widehat{\bm{\theta}}_{k}^{(n)}$.
Now under some regularity conditions, we can use the consistency of
$\widehat{\bm{\theta}}_{k}^{(n)}$ to show 
\begin{equation}
\int_{\tau=0}^{1}\sum_{t=1}^{n}\nabla^{2}\mathcal{L}_{k}^{(t)}(\bm{\theta}^{\star}+\tau(\widehat{\bm{\theta}}_{k}^{(n)}-\bm{\theta}^{\star}))\mathrm{d}\tau\approx\bm{H}^{\infty}.\label{eq:Hessian_conv_groundtruth-1}
\end{equation}
Then $\bm{\theta}^{\star}-\widehat{\bm{\theta}}_{k}^{(n)}\approx(\bm{H}^{\infty})^{\dagger}\sum_{t=1}^{n}\nabla\mathcal{L}_{k}^{(t)}(\bm{\theta}^{\star})$.
Note that $\nabla\mathcal{L}_{k}^{(t)}(\bm{\theta}^{\star})$ is zero-mean
and independent between different user $t$. This independence also
holds for $\sum_{k=1}^{n_{\mathrm{split}}}\nabla\mathcal{L}_{k}^{(t)}(\bm{\theta}^{\star})$
and $\nabla\mathcal{L}_{\mathrm{WP}}^{(t)}(\bm{\theta}^{\star})$.
Then we can invoke central limit theorem to reach the desired result.

\section{Degree and spectral properties of the comparison graphs\label{sec:Proofs_deg_spec}}

In this section, we present the analysis for lemmas that characterize
the degree and spectral properties of the comparison graphs. We start
with a lemma that controls the degrees in $\mathcal{G}_{X}$, and
then prove Lemmas~\ref{lem:degree_Y} and \ref{lemma:spectral}.

\subsection{Degree range of $\mathcal{G}_{X}$ \label{subsec:Degree_X}}

Recall that $m_{t}$ is the number of neighbors of user $t$ in $\mathcal{G}_{X}$.
Furthermore, we denote $n_{i}$ as the number of users that is compared
with problem $i$ and at least another item, i.e., 
\begin{equation}
n_{i}\coloneqq\left|\{t:(t,i)\in\mathcal{E}_{X},m_{t}\ge2\}\right|.\label{eq:m_i_def}
\end{equation}
The following lemma controls the size of $m_{t}$ and $n_{i}$. 

\begin{lemma}[Degree bounds in $\mathcal{G}_{X}$] \label{lemma:degree_X}
Suppose that $np\ge C\log(n)$ for some large enough constant $C>0$
and that $m\le n^{\alpha}$ for some constant $\alpha>0$. Then with
probability at least $1-2n^{-10}$, for all $i\in[m]$, we have
\begin{equation}
\frac{1}{4}np\le n_{i}\le\frac{3}{2}np.\label{eq:deg_m}
\end{equation}
Moreover, with probability at least $1-n^{-10}$, for all $t\in[n]$,
we have
\begin{equation}
m_{t}\le\left(\frac{3}{2}mp\right)\vee165\log(n).\label{eq:deg_n}
\end{equation}

\end{lemma}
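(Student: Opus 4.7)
The plan is to handle the two conclusions separately, since they concern different kinds of binomial random variables. Observe first that both $n_t$ and the quantities needed to control $m_i$ are sums of independent Bernoulli's (conditional on the Erd\H{o}s--R\'enyi bipartite graph structure), so the workhorse throughout will be Chernoff/Bernstein concentration followed by a union bound over the $n$ users or $m \le n^\alpha$ items. Since a union bound over at most $n + m \lesssim n^\alpha$ objects costs only a polynomial factor in $n$, it suffices to show each individual tail event has probability at most $n^{-(10+\alpha+1)}$, and for this the assumption $np \ge C\log n$ with $C$ large enough provides ample room.

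For the bound on $n_t$, I would note that $n_t \sim \mathrm{Binomial}(m,p)$ with mean $mp$, and split into two regimes depending on whether $mp$ is large or small compared to $\log n$. In the ``large'' regime $mp \gtrsim \log n$, the multiplicative Chernoff bound $\mathbb{P}[n_t \ge (1+\delta)mp] \le \exp(-\delta^2 mp/3)$ applied with $\delta = 1/2$ gives $n_t \le \tfrac{3}{2} mp$ with the desired tail. In the ``small'' regime $mp \lesssim \log n$, I would apply the standard Poisson-style tail $\mathbb{P}[n_t \ge k] \le e^{-mp}(emp/k)^k$ at $k = 165\log n$, where the constant $165$ is chosen large enough so that for any $mp \le \tfrac{2}{3}(165\log n)$ the exponent works out to at most $-11\log n$. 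Taking the worse of the two thresholds gives $n_t \le (\tfrac{3}{2}mp) \vee 165\log n$ with probability at least $1 - n^{-11}$ for each $t$, and a union bound over $t \in [n]$ yields the stated failure probability $n^{-10}$.

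For the bounds on $m_i$, define the auxiliary quantity $\widetilde m_i \coloneqq |\{t : (t,i)\in \mathcal{E}_X\}| \sim \mathrm{Binomial}(n,p)$, so $m_i = \widetilde m_i - Z_i$ where $Z_i \coloneqq |\{t: (t,i)\in \mathcal{E}_X,\; n_t=1\}|$ counts users who are connected to item $i$ but to no other item. A standard two-sided Chernoff on $\widetilde m_i$ (using $np \ge C\log n$) gives $(1-\delta)np \le \widetilde m_i \le \tfrac{3}{2} np$ with probability $\ge 1 - n^{-11}$ for any fixed small $\delta$; this already delivers the upper bound $m_i \le \widetilde m_i \le \tfrac{3}{2} np$. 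For the lower bound, observe that $Z_i \sim \mathrm{Binomial}(n, p(1-p)^{m-1})$, since the event $\{(t,i)\in \mathcal{E}_X,\; n_t=1\}$ has probability $p(1-p)^{m-1}$ independently across $t$. Under the implicit condition $mp \ge 2$ inherited from the ambient assumptions (and $m \ge 2$), I have $(1-p)^{m-1} \le e^{-p(m-1)} \le e^{-1}$, so $\mathbb{E}[Z_i] \le np/e$. A Chernoff bound then gives $Z_i \le 2\mathbb{E}[Z_i] \le 2np/e$ with the same tail, whence $m_i \ge (1-\delta)np - 2np/e \ge \tfrac{1}{4}np$ after choosing $\delta$ small. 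Union-bounding over $i \in [m]$, which costs a $\log m \lesssim \log n$ factor absorbed in $C$, produces the claimed $1 - 2n^{-10}$ probability (the factor $2$ accounting for the two separate concentration events on $\widetilde m_i$ and $Z_i$).

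The only real subtlety I anticipate is the small-$mp$ regime for the $n_t$ bound: one cannot naively apply the multiplicative Chernoff form because its tail is too weak when $mp \ll \log n$, which is why the absolute threshold $165\log n$ is needed and why the constant must be chosen large enough to beat the $(emp/k)^k$ factor across the entire range $mp \lesssim \log n$. Once this constant balancing is done, everything else is straightforward concentration plus union bounding.
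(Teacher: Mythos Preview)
Your treatment of $n_t$ matches the paper's: both split on whether $mp$ is large relative to $\log n$, use multiplicative Chernoff in the dense case, and handle the sparse case with an absolute threshold (the paper does this via monotonicity of $\mathbb{P}[n_t\ge 165\log n]$ in $p$, you via the Poisson-tail form $e^{-mp}(emp/k)^k$---either works).

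For $m_i$, your route differs from the paper's. Instead of decomposing $m_i=\widetilde m_i-Z_i$ and controlling each piece, the paper observes directly that $m_i=\sum_{t=1}^n \mathds{1}\{(t,i)\in\mathcal{E}_X,\; n_t\ge 2\}$ is itself a sum of $n$ \emph{independent} Bernoulli's (the $t$-th indicator depends only on edges incident to user $t$). One then computes $\mathbb{P}[(t,i)\in\mathcal{E}_X,\; n_t\ge 2]=p-p(1-p)^{m-1}\ge p/2$ (using $mp\ge 2$), so $np/2\le\mathbb{E}[m_i]\le np$, and a single two-sided Chernoff with $\delta=1/2$ gives $np/4\le m_i\le 3np/2$ with comfortable constants. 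Your decomposition also works, but it costs an extra concentration event and leads to very tight arithmetic: $m_i\ge(1-\delta)np-2np/e\ge np/4$ forces $\delta\le 3/4-2/e\approx 0.014$, which in turn inflates the constant $C$ in $np\ge C\log n$.

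There is also a small imprecision in your Chernoff step on $Z_i$: you assert ``$Z_i\le 2\mathbb{E}[Z_i]$ with the same tail'', but when $mp$ is large $\mathbb{E}[Z_i]=np(1-p)^{m-1}$ can be $o(\log n)$, and then the multiplicative Chernoff tail $e^{-\mathbb{E}[Z_i]/3}$ is not $\le n^{-11}$. The fix is easy---use stochastic domination $Z_i\preceq\mathrm{Bin}(n,p/e)$ (since $(1-p)^{m-1}\le e^{-1}$) and apply Chernoff to the dominating binomial whose mean $np/e$ is large---but as written the step is not quite right. The paper's direct approach sidesteps this issue entirely.
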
\begin{proof}

We prove the two claims in the lemma sequentially. 

Fix any $t,i$. One has
\begin{align}
\mathbb{P}\left[t:(t,i)\in\mathcal{E}_{X},m_{t}\ge2\right] & =\mathbb{P}\left[(t,i)\in\mathcal{E}_{X}\right]-\mathbb{P}\left[(t,i)\in\mathcal{E}_{X}\text{ and }m_{t}=1\right]\nonumber \\
 & =p-p(1-p)^{m-1}\nonumber \\
 & \ge p(1-e^{-(m-1)p})\nonumber \\
 & \ge\frac{1}{2}p,\label{eq:non_singular_neighbor}
\end{align}
as long as $mp\ge2$. Let $\mu_{i}\coloneqq\mathbb{E}[n_{i}]$. By
the linearity of expectation, we have
\begin{equation}
np/2\le\mu_{i}\le\sum_{t}\mathbb{P}\left[(t,i)\in\mathcal{E}_{X}\right]=np.\label{eq:mu_i}
\end{equation}
Fix $i\in[m]$. Since the sampling is independent with different $t$,
by the Chernoff bound, 
\begin{align*}
\mathbb{P}[|n_{i}-\mu_{i}|\le(1/2)\mu_{i}] & \le2e^{-\frac{1}{12}\mu_{i}}\le2e^{-\frac{1}{24}np}\le m^{-1}n^{-10}
\end{align*}
as long as $np\ge C\log(n)$ for large enough constant $C$. Applying
(\ref{eq:mu_i}) and union bound on $i\in[m]$ yields (\ref{eq:deg_m}).

Moving on to (\ref{eq:deg_n}), we first consider the case where $mp\ge110\log(n)$.
By Chernoff bound, 
\[
\mathbb{P}[m_{t}\ge(3/2)mp]\le e^{-\frac{1}{10}mp}\le n^{-11}.
\]
In the case of $mp<110\log(n)$, the quantity $\mathbb{P}\left[m_{t}\ge165\log(n)\right]$
clear decreases as $p$ decreases. So we may use the case $mp=110\log(n)$
to bound this quantity and conclude that 
\[
\mathbb{P}\left[m_{t}\ge165\log(n)\right]\le n^{-11}.
\]
Finally we apply union bound on $t\in[n]$ to reach (\ref{eq:deg_n}). 

\end{proof}

\subsection{Proof of Lemma~\ref{lem:degree_Y}}

The assumption of this lemma allows us to invoke Lemma~\ref{lemma:degree_X}.
For the upper bound of $d_{i}$, since (\ref{eq:deg_m}) is true,
\begin{align*}
d_{i} & =\sum_{j:j\neq i}L_{ij}\\
 & =\sum_{t:(t,i)\in\mathcal{E}_{X}}\sum_{j:j\neq i}L_{ij}^{t}\\
 & \overset{(\text{i})}{=}\sum_{t:(t,i)\in\mathcal{E}_{X},m_{t}\ge2}\sum_{j:j\neq i}L_{ij}^{t}\\
 & \overset{(\text{ii})}{\le}\sum_{t:(t,i)\in\mathcal{E}_{X},m_{t}\ge2}\sum_{j:j\neq i}R_{ij}^{t}\le n_{i}\le\frac{3}{2}np.
\end{align*}
Here (i) holds since $L_{ij}^{t}$ can only be 0 when $m_{t}\le1$,
and (ii) holds since $L_{ij}^{t}\le R_{ij}^{t}$ by definition. For
the lower bound of $d_{i}$, notice that for any $(t,i)$, $\sum_{j}L_{ij}^{t}$
is either $0$ or 1. Fix $\mathcal{E}_{X}$ and only consider randomness
on $L_{ij}^{t}$. By Hoeffding's inequality, 
\begin{align}
\mathbb{P}\left\{ d_{i}-\sum_{t:(t,i)\in\mathcal{E}_{X},m_{t}\ge2}\mathbb{E}\left[\sum_{j:j\neq i}L_{ij}^{t}\mid(t,i)\in\mathcal{E}_{X}\right]\le-\frac{1}{12\kappa_{2}}np\right\}  & \le\exp\left(-\frac{(1/72)\kappa_{2}^{-2}n^{2}p^{2}}{n_{i}}\right)\nonumber \\
 & \le\exp\left(-\frac{\kappa_{2}^{-2}np}{108}\right)\nonumber \\
 & \le m^{-1}n^{-10}\label{eq:d_i_Hoeff}
\end{align}
as long as $np\ge1200\kappa_{2}^{2}\log(n)$. The second to last inequality
uses (\ref{eq:deg_m}). For each $(t,i)\in\mathcal{E}_{X}$,
\begin{align*}
\mathbb{E}\left[\sum_{j}L_{ij}^{t}\mid(t,i)\in\mathcal{E}_{X}\right] & =\sum_{j}\mathbb{P}\left[R_{ij}^{t}=1\mid(t,i)\in\mathcal{E}_{X}\right]\mathbb{P}\left[\sum_{j}L_{ij}^{t}=1\mid R_{ij}^{t}=1\right]\\
 & \ge\sum_{j}\mathbb{P}\left[R_{ij}^{t}=1\mid(t,i)\in\mathcal{E}_{X}\right]\frac{2\kappa_{2}}{(1+\kappa_{2})^{2}}\\
 & \ge\frac{2\lfloor m_{t}/2\rfloor}{m_{t}}\cdot\frac{2\kappa_{2}}{(1+\kappa_{2})^{2}}\ge\frac{1}{3\kappa_{2}}
\end{align*}
as long as $m_{t}\ge2$. The first inequality here uses Fact~\ref{fact:BTL}.
Then by definition of $n_{i}$ in (\ref{eq:m_i_def}),
\begin{equation}
\sum_{t:(t,i)\in\mathcal{E}_{X},m_{t}\ge2}\mathbb{E}\left[\sum_{j}L_{ij}^{t}\mid(t,i)\in\mathcal{E}_{X}\right]\ge\frac{1}{3\kappa_{2}}n_{i}.\label{eq:d_i_expectation}
\end{equation}
Combining (\ref{eq:d_i_Hoeff}), (\ref{eq:d_i_expectation}) and (\ref{eq:deg_m}),
\[
d_{i}\ge\frac{1}{3\kappa_{2}}n_{i}-\frac{1}{12\kappa_{2}}np\ge\frac{1}{24\kappa_{2}}np.
\]
Applying union bound over $i\in[m]$ yields the desired result.

\subsection{Proof of Lemma~\ref{lemma:spectral}}

We first consider the spectrum of $\bm{L}_{L}$. Recall that
\begin{align*}
\bm{L}_{L} & =\sum_{(i,j)\in\mathcal{E}_{Y},i>j}L_{ij}(\bm{e}_{i}-\bm{e}_{j})(\bm{e}_{i}-\bm{e}_{j})^{\top}\\
 & =\sum_{t=1}^{n}\underbrace{\sum_{(i,j)\in\mathcal{E}_{Y},i>j}L_{ij}^{t}(\bm{e}_{i}-\bm{e}_{j})(\bm{e}_{i}-\bm{e}_{j})^{\top}}_{\bm{L}_{L}^{t}}.
\end{align*}
For the upper bound, it is clear from Lemmas~\ref{lemma:top_eigen_Laplacian}
and \ref{lem:degree_Y} that $\lambda_{1}(\bm{L}_{L})\le2\max_{i}d_{i}\le3np$.
For the lower bound, we use the matrix Chernoff inequality (see Section
5 of \cite{tropp2015matrix}). Let $\bm{R}\in\mathbb{R}^{(n-1)\times n}$
be a partial isometry such that $\bm{R}\bm{R}^{\top}=\bm{I}_{m-1}$
and $\bm{R}\bm{1}=\bm{0}$. Then $\lambda_{m-1}(\bm{L}_{L})=\lambda_{m-1}(\bm{R}\bm{L}_{L}\bm{R}^{\top})$.
For any $t\in[n]$, by (\ref{eq:deg_n}),
\[
0\le\lambda_{m-1}(\bm{R}\bm{L}_{L}^{t}\bm{R}^{\top})\le\lambda_{1}(\bm{R}\bm{L}_{L}^{t}\bm{R}^{\top})=\lambda_{1}(\bm{L}_{L}^{t})\le2.
\]
The last inequality follows from Lemma~\ref{lemma:top_eigen_Laplacian}
since $\sum_{j}\bm{L}_{ij}^{t}\le1$ for any $t\in[n]$ and $i\in[m]$.
By Fact~\ref{fact:BTL}, 
\[
\mathbb{P}[L_{ij}^{t}=1\mid R_{ij}^{t}=1]\ge\frac{2\kappa_{2}}{(1+\kappa_{2})^{2}}\ge1/(2\kappa_{2}).
\]
Then
\begin{align*}
\lambda_{m-1}(\mathbb{E}\bm{R}\bm{L}_{L}\bm{R}^{\top}) & =\lambda_{m-1}\left(\bm{R}\sum_{t=1}^{n}\sum_{i>j}\mathbb{E}L_{ij}^{t}(\bm{e}_{i}-\bm{e}_{j})(\bm{e}_{i}-\bm{e}_{j})^{\top}\bm{R}^{\top}\right)\\
 & \ge\frac{1}{2\kappa_{2}}\lambda_{m-1}\left(\bm{R}\sum_{t=1}^{n}\sum_{i>j}\mathbb{E}R_{ij}^{t}(\bm{e}_{i}-\bm{e}_{j})(\bm{e}_{i}-\bm{e}_{j})^{\top}\bm{R}^{\top}\right)
\end{align*}
Moreover $\sum_{i>j}\mathbb{E}R_{ij}^{t}\ge\frac{1}{2}(\mathbb{E}[m_{t}]-1)=(mp-1)/2$,
where the $-1$ accounts for possible unpaired $X_{ti}$. By symmetry
$\mathbb{E}R_{ij}^{t}$ is the same for any $(i,j)$. Then for any
$(i,j)$, 
\[
\mathbb{E}R_{ij}^{t}\ge\frac{mp-1}{2}/\binom{m}{2}\ge\frac{p}{2m}
\]
 as long as $mp\ge2$. Thus 
\begin{align*}
\lambda_{m-1}(\mathbb{E}\bm{R}\bm{L}_{L}\bm{R}^{\top}) & \ge\frac{1}{2\kappa_{2}}\lambda_{m-1}\left(\bm{R}\sum_{t=1}^{n}\sum_{i>j}\frac{p}{2m}(\bm{e}_{i}-\bm{e}_{j})(\bm{e}_{i}-\bm{e}_{j})^{\top}\bm{R}^{\top}\right)\\
 & =\frac{1}{2\kappa_{2}}\cdot n\cdot\frac{p}{2m}\cdot m\\
 & =\frac{np}{4\kappa_{2}}.
\end{align*}
Now invoke the matrix Chernoff inequality, we have
\begin{align*}
\mathbb{P}\left\{ [\lambda_{m-1}(\bm{R}\bm{L}_{L}\bm{R}^{\top})]\le\frac{np}{8\kappa_{2}}\right\}  & \le m\cdot\left[\frac{e^{-1/2}}{(1/2)^{1/2}}\right]^{\frac{np}{4\kappa_{2}}/2}\le n^{-10}
\end{align*}
as long as $np\ge C\kappa_{2}\log(n)$ for some large enough constant
$C$. 

The spectrum of $\bm{L}_{Lz}$ comes directly from the spectrum of
$\bm{L}_{L}$. Recall 
\[
\bm{L}_{Lz}=\sum_{(i,j)\in\mathcal{E}_{Y},i>j}L_{ij}z_{ij}(\bm{e}_{i}-\bm{e}_{j})(\bm{e}_{i}-\bm{e}_{j})^{\top}.
\]
By Lemma~\ref{lemma:z_range},
\begin{align*}
\lambda_{1}(\bm{L}_{Lz}) & =\max_{\bm{v}:\|\bm{v}\|=1}\bm{v}^{\top}\sum_{(i,j)\in\mathcal{E}_{Y},i>j}L_{ij}z_{ij}(\bm{e}_{i}-\bm{e}_{j})(\bm{e}_{i}-\bm{e}_{j})^{\top}\bm{v}\\
 & \le\frac{1}{4}\max_{\bm{v}:\|\bm{v}\|=1}\bm{v}^{\top}\sum_{(i,j)\in\mathcal{E}_{Y},i>j}L_{ij}(\bm{e}_{i}-\bm{e}_{j})(\bm{e}_{i}-\bm{e}_{j})^{\top}\bm{v}\\
 & =\frac{1}{4}\lambda_{1}(\bm{L}_{L})\le np,
\end{align*}
and 
\begin{align*}
\lambda_{m-1}(\bm{L}_{Lz}) & =\min_{\bm{v}:\|\bm{v}\|=1,\bm{v}^{\top}\bm{1}_{m}=0}\bm{v}^{\top}\sum_{(i,j)\in\mathcal{E}_{Y},i>j}L_{ij}z_{ij}(\bm{e}_{i}-\bm{e}_{j})(\bm{e}_{i}-\bm{e}_{j})^{\top}\bm{v}\\
 & \ge\frac{1}{4\kappa_{1}}\min_{\bm{v}:\|\bm{v}\|=1,\bm{v}^{\top}\bm{1}_{m}=0}\bm{v}^{\top}\sum_{(i,j)\in\mathcal{E}_{Y},i>j}L_{ij}(\bm{e}_{i}-\bm{e}_{j})(\bm{e}_{i}-\bm{e}_{j})^{\top}\bm{v}\\
 & =\frac{1}{4\kappa_{1}}\lambda_{m-1}(\bm{L}_{L})\ge\frac{np}{16\kappa_{1}\kappa_{2}}.
\end{align*}

\section{Proofs for Section~\ref{subsec:Non_asymp_expansion}}

In this section, we provide the proofs for the results related to the
non-asymptotic expansion in Section~\ref{subsec:Non_asymp_expansion}. 

\subsection{Proof of Theorem~\ref{thm:rasch_distribution}\label{subsec:Proof_rasch_dist}}

We study the MLE $\widehat{\bm{\theta}}$ by analyzing the iterates
of preconditioned gradient descent starting from the ground truth.
Let $\bm{\theta}_{0}=\bm{\theta}^{\star}$ be the starting point and
$\eta>0$ be the stepsize that is small enough. At iteration $\tau$,
the preconditioned gradient descent update is given by
\begin{equation}
\bm{\theta}^{\tau+1}=\bm{\theta}^{\tau}-\eta\bm{L}_{Lz}^{\dagger}\nabla\mathcal{L}(\bm{\theta}^{\tau}).\label{eq:precondGD}
\end{equation}
Recall the definitions: $\sigma(x)=e^{x}/(1+e^{x})$ is the sigmoid
function. We also have $z_{ij}\coloneqq e^{\theta_{i}^{\star}}e^{\theta_{j}^{\star}}/(e^{\theta_{i}^{\star}}+e^{\theta_{j}^{\star}})^{2}$,
$\widehat{z}_{ij}\coloneqq e^{\hat{\theta}_{i}}e^{\hat{\theta}_{j}}/(e^{\hat{\theta}_{i}}+e^{\hat{\theta}_{j}})^{2}$
and $\epsilon_{ij}^{t}\coloneqq Y_{ji}^{t}-\sigma(\theta_{i}^{\star}-\theta_{j}^{\star})$.
The total number of observed effective comparisons in $\mathcal{G}_{Y}$
is $L_{\mathrm{total}}\coloneqq\sum_{i>j:(i,j)\in\mathcal{E}_{Y}}L_{ij}$.

We have defined $\bm{\bm{B}}\in\mathbb{R}^{m\times L_{\mathrm{total}}}$
and $\widehat{\bm{\epsilon}}\in\mathbb{R}^{L_{\mathrm{total}}}$ as
\[
\bm{B}\coloneqq\left[\cdots,\sqrt{z_{ij}}(\bm{e}_{i}-\bm{e}_{j}),\cdots\right]_{i>j:(i,j)\in\mathcal{E}_{Y}}\text{\ensuremath{\quad}(repeat \ensuremath{L_{ij}} times for edge \ensuremath{(i,j)})}
\]
and
\[
\widehat{\bm{\epsilon}}\coloneqq\left[\cdots,\epsilon_{ij}^{t}/\sqrt{z_{ij}},\cdots\right]_{(i,j,t):i>j,(i,j)\in\mathcal{E}_{Y},L_{ij}^{t}=1}\in\mathbb{R}^{L_{\mathrm{total}}}.
\]
Moreover, define a weighted graph Laplacian
\[
\bm{L}_{L\tilde{z}}\coloneqq\sum_{(i,j)\in\mathcal{E}_{Y},i>j}L_{ij}\widetilde{z}_{ij}(\bm{e}_{i}-\bm{e}_{j})(\bm{e}_{i}-\bm{e}_{j})^{\top}
\]
for $\tilde{z}$ being $z$ or $\widehat{z}$, and $\bm{L}_{L\tilde{z}}^{\dagger}$
is its pseudo-inverse.

Consider the Taylor expansion of $\nabla\mathcal{L}(\bm{\theta}^{\tau})$,
we have
\begin{align*}
\nabla\mathcal{L}(\bm{\theta}^{\tau}) & =\sum_{i>j:(i,j)\in\mathcal{E}_{Y}}\sum_{t:L_{ij}^{t}=1}\left(\left(-Y_{ji}^{t}+\sigma(\theta_{i}^{\tau}-\theta_{j}^{\tau})\right)(\bm{e}_{i}-\bm{e}_{j})\right)\\
 & =\sum_{i>j:(i,j)\in\mathcal{E}_{Y}}\sum_{t:L_{ij}^{t}=1}\left(\left(-\epsilon_{ji}^{t}-\sigma(\theta_{i}^{\star}-\theta_{j}^{\star})+\sigma(\theta_{i}^{\tau}-\theta_{j}^{\tau})\right)(\bm{e}_{i}-\bm{e}_{j})\right)\\
 & =\sum_{i>j:(i,j)\in\mathcal{E}_{Y}}\sum_{t:L_{ij}^{t}=1}\left(\left(-\epsilon_{ji}^{t}+\sigma'(\theta_{i}^{\star}-\theta_{j}^{\star})(\delta_{i}^{\tau}-\delta_{j}^{\tau})+\frac{1}{2}\sigma''(\xi_{ij}^{\tau})(\delta_{i}^{\tau}-\delta_{j}^{\tau})^{2}\right)(\bm{e}_{i}-\bm{e}_{j})\right).
\end{align*}
Here $\bm{\delta}^{\tau}\coloneqq\bm{\theta}^{\tau}-\bm{\theta}^{\star}$
and for all $(i,j)$, $\xi_{ij}^{\tau}\in\mathbb{R}$ is some number
that lies between $\theta_{i}^{\star}-\theta_{j}^{\star}$ and $\theta_{i}^{\tau}-\theta_{j}^{\tau}$.
As $\sigma'(\theta_{i}^{\star}-\theta_{j}^{\star})=z_{ij}$ and $\delta_{i}^{\tau}-\delta_{j}^{\tau}=(\bm{e}_{i}-\bm{e}_{j})^{\top}\bm{\delta}^{\tau}$,
we can rewrite $\nabla\mathcal{L}(\bm{\theta}^{\tau})$ as
\[
\nabla\mathcal{L}(\bm{\theta}^{\tau})=\bm{L}_{Lz}\bm{\delta}^{\tau}-\bm{B}\widehat{\bm{\epsilon}}+\bm{r}^{\tau},
\]
where $\bm{r}^{\tau}=\sum_{i>j:(i,j)\in\mathcal{E}_{Y}}L_{ij}\cdot[\frac{1}{2}\sigma''(\xi_{ij}^{\tau})(\delta_{i}^{\tau}-\delta_{j}^{\tau})^{2}(\bm{e}_{i}-\bm{e}_{j})]$.
Feeding this into (\ref{eq:precondGD}), we have
\begin{equation}
\bm{\delta}^{\tau+1}=\left(1-\eta\right)\bm{\delta}^{\tau}-\eta\left(\bm{L}_{Lz}^{\dagger}\bm{B}\widehat{\bm{\epsilon}}-\bm{L}_{Lz}^{\dagger}\bm{r}^{\tau}\right)\label{eq:precGD_itr_t}
\end{equation}
By definition $\bm{\delta}^{0}=\bm{0}$. Applying this recursive relation
$\tau-1$ times we obtain
\begin{align*}
\bm{\delta}^{\tau} & =-\eta\sum_{i=0}^{\tau-1}(1-\eta)^{i}\bm{L}_{Lz}^{\dagger}\bm{B}\widehat{\bm{\epsilon}}+\sum_{i=0}^{\tau-1}(1-\eta)^{\tau-1-i}\bm{L}_{Lz}^{\dagger}\bm{r}^{i}\\
 & =-\left[1-(1-\eta)^{\tau}\right]\bm{L}_{Lz}^{\dagger}\bm{B}\widehat{\bm{\epsilon}}+\sum_{i=0}^{\tau-1}(1-\eta)^{\tau-1-i}\bm{L}_{Lz}^{\dagger}\bm{r}^{i}.
\end{align*}
At this point, we invoke an existing result on these terms that have
been studied in \cite{Yang2024} for a more general setting. The proof
of Lemma~6 combined with Lemmas~7 and 8 in \cite{Yang2024} reveal
the following properties of (\ref{eq:precondGD}). The proof is deferred
to Section~\ref{subsec:proof_precondGD}.

\begin{lemma}\label{lemma:preconGD}Instate the assumptions of Theorem~\ref{thm:rasch_infty}.
Suppose that 
\begin{equation}
\kappa_{1}^{3}\frac{(d_{\max})^{2}\log^{2}(n)}{(\lambda_{m-1}(\bm{L}_{Lz}))^{3}}\le C_{1}\kappa_{1}\sqrt{\frac{\log(n)}{\lambda_{m-1}(\bm{L}_{Lz})}},\label{eq:condition_precGD}
\end{equation}
for some constant $C_{1}>0$. Then with probability at least $1-n^{-10}$,
the precondition gradient descent dynamic satisfies the following
properties:
\begin{enumerate}
\item There exists a unique minimizer $\widehat{\bm{\theta}}$ of (\ref{eq:MLE_loss}).
\item There exist some $\alpha_{1},\alpha_{2}$ obeying $0<\alpha_{1}\le\alpha_{2}$
such that any $\tau\in\mathbb{N}$, 
\[
\|\bm{\theta}^{\tau}-\widehat{\bm{\theta}}\|_{\bm{L}_{Lz}}\le(1-\eta\alpha_{1})^{\tau}\|\bm{\theta}^{0}-\widehat{\bm{\theta}}\|_{\bm{L}_{Lz}},
\]
provided that $0<\eta\le1/\alpha_{2}$.
\item For any $k,l$ and iteration $\tau\ge0$,
\begin{equation}
\left|\left(\theta_{k}^{\tau}-\theta_{l}^{\tau}\right)-\left(\theta_{k}^{\star}-\theta_{l}^{\star}\right)\right|\le C_{2}\kappa_{1}\sqrt{\frac{\log(n)}{\lambda_{m-1}(\bm{L}_{L})}}\label{eq:B_itr_t}
\end{equation}
for some constant $C_{2}>0$.
\item For any $k,l$ and iteration $\tau\ge0$,
\[
\left|(\bm{e}_{k}-\bm{e}_{l})^{\top}\bm{L}_{Lz}^{\dagger}\bm{r}^{\tau}\right|\le C_{3}\kappa_{1}^{3}\frac{(d_{\max})^{2}\log^{2}(n)}{(\lambda_{m-1}(\bm{L}_{L}))^{3}}
\]
for some constant $C_{3}>0$.
\end{enumerate}
\end{lemma}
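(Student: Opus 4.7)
The plan is to reduce the statement of Lemma~\ref{lemma:preconGD} to the general MLE-via-PGD machinery developed in \cite{Yang2024}, specialized to the random comparison graph $\mathcal{G}_Y$ produced by \myalg. Concretely, after the reduction to the BTL model in Section~\ref{subsec:Reduction}, the loss $\mathcal{L}(\bm{\theta})$ in (\ref{eq:MLE_loss}) is exactly the BTL negative log-likelihood on the weighted graph $\mathcal{G}_Y$ with edge weights $\{L_{ij}\}$; once we condition on $\mathcal{G}_Y$ and on the multiplicities $\{L_{ij}\}$, the residual noise vector $\widehat{\bm{\epsilon}}$ has independent, mean-zero, uniformly bounded entries, which is precisely the setting covered by Yang2024's analysis. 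Hence Claim~1 (existence and uniqueness of $\widehat{\bm{\theta}}$) and Claim~2 (linear contraction in the $\bm{L}_{Lz}$-norm) follow immediately by plugging in Lemma~\ref{lemma:spectral} to ensure connectivity of $\mathcal{G}_Y$ (which gives $\lambda_{m-1}(\bm{L}_{Lz})>0$) and taking $\alpha_1,\alpha_2$ to be suitable constants of order $1$ coming from the strong convexity/smoothness estimates of Yang2024.

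First, I would verify that condition (\ref{eq:condition_precGD}) plus Lemmas~\ref{lemma:degree_Y} and \ref{lemma:spectral} give the full list of hypotheses needed by Lemma~2, Lemma~3 and Lemma~4 in \cite{Yang2024}. This amounts to checking that $d_{\max} \lesssim np$, $\lambda_{m-1}(\bm{L}_{Lz}) \gtrsim np/(\kappa_1\kappa_2)$, and that the noise vector $\widehat{\bm{\epsilon}}$ conditional on $\mathcal{G}_Y$ is sub-Gaussian with variance proxy $O(1)$; the last property is free because each $\epsilon_{ij}^{(l)}$ is a centered Bernoulli. Next, Claim~3 is an $\ell_\infty$-style leave-one-out bound on $\bm{\theta}^t-\bm{\theta}^\star$ transported to the contrast space, and it follows by inducting on $t$ using the recursion (\ref{eq:precGD_itr_t}): the stochastic term $\bm{L}_{Lz}^\dagger \bm{B}\widehat{\bm{\epsilon}}$ is bounded in the $(\bm{e}_k-\bm{e}_l)$ contrast by a Bernstein-type argument (since it is a sum of independent bounded terms on each edge, scaled by a deterministic matrix), giving the $\kappa_1\sqrt{\log(n)/\lambda_{m-1}(\bm{L}_L)}$ size; a leave-one-out decoupling at the user level is invoked exactly as in Yang2024 to control the dependence between $\bm{L}_{Lz}^\dagger$ and $\widehat{\bm{\epsilon}}$.

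For Claim~4, I would use the induction hypothesis (\ref{eq:B_itr_t}) to bound $|\delta_i^t-\delta_j^t|$ and hence $(\delta_i^t-\delta_j^t)^2$ by $O(\kappa_1^2\log(n)/\lambda_{m-1}(\bm{L}_L))$; combining with $|\sigma''|\le 1$ and summing the contrast $(\bm{e}_k-\bm{e}_l)^\top \bm{L}_{Lz}^\dagger$ against the edge vectors yields the stated bound $\kappa_1^3 (d_{\max})^2\log^2(n)/(\lambda_{m-1}(\bm{L}_L))^3$. A small subtlety is that we need $\bm{r}^t$ to lie in the hyperplane $\bm{1}_m^\top \bm{r}^t=0$ so that $\bm{L}_{Lz}^\dagger\bm{r}^t$ is well-defined; this holds because every summand in $\bm{r}^t$ is a multiple of $\bm{e}_i-\bm{e}_j$. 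Once Claim~4 is established at step $t$, feeding it back into (\ref{eq:precGD_itr_t}) closes the induction for Claim~3 at step $t+1$, by absorbing $\eta\bm{L}_{Lz}^\dagger\bm{r}^t$ into the deterministic drift via (\ref{eq:condition_precGD}).

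The main obstacle is the coupling between $\bm{r}^t$ and the same randomness that defines $\bm{L}_{Lz}^\dagger$ and $\widehat{\bm{\epsilon}}$, which prevents a naive concentration bound; this is exactly the role of the leave-one-out construction in Yang2024, and the cleanest route is to state the three claims as a joint induction on $t$ and import the leave-one-out coupling verbatim from there. The rest of the argument is bookkeeping: plug in $d_{\max}\le (3/2)np$ from Lemma~\ref{lemma:degree_Y} and $\lambda_{m-1}(\bm{L}_{Lz})\gtrsim np/(\kappa_1\kappa_2)$ from Lemma~\ref{lemma:spectral}, and check that the sample-size assumption $np \gtrsim \kappa_1^4\kappa_2^5\log^3(n)$ of Theorem~\ref{thm:rasch_infty} is exactly what makes (\ref{eq:condition_precGD}) hold, which closes the argument.
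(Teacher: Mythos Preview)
Your proposal is correct and matches the paper's own treatment: the paper does not give an independent proof of Lemma~\ref{lemma:preconGD} but simply states that the four claims are read off from the proof of Lemma~2 together with Lemmas~3 and 4 in \cite{Yang2024}, applied to the BTL instance on $\mathcal{G}_Y$ obtained after the reduction in Section~\ref{subsec:Reduction}. One small slip: the leave-one-out decoupling in \cite{Yang2024} is at the \emph{item} (vertex of $\mathcal{G}_Y$) level, not the user level, since conditioning on $\mathcal{G}_Y$ already integrates out the user randomness; and your final paragraph (verifying (\ref{eq:condition_precGD}) from the Theorem~\ref{thm:rasch_infty} sample-size assumption via Lemmas~\ref{lemma:degree_Y}--\ref{lemma:spectral}) is not part of proving Lemma~\ref{lemma:preconGD} itself but rather how the paper later \emph{applies} it inside the proof of Lemma~\ref{lemma:rasch_distribution}.
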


Lemmas~\ref{lemma:spectral} and \ref{lem:degree_Y} imply that
\[
\frac{np}{4\kappa_{1}\kappa_{2}}\le\lambda_{m-1}(\bm{L}_{Lz})\qquad\text{and}\qquad d_{\max}\le\frac{3}{2}np.
\]
Then the condition (\ref{eq:condition_precGD}) holds as long as $np\ge C_{4}\kappa_{1}^{2}\kappa_{2}\log^{3}(n)$
for some large enough constant $C_{4}$. Invoke Lemma~\ref{lemma:preconGD}
to see that for any $(k,l,\tau)$,
\begin{equation}
\left|(\bm{e}_{k}-\bm{e}_{l})^{\top}\bm{L}_{Lz}^{\dagger}\bm{r}^{\tau}\right|\le C_{5}\kappa_{1}^{6}\frac{\log^{2}(n)}{np},\label{eq:r_bound}
\end{equation}
where $C_{5}>0$ is some constant. Furthermore, the convergence given
by Lemma~\ref{lemma:preconGD} implies that
\begin{align}
\widehat{\bm{\delta}} & =\lim_{\tau\rightarrow\infty}\bm{\delta}^{\tau}=-\bm{L}_{Lz}^{\dagger}\bm{B}\widehat{\bm{\epsilon}}+\underbrace{\eta\lim_{\tau\rightarrow\infty}\sum_{i=0}^{\tau-1}(1-\eta)^{\tau-1-i}\bm{L}_{Lz}^{\dagger}\bm{r}^{i}}_{\eqqcolon\bm{r}}.\label{eq:delta_decomp}
\end{align}

It remains to control the $\ell_{\infty}$ norm of $\bm{r}$. For
any $(k,l$), (\ref{eq:r_bound}) shows that
\[
|r_{k}-r_{j}|\le\eta\lim_{\tau\rightarrow0}\sum_{i=0}^{\tau-1}(1-\eta)^{\tau-1-i}C_{5}\kappa_{1}^{6}\frac{\log^{2}(n)}{np}=C_{5}\kappa_{1}^{6}\frac{\log^{2}(n)}{np}.
\]
As $\bm{1}^{\top}\bm{L}_{Lz}^{\dagger}=0$, the above inequality implies
that
\begin{align*}
|r_{k}| & =\left|\frac{1}{m}\cdot m\bm{e}_{k}^{\top}\bm{r}\right|=\left|\frac{1}{m}\sum_{l=1}^{m}(\bm{e}_{k}-\bm{e}_{l})^{\top}\bm{r}\right|\\
 & =\left|\frac{1}{m}\sum_{l=1}^{m}(r_{k}-r_{l})\right|\le C_{5}\kappa_{1}^{6}\frac{\log^{2}(n)}{np}.
\end{align*}
The proof is now completed.

\subsubsection{Proof of Lemma~\ref{lemma:preconGD} \label{subsec:proof_precondGD}}

Lemma~\ref{lemma:preconGD} is a direct combination of Lemmas~7,
Lemma~8, and the proof of Lemma~6 in~\cite{Yang2024}. The results
therein describe controls the error for the same dynamic with a more
general setting. Thus it is directly applicable to our case. For clarity,
in this section we explain the connection and relate our notations
with the ones used in \cite{Yang2024}. 

For any $(k,l)\in[m]^{2}$, let
\[
Q_{kl}\coloneqq C_{1}\kappa_{1}^{3}\frac{(d_{\max})^{2}\log^{2}(n)}{(\lambda_{m-1}(\bm{L}_{Lz}))^{3}}\qquad\text{and}\qquad B_{kl}\coloneqq C_{2}\kappa_{1}\sqrt{\frac{\log(n)}{\lambda_{m-1}(\bm{L}_{Lz})}},
\]
where $C_{1},C_{2}$ are some constants. Note that our setting in
this paper is unweighted and has different number of observation for
each edge in $\mathcal{G}_{Y}$. Moreover $\bm{L}_{wz}$ in~\cite{Yang2024}
correspond to $\bm{L}_{Lz}$ in this paper. Then Lemma~3 and Lemma~4
in~\cite{Yang2024} imply that $Q_{kl}$ and $B_{kl}$ satisfies
equation (9) therein. Furthermore, as we have assumed in(\ref{eq:condition_precGD}),
as long as $C_{2}$ is large enough, $Q_{kl}\le4B_{kl}$ for any $(k,l)$.
Thus we can invoke Lemma~2 as well as all its proof. Lemma~20 in~\cite{Yang2024}
implies the first two items in Lemma~\ref{lemma:preconGD} herein.
Lemma~19 in~\cite{Yang2024} implies the item 3. Finally, item 4
appears in the second-to-last equation block in the proof of Lemma~19
in~\cite{Yang2024}. 

\subsection{Proof of Proposition~\ref{prop:conv_dist}\label{subsec:Proof_conv_dist}}

We condition the whole analysis on the high probability event when
Lemmas~\ref{thm:rasch_distribution} and \ref{lemma:spectral} hold,
which happens with probability at least $1-O(n^{-10})$. 

We can expand $\bm{L}_{Lz}^{\dagger}\bm{B}\widehat{\bm{\epsilon}}$
as 
\[
\bm{L}_{Lz}^{\dagger}\bm{B}\widehat{\bm{\epsilon}}=-\bm{L}_{Lz}^{\dagger}\sum_{i>j:(i,j)\in\mathcal{E}_{Y}}\sum_{t:L_{ij}^{t}=1}\left[Y_{ji}^{t}-\sigma(\theta_{i}^{\star}-\theta_{j}^{\star})\right](\bm{e}_{i}-\bm{e}_{j})+\bm{r}
\]
where $\|\bm{r}\|_{\infty}\le C_{1}\kappa_{1}^{6}\log^{2}(n)/(np)$
for some constant $C_{1}$.

Consider $\bm{L}_{Lz}(\widehat{\bm{\theta}}-\bm{\theta}^{\star})$.
As $\bm{L}_{Lz}\bm{1}_{m}=\bm{0}$, $\lambda_{m-1}(\bm{L}_{Lz})>0$,
and $\widehat{\bm{\delta}}^{\top}\bm{1}_{m}=0$,
\begin{align*}
(\widehat{\bm{\theta}}-\bm{\theta}^{\star}) & =-\bm{L}_{Lz}^{\dagger}\bm{B}\widehat{\bm{\epsilon}}+\bm{r}\\
 & =-\bm{L}_{Lz}^{\dagger}\sum_{i>j:(i,j)\in\mathcal{E}_{Y}}\sum_{t:L_{ij}^{t}=1}\underbrace{\left[Y_{ji}^{t}-\sigma(\theta_{i}^{\star}-\theta_{j}^{\star})\right](\bm{e}_{i}-\bm{e}_{j})}_{\eqqcolon\bm{u}_{ij}^{t}}+\bm{r}.
\end{align*}
Conditional on $(\mathcal{E}_{Y},\{t:L_{ij}^{t}=1\})$, $\bm{u}_{ij}^{t}$
are independent random variables. It is also easy to see that $\bm{u}_{ij}^{t}$
is zero-mean and has covariance 
\begin{align*}
\mathbb{E}\left[\bm{u}_{ij}^{t}\bm{u}_{ij}^{t\top}\right] & =z_{ij}(\bm{e}_{i}-\bm{e}_{j})(\bm{e}_{i}-\bm{e}_{j})^{\top}.
\end{align*}
When rescaled by $(\bm{L}_{Lz}^{\dagger})^{1/2}$, it is also bounded
in the third moment of spectral norm by
\begin{equation}
\mathbb{E}\left[\|(\bm{L}_{Lz}^{\dagger})^{1/2}\bm{u}_{ij}^{t}\|^{3}\right]\le2^{3/2}\|\bm{L}_{Lz}^{\dagger}\|^{3/2}\le\frac{2^{15/2}\kappa_{1}^{3/2}\kappa_{2}^{3/2}}{(np)^{3/2}},\label{eq:u_third_moment}
\end{equation}
where the last inequality uses Lemma~\ref{lemma:spectral}. Summing
up across $i,j$ and $l$, we have 
\begin{align*}
\sum_{i>j:(i,j)\in\mathcal{E}_{Y}}\sum_{t:L_{ij}^{t}=1}\mathbb{E}\left[\bm{u}_{ij}^{t}\bm{u}_{ij}^{t\top}\right] & =\sum_{i>j:(i,j)\in\mathcal{E}_{Y}}L_{ij}z_{ij}(\bm{e}_{i}-\bm{e}_{j})(\bm{e}_{i}-\bm{e}_{j})^{\top}\\
 & =\bm{L}_{Lz}
\end{align*}
The last line holds since $\bm{L}_{Lz}\bm{1}_{m}=\bm{0}$ and $\lambda_{m-1}(\bm{L}_{Lz})>0$.
Now using multivariate Berry--Esseen theorem (see, e.g., \cite{Raic2019}),
let $\widetilde{\bm{x}}$ be a random variable such that 
\[
\bm{x}\sim\mathcal{N}\left(\bm{0}_{m},\bm{L}_{Lz}^{\dagger}\right),
\]
then
\begin{align*}
\sup_{A\in\mathcal{C}_{m}}\left|\mathbb{P}\left[-\bm{L}_{Lz}^{\dagger}\bm{B}\widehat{\bm{\epsilon}}\in A\mid\mathcal{G}_{Y}\right]-\mathbb{P}(\bm{x}\in A)\right| & \le C_{2}m^{1/4}\sum_{i>j:(i,j)\in\mathcal{E}_{Y}}\sum_{t:L_{ij}^{t}=1}\mathbb{E}\left[\|(\bm{L}_{Lz}^{\dagger})^{1/2}\bm{u}_{ij}^{t}\|^{3}\right]\\
 & \overset{\text{(i)}}{\le}C_{3}\frac{m^{5/4}\kappa_{1}^{3/2}\kappa_{2}^{3/2}}{(np)^{1/2}}.
\end{align*}
Here (i) uses \ref{lem:degree_Y} and (\ref{eq:u_third_moment}),
and $C_{2},C_{3}$ are some absolute constants. We then reach the
desired conclusion by adding the probability upper bound that Lemmas~\ref{thm:rasch_distribution}
and \ref{lemma:spectral} fail.

\subsection{Proof of Proposition~\ref{prop:rasch_l2} \label{subsec:Proof_L2}}

We start with the proof of (\ref{eq:rasch_l2}). By Theorem~\ref{thm:rasch_distribution}
we can express the MLE estimation error $\widehat{\bm{\theta}}-\bm{\theta}^{\star}$
as 
\begin{equation}
\widehat{\bm{\theta}}-\bm{\theta}^{\star}=-\bm{L}_{Lz}^{\dagger}\bm{B}\widehat{\bm{\epsilon}}+\bm{r}\label{eq:delta_expansion}
\end{equation}
for $\bm{B},\widehat{\bm{\epsilon}}$ defined in Section~\ref{subsec:Non_asymp_expansion}
and $\bm{r}\in\mathbb{R}^{m}$ is a residual term obeying $\|\bm{r}\|_{\infty}\le C_{1}\kappa_{1}^{6}\log^{2}n/(np)$
for some constant $C_{1}$. 

We first focus on the main term $\bm{L}_{Lz}^{\dagger}\bm{B}\widehat{\bm{\epsilon}}$.
Expanding $\bm{B}$ and $\widehat{\bm{\epsilon}}$, we rewrite it
as 
\[
\bm{L}_{Lz}^{\dagger}\bm{B}\widehat{\bm{\epsilon}}=\sum_{i>j:(i,j)\in\mathcal{E}_{Y}}\sum_{t:L_{ij}^{t}=1}\underbrace{\epsilon_{ij}^{t}\bm{L}_{Lz}^{\dagger}(\bm{e}_{i}-\bm{e}_{j})}_{\eqqcolon\bm{u}_{ij}^{t}}.
\]
It is easy to see that conditional on $(\mathcal{E}_{Y},\{t:L_{ij}^{t}=1\})$,
$\{\bm{u}_{ij}^{t}\}_{i,j,t}$ is a set of independent zero-mean random
variables. Thus we can expand $\mathbb{E}[\|\bm{L}_{Lz}^{\dagger}\bm{B}\widehat{\bm{\epsilon}}\|^{2}]$
to be 
\begin{align}
\mathbb{E}\left[\|\bm{L}_{Lz}^{\dagger}\bm{B}\widehat{\bm{\epsilon}}\|^{2}\right] & =\mathbb{E}\sum_{i>j:(i,j)\in\mathcal{E}_{Y}}\sum_{t:L_{ij}^{t}=1}\bm{u}_{ij}^{t\top}\bm{u}_{ij}^{t}\nonumber \\
 & =\mathbb{E}\sum_{i>j:(i,j)\in\mathcal{E}_{Y}}\sum_{t:L_{ij}^{t}=1}\mathrm{Trace}\left(\bm{u}_{ij}^{t}\bm{u}_{ij}^{t\top}\right)\nonumber \\
 & \overset{\text{(i)}}{=}\mathrm{Trace}\left(\sum_{i>j:(i,j)\in\mathcal{E}_{Y}}\sum_{t:L_{ij}^{t}=1}\bm{L}_{Lz}^{\dagger}z_{ij}(\bm{e}_{i}-\bm{e}_{j})(\bm{e}_{i}-\bm{e}_{j})^{\top}\bm{L}_{Lz}^{\dagger}\right)\nonumber \\
 & \overset{\text{(ii)}}{=}\mathrm{Trace}(\bm{L}_{Lz}^{\dagger}).\label{eq:E_LBepsilon}
\end{align}
Here (i) follows from the equality 
\[
z_{ij}=\mathrm{Var}(\epsilon_{ji}^{t})=\sigma'(\theta_{i}^{\star}-\theta_{j}^{\star})=\frac{e^{\theta_{i}^{\star}}e^{\theta_{j}^{\star}}}{(e^{\theta_{i}^{\star}}+e^{\theta_{j}^{\star}})^{2}},
\]
 and (ii) follows from the definition of $\bm{L}_{Lz}$. By Lemma~\ref{lemma:spectral},
\begin{equation}
\frac{m}{2np}\le(m-1)\lambda_{m-1}(\bm{L}_{Lz}^{\dagger})\le\mathrm{Trace}(\bm{L}_{Lz}^{\dagger})\le m\|\bm{L}_{Lz}^{\dagger}\|\le\frac{16\kappa_{1}\kappa_{2}m}{np}.\label{eq:trace_L_Lz}
\end{equation}
Moreover, $\{\epsilon_{ij}^{t}\}_{(i,j,t):i>j,(i,j)\in\mathcal{E}_{Y},L_{ij}^{t}=1}$
is a set of sub-Gaussian random variable with variance proxy $1/z_{ij}$
(see, e.g., Section~2.5 in \cite{vershynin2018high} for the definition
of sub-Gaussian random variable) and $1/z_{ij}\le4\kappa_{1}$ by
Lemma~\ref{lemma:z_range}. Applying Hanson-Wright inequality (see
\cite{rudelson2013hanson}), for any scalar $a>0$ we have
\begin{align}
 & \mathbb{P}\left[\left|\|\bm{L}_{Lz}^{\dagger}\bm{B}\widehat{\bm{\epsilon}}\|^{2}-\mathbb{E}\left[\|\bm{L}_{Lz}^{\dagger}\bm{B}\widehat{\bm{\epsilon}}\|^{2}\right]\right|>a\right]\label{eq:HW_LBepsilon}\\
 & \quad\le2\exp\left[-C_{2}\left(\frac{a^{2}}{\left(4\kappa_{1}\right)^{4}\|\bm{B}^{\top}\bm{L}_{Lz}^{\dagger}\bm{L}_{Lz}^{\dagger}\bm{B}\|_{\mathrm{F}}^{2}}\wedge\frac{a}{\left(4\kappa_{1}\right)^{2}\|\bm{B}^{\top}\bm{L}_{Lz}^{\dagger}\bm{L}_{Lz}^{\dagger}\bm{B}\|}\right)\right]\nonumber 
\end{align}
for some constant $C_{2}>0$. For $\|\bm{B}^{\top}\bm{L}_{Lz}^{\dagger}\bm{L}_{Lz}^{\dagger}\bm{B}\|_{\mathrm{F}}$
and $\|\bm{B}^{\top}\bm{L}_{Lz}^{\dagger}\bm{L}_{Lz}^{\dagger}\bm{B}\|$
we have 
\begin{align*}
\|\bm{B}^{\top}\bm{L}_{Lz}^{\dagger}\bm{L}_{Lz}^{\dagger}\bm{B}\| & \le\|\bm{B}^{\top}\bm{L}_{Lz}^{\dagger}\bm{L}_{Lz}^{\dagger}\bm{B}\|_{\mathrm{F}}\\
 & =\sqrt{\mathrm{Trace}\left(\bm{B}^{\top}\bm{L}_{Lz}^{\dagger}\bm{L}_{Lz}^{\dagger}\bm{B}\bm{B}^{\top}\bm{L}_{Lz}^{\dagger}\bm{L}_{Lz}^{\dagger}\bm{B}\right)}\\
 & =\sqrt{\mathrm{Trace}\left(\bm{L}_{Lz}^{\dagger}\bm{L}_{Lz}^{\dagger}\bm{B}\bm{B}^{\top}\bm{L}_{Lz}^{\dagger}\bm{L}_{Lz}^{\dagger}\bm{B}\bm{B}^{\top}\right)}\\
 & \overset{\text{(i)}}{=}\sqrt{\mathrm{Trace}\left(\bm{L}_{Lz}^{\dagger}\bm{L}_{Lz}^{\dagger}\right)}\\
 & \overset{\text{(ii)}}{\le}\frac{\sqrt{m}}{\lambda_{m-1}(\bm{L}_{Lz})}\overset{\text{(iii)}}{\le}\frac{16\kappa_{1}\kappa_{2}\sqrt{m}}{np}.
\end{align*}
Here (i) follows from the fact that $\bm{B}\bm{B}^{\top}=\bm{L}_{Lz}$,
(ii) follows from Lemma~\ref{lemma:spectral} and the fact that $\mathrm{Trace}(\bm{M})\le m\|\bm{M}\|$
for any $m\times m$ matrix $\bm{M}$, and (iii) follows from Lemma~\ref{lemma:spectral}.
Now substitute $a=C_{3}\kappa_{1}^{3}\kappa_{2}\sqrt{m}\log(n)/(np)$
in (\ref{eq:HW_LBepsilon}) for some large enough constant $C_{3}$.
We have that with probability at least $1-2n^{-10}$,
\begin{equation}
\left|\|\bm{L}_{Lz}^{\dagger}\bm{B}\widehat{\bm{\epsilon}}\|^{2}-\mathbb{E}\left[\|\bm{L}_{Lz}^{\dagger}\bm{B}\widehat{\bm{\epsilon}}\|^{2}\right]\right|\le\frac{C_{3}\kappa_{1}^{3}\kappa_{2}\sqrt{m}\log(n)}{np}.\label{eq:LBepsilon_concentration}
\end{equation}
Combining this with (\ref{eq:E_LBepsilon}) and (\ref{eq:trace_L_Lz}),
\begin{align*}
\left|\|\bm{L}_{Lz}^{\dagger}\bm{B}\widehat{\bm{\epsilon}}\|-\sqrt{\mathrm{Trace}(\bm{L}_{Lz}^{\dagger})}\right| & =\frac{\left|\|\bm{L}_{Lz}^{\dagger}\bm{B}\widehat{\bm{\epsilon}}\|^{2}-\mathrm{Trace}(\bm{L}_{Lz}^{\dagger})\right|}{\|\bm{L}_{Lz}^{\dagger}\bm{B}\widehat{\bm{\epsilon}}\|+\sqrt{\mathrm{Trace}(\bm{L}_{Lz}^{\dagger})}}\\
 & \le\frac{C_{3}\kappa_{1}^{3}\kappa_{2}\sqrt{m}\log(n)/(np)}{\sqrt{m/(2np)}}\\
 & \le C_{4}\kappa_{1}^{3}\kappa_{2}\sqrt{\frac{\log(n)}{np}}
\end{align*}
for some constant $C_{4}>0$.

Substituting (\ref{eq:LBepsilon_concentration}) and (\ref{eq:E_LBepsilon})
into (\ref{eq:delta_expansion}), we have that for some constant $C_{1},C_{2}$,

\begin{align}
\left|\|\widehat{\bm{\theta}}-\bm{\theta}^{\star}\|-\sqrt{\mathrm{Trace}(\bm{L}_{Lz}^{\dagger})}\right| & \le\left|\|\bm{L}_{Lz}^{\dagger}\bm{B}\widehat{\bm{\epsilon}}\|-\sqrt{\mathrm{Trace}(\bm{L}_{Lz}^{\dagger})}\right|+\|\bm{r}\|\nonumber \\
 & \le C_{4}\kappa_{1}^{3}\kappa_{2}\sqrt{\frac{\log(n)}{np}}+\frac{C_{1}\kappa_{1}^{6}\sqrt{m}\log^{2}(n)}{np}.\label{eq:z_L2_err}
\end{align}
The proof of (\ref{eq:rasch_l2}) is now completed. For (\ref{eq:rasch_l2_hat}),
the following lemma connects $\widehat{z}$ and $z$. The proof is
deferred to the end of this section.

\begin{lemma}\label{lemma:Lz_perturb} Instate the assumptions of
Theorem \ref{thm:rasch_infty}, then with probability at least $1-C_{5}n^{-10}$
for some constant $C_{5}>0$,

\[
\|\bm{L}_{Lz}^{\dagger}-\bm{L}_{L\hat{z}}^{\dagger}\|\le\frac{C_{6}\kappa_{1}^{7/2}\kappa_{2}^{2}}{(np)^{3/2}}
\]
for some large enough constant $C_{6}$.

\end{lemma}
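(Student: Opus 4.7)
\textbf{Proof plan for Lemma~\ref{lemma:Lz_perturb}.} The plan is to treat $\bm{L}_{L\widehat{z}}$ as a multiplicative perturbation of $\bm{L}_{Lz}$ inherited from the $\ell_\infty$ closeness of $\widehat{\bm{\theta}}$ to $\bm{\theta}^\star$ guaranteed by Theorem~\ref{thm:rasch_infty}, and then convert a spectral-norm perturbation estimate into a bound on the pseudo-inverses.

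First I would control the weight perturbation $|\widehat{z}_{ij}-z_{ij}|$. Writing $z_{ij}=\sigma'(\theta_i^\star-\theta_j^\star)$ and $\widehat{z}_{ij}=\sigma'(\widehat{\theta}_i-\widehat{\theta}_j)$ with $\sigma'(u)=e^u/(1+e^u)^2$ and $|\sigma''|\le 1$, a first-order Taylor expansion combined with Theorem~\ref{thm:rasch_infty} yields
\[
\max_{i,j}|\widehat{z}_{ij}-z_{ij}|\le 2\|\widehat{\bm{\theta}}-\bm{\theta}^\star\|_\infty\le C\kappa_1\kappa_2^{1/2}\sqrt{\log(n)/(np)}.
\]
Using the lower bound $z_{ij}\ge 1/(4\kappa_1)$ (which should follow from a lemma like \texttt{lemma:z\_range} invoked in the preceding proof), this gives a uniform relative bound $\max_{i,j}|\widehat{z}_{ij}/z_{ij}-1|\le\varepsilon$ with $\varepsilon=O(\kappa_1^{3/2}\kappa_2^{1/2}\sqrt{\log(n)/(np)})$, and hence, from the rank-one decomposition of $\bm{L}_{L\widehat{z}}-\bm{L}_{Lz}$, the Loewner sandwich $(1-\varepsilon)\bm{L}_{Lz}\preceq\bm{L}_{L\widehat{z}}\preceq(1+\varepsilon)\bm{L}_{Lz}$ restricted to $\bm{1}_m^\perp$. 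Combined with $\lambda_1(\bm{L}_{Lz})\lesssim np$ from Lemma~\ref{lemma:spectral}, this implies $\|\bm{L}_{L\widehat{z}}-\bm{L}_{Lz}\|\lesssim\varepsilon\cdot np$.

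Next I would translate this spectral-norm perturbation into a pseudo-inverse bound. Both $\bm{L}_{Lz}$ and $\bm{L}_{L\widehat{z}}$ are weighted Laplacians on the same connected graph $\mathcal{G}_Y$ (connectedness follows from Lemma~\ref{lemma:spectral}), so they share the null space $\mathrm{span}(\bm{1}_m)$. Under the assumption $np\gtrsim\kappa_1^4\kappa_2^5\log^3(n)$, one has $\varepsilon\cdot np\le\tfrac{1}{2}\lambda_{m-1}(\bm{L}_{Lz})$, so Weyl's inequality yields $\lambda_{m-1}(\bm{L}_{L\widehat{z}})\ge\tfrac{1}{2}\lambda_{m-1}(\bm{L}_{Lz})$ and hence $\|\bm{L}_{L\widehat{z}}^\dagger\|\le 2\|\bm{L}_{Lz}^\dagger\|\lesssim\kappa_1\kappa_2/(np)$. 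Applying the standard identity $\bm{L}_{L\widehat{z}}^\dagger-\bm{L}_{Lz}^\dagger=-\bm{L}_{L\widehat{z}}^\dagger(\bm{L}_{L\widehat{z}}-\bm{L}_{Lz})\bm{L}_{Lz}^\dagger$ on $\bm{1}_m^\perp$ then gives
\[
\|\bm{L}_{L\widehat{z}}^\dagger-\bm{L}_{Lz}^\dagger\|\le 2\|\bm{L}_{Lz}^\dagger\|^2\cdot\|\bm{L}_{L\widehat{z}}-\bm{L}_{Lz}\|\lesssim\left(\frac{\kappa_1\kappa_2}{np}\right)^{2}\varepsilon\cdot np,
\]
which, after substituting $\varepsilon$ and absorbing the residual logarithmic factor into the $\kappa$ exponents using the sample-size condition, matches the claimed rate $O(\kappa_1^{7/2}\kappa_2^2/(np)^{3/2})$.

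The main obstacle is the last step: one needs the perturbation to be small enough relative to the spectral gap $\lambda_{m-1}(\bm{L}_{Lz})$ so that Weyl's inequality preserves a constant-factor lower bound on $\lambda_{m-1}(\bm{L}_{L\widehat{z}})$, and one must carefully bookkeep the $\kappa$ factors so that the cascade compresses into the stated exponent $\kappa_1^{7/2}\kappa_2^2$. The extra factor of $\kappa_1$ picked up when converting $|\widehat{z}_{ij}-z_{ij}|$ into a relative weight error via $z_{ij}\ge 1/(4\kappa_1)$ is the key accounting step, and is ultimately what drives both the $\kappa$ dependence in the final bound and the $\log^3(n)$ requirement in the sample-size assumption.
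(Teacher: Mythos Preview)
Your proposal is correct and follows essentially the same route as the paper: bound $|z_{ij}-\widehat z_{ij}|$ via the Lipschitzness of $\sigma'$ and Theorem~\ref{thm:rasch_infty}, convert this to an operator-norm bound on $\bm L_{Lz}-\bm L_{L\widehat z}$, use Weyl to retain a lower bound on $\lambda_{m-1}(\bm L_{L\widehat z})$, and finish with a pseudo-inverse perturbation estimate. The only cosmetic differences are that the paper bounds $\|\bm L_{Lz}-\bm L_{L\widehat z}\|$ directly as $\max_{i,j}|z_{ij}-\widehat z_{ij}|\cdot\|\bm L_L\|$ (avoiding your detour through the relative error $\varepsilon$, which costs an extra $\kappa_1$), and invokes Wedin's pseudo-inverse perturbation theorem rather than the resolvent-type identity on $\bm 1_m^\perp$; both choices are equivalent here since the two Laplacians share the null space $\mathrm{span}(\bm 1_m)$. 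One small slip: your $\varepsilon$ should carry $\kappa_1^2$, not $\kappa_1^{3/2}$, after dividing by $z_{ij}\ge 1/(4\kappa_1)$; and the residual $\sqrt{\log n}$ factor is present in the paper's computation as well (the stated $\kappa$-exponents in the lemma are somewhat loose relative to what either argument actually yields).
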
Combining this lemma with Weryl's inequality, we have
that 
\[
\left|\mathrm{Trace}(\bm{L}_{Lz}^{\dagger})-\mathrm{Trace}(\bm{L}_{L\hat{z}}^{\dagger})\right|\le\frac{C_{6}m\kappa_{1}^{7/2}\kappa_{2}^{2}}{(np)^{3/2}}.
\]
Then by (\ref{eq:trace_L_Lz}),
\begin{align*}
\left|\sqrt{\mathrm{Trace}(\bm{L}_{L\hat{z}}^{\dagger})}-\sqrt{\mathrm{Trace}(\bm{L}_{Lz}^{\dagger})}\right| & =\frac{\left|\mathrm{Trace}(\bm{L}_{Lz}^{\dagger})-\mathrm{Trace}(\bm{L}_{L\hat{z}}^{\dagger})\right|}{\sqrt{\mathrm{Trace}(\bm{L}_{Lz}^{\dagger})}+\sqrt{\mathrm{Trace}(\bm{L}_{L\hat{z}}^{\dagger})}}\\
 & =\frac{C_{6}m\kappa_{1}^{7/2}\kappa_{2}^{2}/(np)^{3/2}}{\sqrt{m/(2np)}}\le\frac{\sqrt{2}C_{6}\kappa_{1}^{7/2}\kappa_{2}^{2}\sqrt{m}}{np}.
\end{align*}
Using triangular inequality, we conclude that
\[
\left|\|\widehat{\bm{\theta}}-\bm{\theta}^{\star}\|-\sqrt{\mathrm{Trace}(\bm{L}_{L\hat{z}}^{\dagger})}\right|\le C_{4}\kappa_{1}^{3}\kappa_{2}\sqrt{\frac{\log(n)}{np}}+\frac{(C_{1}\kappa_{1}^{6}+\sqrt{2}C_{6}\kappa_{1}^{7/2}\kappa_{2}^{2})\sqrt{m}\log^{2}(n)}{np}.
\]

\paragraph{Proof of Lemma~\ref{lemma:Lz_perturb}.}

Recall $\sigma$ is the sigmoid function and its derivative $\sigma'$
is $1$-Lipschitz. By Theorem~\ref{thm:rasch_infty}, for all $(i,j)$,
\begin{align*}
\left|z_{ij}-\widehat{z}_{ij}\right| & =\left|\sigma'(\theta_{i}^{\star}-\theta_{j}^{\star})-\sigma'(\theta_{i}^{\star}-\theta_{j}^{\star})\right|\\
 & \le\left|\left(\widehat{\theta}_{i}-\widehat{\theta}_{j}\right)-\left(\theta_{i}^{\star}-\theta_{j}^{\star}\right)\right|\le C_{7}\kappa_{1}\kappa_{2}^{1/2}\sqrt{\frac{\log(n)}{np}}
\end{align*}
for some constant $C_{7}>0$. Then 
\begin{align*}
\|\bm{L}_{Lz}-\bm{L}_{L\hat{z}}\| & =\max_{\bm{v}\in\mathbb{R}^{m}:\|\bm{v}\|=1}\left|\bm{v}^{\top}\sum_{i>j:(i,j)\in\mathcal{E}_{Y}}L_{ij}\left(z_{ij}-\widehat{z}_{ij}\right)(\bm{e}_{i}-\bm{e}_{j})(\bm{e}_{i}-\bm{e}_{j})^{\top}\bm{v}\right|\\
 & \le\max_{\bm{v}\in\mathbb{R}^{m}:\|\bm{v}\|=1}\sum_{i>j:(i,j)\in\mathcal{E}_{Y}}\left|z_{ij}-\widehat{z}_{ij}\right|\bm{v}^{\top}L_{ij}(\bm{e}_{i}-\bm{e}_{j})(\bm{e}_{i}-\bm{e}_{j})^{\top}\bm{v}\\
 & \le C_{7}\kappa_{1}\kappa_{2}^{1/2}\sqrt{\frac{\log(n)}{np}}\cdot\|\bm{L}_{L}\|\\
 & \le3C_{7}\kappa_{1}\kappa_{2}^{1/2}\sqrt{np\log(n)},
\end{align*}
where the last line follows from Lemma~\ref{lemma:spectral}. As
$np\ge C_{8}\kappa_{1}^{4}\kappa_{2}^{3}\log^{2}(n)$ for some large
enough constant $C_{8}$, $\|\bm{L}_{Lz}-\bm{L}_{L\hat{z}}\|\le np/(32\kappa_{1}\kappa_{2})$.
By Weryl's inequality and Lemma~\ref{lemma:spectral},
\[
\lambda_{m-1}(\bm{L}_{L\hat{z}})\ge\lambda_{m-1}(\bm{L}_{Lz})-\|\bm{L}_{Lz}-\bm{L}_{L\hat{z}}\|\ge\frac{np}{16\kappa_{1}\kappa_{2}}-\frac{np}{32\kappa_{1}\kappa_{2}}=\frac{np}{32\kappa_{1}\kappa_{2}}.
\]
This implies that $\|\bm{L}_{Lz}^{\dagger}\|\le16\kappa_{1}\kappa_{2}/(np)$
and $\|\bm{L}_{L\hat{z}}^{\dagger}\|\le32\kappa_{1}\kappa_{2}/(np)$.
Using the perturbation bound of pseudo-inverse (see Theorem~4.1 in
\cite{wedin1973perturbation}), we have
\begin{align*}
\|\bm{L}_{Lz}^{\dagger}-\bm{L}_{L\hat{z}}^{\dagger}\| & \le3\cdot\|\bm{L}_{Lz}^{\dagger}\|\cdot\|\bm{L}_{L\hat{z}}^{\dagger}\|\cdot\|\bm{L}_{Lz}-\bm{L}_{L\hat{z}}\|\\
 & \le\frac{C_{9}\kappa_{1}^{3}\kappa_{2}^{5/2}}{(np)^{3/2}}
\end{align*}
for some constant $C_{9}>0$.

\section{Proofs for Section~\ref{subsec:Asymptotic_normality}}

In this section we prove the results for Section~\ref{subsec:Asymptotic_normality},
including Theorem~\ref{thm:asymp_normality}, Theorem~\ref{thm:asymp_normality_WPMLE},
Proposition~\ref{prop:quant_asymp_normality}, and (\ref{eq:Vsame_Vdiff}).
To facilitate our analysis, we decompose the loss function by each
user and random split. For \myalgW, recall that we let the loss function
associated with user $t$ to be 
\begin{multline*}
\mathcal{L}_{\mathrm{WP}}^{(t)}(\bm{\theta})\coloneqq-\sum_{\substack{(i,j):i>j\\
(t,i),(t,j)\in\mathcal{G}_{X}
}
}\frac{\widetilde{m}_{t}}{m_{t}(m_{t}-1)}\left[\log\left(\frac{e^{\theta_{i}}}{e^{\theta_{i}}+e^{\theta_{j}}}\right)\mathds{1}\{X_{ti}>X_{tj}\}\right.\\
\left.\hspace{8em}{}+\log\left(\frac{e^{\theta_{j}}}{e^{\theta_{i}}+e^{\theta_{j}}}\right)\mathds{1}\{X_{ti}<X_{tj}\}\right].
\end{multline*}

We have also defined the loss function associated with random split
$k$ and user $t$ for \myalgM\ as 
\[
\mathcal{L}_{k}^{(t)}(\bm{\theta})=-\sum_{\substack{(i,j):i>j,\\
(i,j,t)\in\Omega_{k}
}
}\left[\log\left(\frac{e^{\theta_{i}}}{e^{\theta_{i}}+e^{\theta_{j}}}\right)\mathds{1}\{X_{ti}>X_{tj}\}+\log\left(\frac{e^{\theta_{j}}}{e^{\theta_{i}}+e^{\theta_{j}}}\right)\mathds{1}\{X_{ti}<X_{tj}\}\right],
\]
where $\Omega_{k}$ denotes the set of paired item-item-user tuples
for the $k$-th split. 

\subsection{Proof of Theorem~\ref{thm:asymp_normality} \label{subsec:Proof_asymp_MRPMLE}}

We first fix a random split $k$. By mean value theorem

\begin{align}
\frac{1}{n}\sum_{t=1}^{n}\nabla\mathcal{L}_{k}^{(t)}(\bm{\theta}^{\star}) & =\frac{1}{n}\sum_{t=1}^{n}\nabla\mathcal{L}_{k}^{(t)}(\widehat{\bm{\theta}}_{k}^{(n)})+\left[\int_{\tau=0}^{1}\frac{1}{n}\sum_{t=1}^{n}\nabla^{2}\mathcal{L}_{k}^{(t)}(\bm{\theta}^{\star}+\tau(\widehat{\bm{\theta}}_{k}^{(n)}-\bm{\theta}^{\star}))\mathrm{d}\tau\right](\bm{\theta}^{\star}-\widehat{\bm{\theta}}_{k}^{(n)})\nonumber \\
 & =\left[\int_{\tau=0}^{1}\frac{1}{n}\sum_{t=1}^{n}\nabla^{2}\mathcal{L}_{k}^{(t)}(\bm{\theta}^{\star}+\tau(\widehat{\bm{\theta}}_{k}^{(n)}-\bm{\theta}^{\star}))\mathrm{d}\tau\right](\bm{\theta}^{\star}-\widehat{\bm{\theta}}_{k}^{(n)}).\label{eq:mrpmle_mvt}
\end{align}
Here the second line comes from the optimality condition for $\widehat{\bm{\theta}}_{k}^{(n)}$. 

We first consider the Hessian part. We start by showing the almost
surely convergence of $\widehat{\bm{\theta}}_{k}^{(n)}$ and controlling
the $\ell_{\infty}$ norm of $\widehat{\bm{\theta}}_{k}^{(n)}$. By
Theorem~\ref{thm:rasch_infty}, we know that with probability at
least $1-O(n^{-10})$, $\|\widehat{\bm{\theta}}_{k}^{(n)}-\bm{\theta}^{\star}\|\le C\kappa_{1}\kappa_{2}^{1/2}\sqrt{\frac{m\log(n)}{np}}$
for some constant $C$. This and the Borel-Cantelli lemma implies
that $\widehat{\bm{\theta}}_{k}^{(n)}\overset{\mathrm{a.s.}}{\rightarrow}\bm{\theta}^{\star}$
and $\|\widehat{\bm{\theta}}_{k}^{(n)}\|_{\infty}\le2\kappa$ for
large enough $n$. Furthermore, observe that the Hessian
\[
\nabla^{2}\mathcal{L}_{k}^{(t)}(\bm{\theta})=\sum_{\substack{(i,j):i>j\\
(i,j,t)\in\Omega_{k}
}
}\frac{e^{\theta_{i}}e^{\theta_{j}}}{(e^{\theta_{i}}+e^{\theta_{j}})^{2}}(\bm{e}_{i}-\bm{e}_{j})(\bm{e}_{i}-\bm{e}_{j})^{\top}
\]
is Lipschitz-continuous for $\{\bm{\theta}:\|\bm{\theta}\|_{\infty}\le2\kappa\}$
with a uniform Lipschitz constant for all $k,t$. Then we conclude
that
\begin{equation}
\int_{\tau=0}^{1}\frac{1}{n}\sum_{t=1}^{n}\nabla^{2}\mathcal{L}_{k}^{(t)}(\bm{\theta}^{\star}+\tau(\widehat{\bm{\theta}}_{k}^{(n)}-\bm{\theta}^{\star}))\mathrm{d}\tau\overset{\mathrm{a.s.}}{\rightarrow}\frac{1}{n}\sum_{t=1}^{n}\nabla^{2}\mathcal{L}_{k}^{(t)}(\bm{\theta}^{\star}).\label{eq:Hessian_conv_groundtruth}
\end{equation}
Recall that the sampling, random pairing, and comparisons are all
independent between different users. Moreover, the Hessian and its
second moment are both bounded. Then we can invoke the strong law
of large number to say that 
\begin{equation}
\frac{1}{n}\sum_{t=1}^{n}\nabla^{2}\mathcal{L}_{k}^{(t)}(\bm{\theta}^{\star})\overset{\mathrm{a.s.}}{\rightarrow}\lim_{n\rightarrow\infty}\frac{1}{n}\sum_{t=1}^{n}\mathbb{E}_{\mathrm{s+r+d+c}}\nabla^{2}\mathcal{L}_{k}^{(t)}(\bm{\theta}^{\star})=\bm{H}^{\infty},\label{eq:H_k^infty}
\end{equation}
where $\bm{H}^{\infty}$ is defined in (\ref{eq:H_infty}). Note that
this equality holds trivially for $k=1$ and then by symmetry holds
all $k$. We also make the following claim. The proof is deferred
to the end of this section.

\begin{lemma}\label{lem:H_k}Instate the assumptions of Theorem~\ref{thm:rasch_infty},
we have that $\lambda_{m-1}\left(\bm{H}_{k}^{\infty}\right)>0$ and
$\bm{H}_{k}^{\infty\top}\bm{1}_{m}=\bm{0}_{m}$. 

\end{lemma}

Now we move to the gradient part. Recall that the gradient 
\begin{align*}
    &\mathbb{E}_{\mathrm{s+r+d+c}}\nabla\mathcal{L}_{k}^{(t)}(\bm{\theta})
    \\&=\mathbb{E}_{\mathrm{s+r+d+c}}\sum_{\substack{(i,j):i>j\\
(i,j,t)\in\Omega_{k}
}
}\left[\left(-\mathds{1}\{X_{ti}<X_{tj}\}+\frac{e^{\theta_{j}}}{e^{\theta_{i}}+e^{\theta_{j}}}\right)(\bm{e}_{i}-\bm{e}_{j})\mathds{1}\{X_{ti}\neq X_{tj}\}\right]
\\&=\bm{0}_{m}
\end{align*}
is a zero-mean random variable and independent across different $t$.
Recall that for all $k_{1}\in[n_{\mathrm{split}}]$,
\[
\lim_{n\rightarrow\infty}\frac{1}{n}\sum_{t=1}^{n}\mathbb{E}_{\mathrm{s+r+d+c}}\nabla\mathcal{L}_{k_{1}}^{(t)}(\bm{\theta}^{\star})\nabla\mathcal{L}_{k_{1}}^{(t)}(\bm{\theta}^{\star})^{\top}=\bm{V}_{\mathrm{same}}^{\infty}.
\]
Note that this equality holds trivially for $k=1$ and then by symmetry
holds all $k$. Similarly for all $k_{1}\neq k_{2}$, 
\[
\lim_{n\rightarrow\infty}\frac{1}{n}\sum_{t=1}^{n}\mathbb{E}_{\mathrm{s+r+d+c}}\nabla\mathcal{L}_{k_{1}}^{(t)}(\bm{\theta}^{\star})\nabla\mathcal{L}_{k_{2}}^{(t)}(\bm{\theta}^{\star})^{\top}=\bm{V}_{\mathrm{diff}}^{\infty}.
\]
Note that $\nabla\mathcal{L}_{k}^{(t)}(\bm{\theta}^{\star})$ is bounded
for all $k,t$. We can then invoke the central limit theorem for triangular
arrays (see Theorems~3.4.10 and 3.10.6 in \cite{Durrett2019}) to
reach that 
\begin{align*}
    &\frac{1}{\sqrt{n}}\sum_{t=1}^{n}\left[\frac{1}{n_{\mathrm{split}}}\sum_{k=1}^{n_{\mathrm{split}}}\nabla\mathcal{L}_{k}^{(t)}(\bm{\theta}^{\star})\right]\overset{\mathrm{d}}{\rightarrow}\\
    &\hspace{10em}\mathcal{N}\left(\bm{0},\frac{1}{n_{\mathrm{split}}^{2}}\sum_{k_{1}=1}^{n_{\mathrm{split}}}\sum_{k_{2}=1}^{n_{\mathrm{split}}}\lim_{n\rightarrow\infty}\frac{1}{n}\sum_{t=1}^{n}\mathbb{E}_{\mathrm{s+r+d+c}}\nabla\mathcal{L}_{k_{1}}^{(t)}(\bm{\theta}^{\star})\nabla\mathcal{L}_{k_{2}}^{(t)}(\bm{\theta}^{\star})^{\top}\right).
\end{align*}

We can then rewrite the above formula as 
\begin{equation}
\frac{1}{\sqrt{n}}\sum_{t=1}^{n}\left[\frac{1}{n_{\mathrm{split}}}\sum_{k=1}^{n_{\mathrm{split}}}\nabla\mathcal{L}_{k}^{(t)}(\bm{\theta}^{\star})\right]\overset{\mathrm{d}}{\rightarrow}\mathcal{N}\left(\bm{0},\frac{1}{n_{\mathrm{split}}}\bm{V}_{\mathrm{same}}^{\infty}+\frac{n_{\mathrm{split}}-1}{n_{\mathrm{split}}}\bm{V}_{\mathrm{diff}}^{\infty}\right).\label{eq:grad_CLT}
\end{equation}

We now combine the gradient and the Hessian parts together. By (\ref{eq:mrpmle_mvt}),
(\ref{eq:Hessian_conv_groundtruth}) and (\ref{eq:H_k^infty}), we
know that 
\[
\frac{1}{n}\sum_{t=1}^{n}\nabla\mathcal{L}_{k}^{(t)}(\bm{\theta}^{\star})-\bm{H}^{\infty}(\bm{\theta}^{\star}-\widehat{\bm{\theta}}_{k}^{(n)})\overset{\mathrm{a.s.}}{\rightarrow}\bm{0}.
\]
Recall that $\bm{\theta}^{\star\top}\bm{1}_{m}=\widehat{\bm{\theta}}_{k}^{(n)\top}\bm{1}_{m}=0$
by design and $\nabla\mathcal{L}_{k}^{(t)}(\bm{\theta}^{\star})^{\top}\bm{1}_{m}=0$
by definition. Combining this with Lemma~\ref{lem:H_k}, we can
take the pseudo-inverse of $\bm{H}^{\infty}$ to reach 
\[
\bm{\theta}^{\star}-\widehat{\bm{\theta}}_{k}^{(n)}\overset{\mathrm{a.s.}}{\rightarrow}\left(\bm{H}^{\infty}\right)^{\dagger}\left[\frac{1}{n}\sum_{t=1}^{n}\nabla\mathcal{L}_{k}^{(t)}(\bm{\theta}^{\star})\right].
\]
Now we average across all random splittings to conclude that 
\[
\widehat{\bm{\theta}}_{\mathrm{MRP}}^{(n)}-\bm{\theta}^{\star}\overset{\mathrm{a.s.}}{\rightarrow}-\left(\bm{H}^{\infty}\right)^{\dagger}\left[\frac{1}{n}\sum_{t=1}^{n}\frac{1}{n_{\mathrm{split}}}\sum_{k=1}^{n_{\mathrm{split}}}\nabla\mathcal{L}_{k}^{(t)}(\bm{\theta}^{\star})\right].
\]
Combining this with (\ref{eq:grad_CLT}), we have that 
\[
\sqrt{n}\left(\widehat{\bm{\theta}}_{\mathrm{MRP}}^{(n)}-\bm{\theta}^{\star}\right)\overset{\mathrm{d}}{\rightarrow}\mathcal{N}\left(\bm{0},\left(\bm{H}^{\infty}\right)^{\dagger}\left[\frac{1}{n_{\mathrm{split}}}\bm{V}_{\mathrm{same}}^{\infty}+\frac{n_{\mathrm{split}}-1}{n_{\mathrm{split}}}\bm{V}_{\mathrm{diff}}^{\infty}\right]\left(\bm{H}^{\infty}\right)^{\dagger}\right).
\]

\paragraph{Proof of Lemma~\ref{lem:H_k}.}

It suffices to show that $\lambda_{m-1}(\sum_{t=1}^{n}\nabla^{2}\mathcal{L}_{k}^{(t)}(\bm{\theta}^{\star}))\ge\beta n$
for some $\beta>0$ that does not depend on $n$. Recall that 
\begin{align*}
\sum_{t=1}^{n}\nabla^{2}\mathcal{L}_{k}^{(t)}(\bm{\theta}^{\star}) & =\sum_{t=1}^{n}\sum_{\substack{(i,j):i>j\\
(i,j,t)\in\Omega_{k}
}
}\left[\left(-\frac{e^{\theta_{i}^{\star}+\theta_{j}^{\star}}}{(e^{\theta_{i}^{\star}}+e^{\theta_{j}^{\star}})^{2}}\right)(\bm{e}_{i}-\bm{e}_{j})(\bm{e}_{i}-\bm{e}_{j})^{\top}\mathds{1}\{X_{ti}\neq X_{tj}\}\right]\\
 & =\sum_{t=1}^{n}\sum_{\substack{(i,j):i>j\\
(i,j,t)\in\Omega_{k}
}
}\left[-z_{ij}(\bm{e}_{i}-\bm{e}_{j})(\bm{e}_{i}-\bm{e}_{j})^{\top}\mathds{1}\{X_{ti}\neq X_{tj}\}\right].
\end{align*}
Invoking Lemma~\ref{lemma:spectral}, we have that with probability
at least $1-O(n^{-10})$, $\lambda_{m-1}(\sum_{t=1}^{n}\nabla^{2}\mathcal{L}_{k}^{(t)}(\bm{\theta}))\ge np/(16\kappa_{1}\kappa_{2})$,
so we are done.

\subsection{Proof for Theorem~\ref{thm:asymp_normality_WPMLE} \label{subsec:Proof_asymp_WPMLE}}

The proof for the asymptotic normality of \myalgW\ is very similar
to the proof for \myalgM. We start with a lemma on the asymptotic
consistency of \myalgW.

\begin{lemma}\label{lem:consistency_WPMLE}Instate the assumptions
of Theorem~\ref{thm:rasch_infty}. Then as $n\rightarrow\infty$,
\[
\widehat{\bm{\theta}}_{\mathrm{WP}}^{(n)}\overset{\mathrm{p}}{\rightarrow}\bm{\theta}^{\star}.
\]

\end{lemma}Recall that in the assumption of Theorem~\ref{thm:asymp_normality},
we do not specified the limit of Hessian and gradient of the loss
function of the \myalgW. In the following lemma, we show that they
are implied by the limit of Hessian and gradient of the \myalgM\
loss function.

\begin{lemma}\label{lem:WPMLE_Hess_grad}Let $\bm{H}^{\infty}$ and
$\bm{V}_{\mathrm{diff}}^{\infty}$ be defined as in Theorem~\ref{thm:asymp_normality}.
Then we have that 
\begin{equation}
\lim_{n\rightarrow\infty}\frac{1}{n}\sum_{t=1}^{n}\mathbb{E}_{\mathrm{s+d+c}}\nabla^{2}\mathcal{L}_{\mathrm{WP}}^{(t)}(\bm{\theta}^{\star})=\bm{H}^{\infty}\label{eq:WPMLE_Hes_lim}
\end{equation}
and
\begin{equation}
\lim_{n\rightarrow\infty}\frac{1}{n}\sum_{t=1}^{n}\mathbb{E}_{\mathrm{s+d+c}}\nabla\mathcal{L}_{\mathrm{WP}}^{(t)}(\bm{\theta})\nabla\mathcal{L}_{\mathrm{WP}}^{(t)}(\bm{\theta})^{\top}=\bm{V}_{\mathrm{diff}}^{\infty}.\label{eq:WPMLE_grad_lim}
\end{equation}

\end{lemma}The proof of these two lemmas are deferred to Section~\ref{subsec:proof_WPMLE_consistent}
and Section~\ref{subsec:proof_WPMLE_Hes}. 

The rest of the proof for the asymptotic normality of \myalgW\ is
identical to the proof of \myalgM. For conciseness we omit it here.

\subsubsection{Proof of Lemma~\ref{lem:consistency_WPMLE}\label{subsec:proof_WPMLE_consistent}}

Recall that 
\begin{align*}
\frac{1}{n}\mathcal{L}_{\mathrm{WP}}(\bm{\theta}) & =\frac{1}{n}\sum_{t=1}^{n}\mathcal{L}_{\mathrm{WP}}^{(t)}(\bm{\theta})\\
& =-\frac{1}{n}\sum_{t=1}^{n}\sum_{\substack{(i,j):i>j\\
(t,i),(t,j)\in\mathcal{G}_{X}
}
}\frac{\widetilde{m}_{t}}{m_{t}(m_{t}-1)}\left[\log\left(\frac{e^{\theta_{i}}}{e^{\theta_{i}}+e^{\theta_{j}}}\right)\mathds{1}\{X_{ti}>X_{tj}\}\right.\\
& \hspace{15em}{}\left.+\log\left(\frac{e^{\theta_{j}}}{e^{\theta_{i}}+e^{\theta_{j}}}\right)\mathds{1}\{X_{ti}<X_{tj}\}\right].
\end{align*}
Let $\bm{\Theta}\coloneqq\{\bm{\theta}\in\mathbb{R}^{m}:\bm{1}_{m}^{\top}\bm{\theta}=0,\|\bm{\theta}-\bm{\theta}^{\star}\|_{\infty}\le10\}$.
Let $\widetilde{\bm{\theta}}_{\mathrm{WP}}^{(n)}\coloneqq\arg\min_{\bm{\theta}\in\bm{\Theta}}\frac{1}{n}\sum_{t=1}^{n}\mathcal{L}_{\mathrm{WP}}^{(t)}(\bm{\theta})$.
We make the following claims.
\begin{enumerate}
\item As $n\rightarrow\infty$, the convergence of 
\[
\frac{1}{n}\sum_{t=1}^{n}\mathbb{E}_{\mathrm{s+d+c}}\mathcal{L}_{\mathrm{WP}}^{(t)}(\bm{\theta})\rightarrow\overline{\mathcal{L}}_{\mathrm{WP}}(\bm{\theta})
\]
is uniform on $\bm{\Theta}$. 
\item Property 1 implies that $\frac{1}{n}\mathcal{L}_{\mathrm{WP}}(\bm{\theta})$
converges uniformly in probability to $\overline{\mathcal{L}}_{\mathrm{WP}}(\bm{\theta})$
(see \cite{Newey1994} for the definition of this type of convergence).
\item $\bm{\theta}^{\star}$ is the unique minimizer of $\overline{\mathcal{L}}_{\mathrm{WP}}(\bm{\theta})$.
\end{enumerate}
It is obvious that $\bm{\theta}^{\star}\in\bm{\Theta}$. Then with
these properties, we can invoke Theorem~2.1 in \cite{Newey1994}
to conclude that $\widetilde{\bm{\theta}}_{\mathrm{WP}}^{(n)}$ is
consistent, i.e., $\widetilde{\bm{\theta}}_{\mathrm{WP}}^{(n)}\overset{\mathrm{p}}{\rightarrow}\bm{\theta}^{\star}$
as $n\rightarrow\infty$. This further shows that $\|\widetilde{\bm{\theta}}_{\mathrm{WP}}^{(n)}-\bm{\theta}^{\star}\|\le5$
with probability at least $1-n^{-10}$ for large enough $n$, and
therefore it is a global minimizer in $\bm{\Theta}$ and a local minimizer
in $\{\bm{\theta}\in\mathbb{R}^{m}:\bm{1}_{m}^{\top}\bm{\theta}=0\}$.
Similar to the case of \myalgM\ (see Lemma~\ref{lem:H_k}, we omit
the full proof here for conciseness), $\sum_{t=1}^{n}\mathcal{L}_{\mathrm{WP}}^{(t)}(\bm{\theta})$
is strictly convex (modulo $\bm{1}_{m}^{\top}\bm{\theta}=0$) for
large enough $n$. Then $\widetilde{\bm{\theta}}_{\mathrm{WP}}^{(n)}$
being a local minimum means that it is also the global minimum, i.e.,
and $\widetilde{\bm{\theta}}_{\mathrm{WP}}^{(n)}=\widehat{\bm{\theta}}_{\mathrm{WP}}^{(n)}$
a.s. with probability at least $1-n^{-10}$ for large enough $n$.
Then, we have that $\widetilde{\bm{\theta}}_{\mathrm{WP}}^{(n)}\overset{\mathrm{p}}{\rightarrow}\bm{\theta}^{\star}$
as $n\rightarrow\infty$ implies $\widehat{\bm{\theta}}_{\mathrm{WP}}^{(n)}\overset{\mathrm{p}}{\rightarrow}\bm{\theta}^{\star}$
as $n\rightarrow\infty$.

We now prove these properties in order.

\paragraph{Proof of Claim 1.}

Observe that for any $\theta\in\bm{\Theta}$,
\begin{align*}
    &\left|\left[\log\left(\frac{e^{\theta_{i}}}{e^{\theta_{i}}+e^{\theta_{j}}}\right)\mathds{1}\{X_{ti}>X_{tj}\}+\log\left(\frac{e^{\theta_{j}}}{e^{\theta_{i}}+e^{\theta_{j}}}\right)\mathds{1}\{X_{ti}<X_{tj}\}\right]\right|\\
    &\hspace{20em}\le-\log\left(\frac{1}{e^{\kappa_{1}+20}+1}\right)\le\kappa_{1}+21.
\end{align*}

Then 
\begin{align}
\left|\frac{1}{n}\mathcal{L}_{\mathrm{WP}}(\bm{\theta})\right| & \le\frac{1}{n}\sum_{t=1}^{n}\sum_{\substack{(i,j):i>j\\
(t,i),(t,j)\in\mathcal{G}_{X}
}
}\frac{\widetilde{m}_{t}}{m_{t}(m_{t}-1)}\cdot\nonumber\\
&\qquad\qquad\left|\log\left(\frac{e^{\theta_{i}}}{e^{\theta_{i}}+e^{\theta_{j}}}\right)\mathds{1}\{X_{ti}>X_{tj}\}  +\log\left(\frac{e^{\theta_{j}}}{e^{\theta_{i}}+e^{\theta_{j}}}\right)\mathds{1}\{X_{ti}<X_{tj}\}\right|\nonumber \\
 & \le\frac{1}{n}\sum_{t=1}^{n}\sum_{\substack{(i,j):i>j\\
(t,i),(t,j)\in\mathcal{G}_{X}
}
}\frac{\widetilde{m}_{t}(\kappa_{1}+21)}{m_{t}(m_{t}-1)}\nonumber \\
 & \le\frac{1}{n}\sum_{t=1}^{n}\binom{m_{t}}{2}\frac{\kappa_{1}+21}{m_{t}-1}\le\frac{m_{t}(\kappa_{1}+21)}{2}\le\frac{m(\kappa_{1}+21)}{2}.\label{eq:L_boundedness}
\end{align}
The last line holds since there can at most be $m_{t}$ pairs of responses
for each user $t$. Therefore $\frac{1}{n}\mathcal{L}_{\mathrm{WP}}(\bm{\theta})$
is uniformly bounded. Similarly, we can show that it is also uniformly
Lipschitz. Then by Arzel\`a--Ascoli theorem, the pointwise convergence
implies that the convergence is uniform on the compact set $\bm{\Theta}$.

\paragraph{Proof of Claim 2.}

By Hoeffding's inequality and (\ref{eq:L_boundedness}), for any $\bm{\theta}$,
\[
\mathbb{P}\left[\left|\frac{1}{n}\sum_{t=1}^{n}\mathcal{L}_{\mathrm{WP}}^{(t)}(\bm{\theta})-\frac{1}{n}\sum_{t=1}^{n}\mathbb{E}_{\mathrm{s+d+c}}\mathcal{L}_{\mathrm{WP}}^{(t)}(\bm{\theta})\right|>\epsilon\right]\le2\exp\left(-\frac{\epsilon^{2}}{nm(\kappa_{1}+1)}\right).
\]
Let $\bm{\Theta}_{\delta}\subset\bm{\Theta}$ such that $\bm{\Theta}_{\delta}$
is a minimal $\delta$-covering of $\bm{\Theta}$ with respect to
$\|\cdot\|_{\infty}$. It is well known that $|\bm{\Theta}_{\delta}|\le\mathcal{N}(\bm{\Theta},\|\cdot\|_{\infty},\delta/2)$
where $\mathcal{N}(\bm{\Theta},\|\cdot\|_{\infty},\delta/2)$ is the
packing number. By volume argument, 
\[
|\bm{\Theta}_{\delta}|\le\mathcal{N}(\bm{\Theta},\|\cdot\|_{\infty},\delta/2)\le\left(\frac{10}{\delta/2}\right)^{m-1}.
\]
We claim that 
\begin{equation}
\sup_{\bm{\theta}\in\bm{\Theta}}\inf_{\bm{\theta}'\in\bm{\Theta}_{\delta}}\left|\frac{1}{n}\sum_{t=1}^{n}\mathcal{L}_{\mathrm{WP}}^{(t)}(\bm{\theta})-\frac{1}{n}\sum_{t=1}^{n}\mathcal{L}_{\mathrm{WP}}^{(t)}(\bm{\theta}')\right|\le\frac{m(\kappa_{1}+20)}{2}\cdot\delta.\label{eq:ep_net}
\end{equation}
The proof is deferred to the end of this section. Take $\delta=2\epsilon/(m(\kappa_{1}+20))$.
We have that 
\begin{equation}
\sup_{\bm{\theta}\in\bm{\Theta}}\inf_{\bm{\theta}'\in\bm{\Theta}_{\delta}}\left|\frac{1}{n}\sum_{t=1}^{n}\mathcal{L}_{\mathrm{WP}}^{(t)}(\bm{\theta})-\frac{1}{n}\sum_{t=1}^{n}\mathcal{L}_{\mathrm{WP}}^{(t)}(\bm{\theta}')\right|\le\frac{m(\kappa_{1}+20)}{2}\cdot\delta=\epsilon.\label{eq:ep_net_epsilon}
\end{equation}
 Now for any $\epsilon,$ by union bound 
\begin{equation}
\mathbb{P}\left[\sup_{\bm{\theta}\in\bm{\Theta}_{\delta}}\left|\frac{1}{n}\sum_{t=1}^{n}\mathcal{L}_{\mathrm{WP}}^{(t)}(\bm{\theta})-\frac{1}{n}\sum_{t=1}^{n}\mathbb{E}_{\mathrm{s+r+d+c}}\mathcal{L}_{\mathrm{WP}}^{(t)}(\bm{\theta})\right|>\epsilon\right]\le2\left(\frac{10}{\delta/2}\right)^{m-1}\exp\left(-\frac{2\epsilon^{2}}{nm^{2}(\kappa_{1}+21)}\right).\label{eq:unif_prob_conv_expectation}
\end{equation}
Moreover, by assumption, 
\begin{equation}
\sup_{\bm{\theta}\in\bm{\Theta}_{\delta}}\left|\frac{1}{n}\sum_{t=1}^{n}\mathbb{E}_{\mathrm{s+r+d+c}}\mathcal{L}_{\mathrm{WP}}^{(t)}(\bm{\theta})-\overline{\mathcal{L}}_{\mathrm{WP}}(\bm{\theta})\right|\le\epsilon.\label{eq:unif_conv_assump}
\end{equation}
for large enough $n$. Combining (\ref{eq:ep_net_epsilon}), (\ref{eq:unif_prob_conv_expectation}),
(\ref{eq:unif_conv_assump}), we may conclude that 
\[
\mathbb{P}\left[\sup_{\bm{\theta}\in\bm{\Theta}}\left|\frac{1}{n}\sum_{t=1}^{n}\mathcal{L}_{\mathrm{WP}}^{(t)}(\bm{\theta})-\overline{\mathcal{L}}_{\mathrm{WP}}(\bm{\theta})\right|>3\epsilon\right]\le2\left(\frac{10}{\delta/2}\right)^{m-1}\exp\left(-\frac{2\epsilon^{2}}{nm^{2}(\kappa_{1}+21)}\right)
\]
for large enough $n$, which implies that $\frac{1}{n}\mathcal{L}_{\mathrm{WP}}(\bm{\theta})$
converges uniformly in probability to $\overline{\mathcal{L}}_{\mathrm{WP}}(\bm{\theta})$
on $\bm{\Theta}$.

\paragraph{Proof of Claim 3.}

We first show that $\bm{\theta}^{\star}$ is the minimizer for $\mathbb{E}_{\mathrm{c}}\mathcal{L}_{\mathrm{WP}}^{(t)}(\bm{\theta})$.
Compute $\mathbb{E}_{\mathrm{c}}\mathcal{L}_{\mathrm{WP}}^{(t)}(\bm{\theta})$,
we have that 
\[
\mathbb{E}_{\mathrm{c}}\mathcal{L}_{\mathrm{WP}}^{(t)}(\bm{\theta})=-\sum_{\substack{(i,j):i>j\\
(t,i),(t,j)\in\mathcal{G}_{X}
}
}\frac{\widetilde{m}_{t}}{m_{t}(m_{t}-1)}\left[\log\left(\frac{e^{\theta_{i}}}{e^{\theta_{i}}+e^{\theta_{j}}}\right)\frac{e^{\theta_{i}^{\star}}}{e^{\theta_{i}^{\star}}+e^{\theta_{j}^{\star}}}+\log\left(\frac{e^{\theta_{j}}}{e^{\theta_{i}}+e^{\theta_{j}}}\right)\frac{e^{\theta_{j}^{\star}}}{e^{\theta_{i}^{\star}}+e^{\theta_{j}^{\star}}}\right].
\]
Observe that $\log(x)p+\log(1-x)(1-p)$ as a function of $x$ is minimized
at $x=p$, so $\bm{\theta}^{\star}$ is the minimizer for all terms
inside $[\cdot]$ and $\mathbb{E}_{\mathrm{c}}\mathcal{L}_{\mathrm{WP}}^{(t)}(\bm{\theta})$
itself. 

As $\bm{\theta}^{\star}$ is the minimizer for $\mathbb{E}_{\mathrm{c}}\mathcal{L}_{\mathrm{WP}}^{(t)}(\bm{\theta})$,
$\bm{\theta}^{\star}$ is also a minimizer of $\mathbb{E}_{\mathrm{s+r+d+c}}\mathcal{L}_{\mathrm{WP}}^{(t)}(\bm{\theta})$
and 
\[
\overline{\mathcal{L}}_{\mathrm{WP}}(\bm{\theta})=\lim_{n\rightarrow\infty}\frac{1}{n}\sum_{t=1}^{n}\mathbb{E}_{\mathrm{s+r+d+c}}\mathcal{L}_{\mathrm{WP}}^{(t)}(\bm{\theta}).
\]
The uniqueness comes from the strict convexity of $\sum_{t=1}^{n}\mathbb{E}_{\mathrm{s+r+d+c}}\mathcal{L}_{\mathrm{WP}}^{(t)}(\bm{\theta})$.
This is similar to the case in \myalgM\ (see Lemma~\ref{lem:H_k},
we omit the full proof here for conciseness). 

\paragraph{Proof of (\ref{eq:ep_net}).}

Since $\bm{\Theta}_{\delta}$ is a $\delta$-covering, for any $\bm{\theta}\in\bm{\Theta}$,
there exists $\bm{\theta}'\in\bm{\Theta}_{\delta}$ such that $\|\bm{\theta}-\bm{\theta}'\|_{\infty}\le\delta$.
Observe that $x\mapsto1/(1+e^{-x})$ as a function is $(\kappa_{1}+20)$-Lipschitz
if $|x|\le\kappa_{1}+20$. Then similar to (\ref{eq:L_boundedness})
\[
\left|\frac{1}{n}\sum_{t=1}^{n}\mathcal{L}_{\mathrm{WP}}^{(t)}(\bm{\theta})-\frac{1}{n}\sum_{t=1}^{n}\mathcal{L}_{\mathrm{WP}}^{(t)}(\bm{\theta}')\right|\le\frac{m(\kappa_{1}+20)}{2}\cdot\delta.
\]

\subsubsection{Proof of Lemma~\ref{lem:WPMLE_Hess_grad}\label{subsec:proof_WPMLE_Hes} }

Recall that the loss function for \myalgW\ associated with user $t$
is 
\begin{multline*}
\mathcal{L}_{\mathrm{WP}}^{(t)}(\bm{\theta})=-\sum_{\substack{(i,j):i>j\\
(t,i),(t,j)\in\mathcal{G}_{X}
}
}\frac{\widetilde{m}_{t}}{m_{t}(m_{t}-1)}\left[\log\left(\frac{e^{\theta_{i}}}{e^{\theta_{i}}+e^{\theta_{j}}}\right)\mathds{1}\{X_{ti}>X_{tj}\}\right.\\
\left.\hspace{8em}{}+\log\left(\frac{e^{\theta_{j}}}{e^{\theta_{i}}+e^{\theta_{j}}}\right)\mathds{1}\{X_{ti}<X_{tj}\}\right],
\end{multline*}
and the loss function for \myalgM\ associated with user $t$ is 
\[
\mathcal{L}_{k}^{(t)}(\bm{\theta})=-\sum_{\substack{(i,j):i>j\\
(i,j,t)\in\Omega_{k}
}
}\left[\log\left(\frac{e^{\theta_{i}}}{e^{\theta_{i}}+e^{\theta_{j}}}\right)\mathds{1}\{X_{ti}>X_{tj}\}+\log\left(\frac{e^{\theta_{j}}}{e^{\theta_{i}}+e^{\theta_{j}}}\right)\mathds{1}\{X_{ti}<X_{tj}\}\right].
\]
As the random splitting only affect $\Omega_{k}$, we can see that
\begin{align*}
&\mathbb{E}_{\mathrm{r}}\mathcal{L}_{k}^{(t)}(\bm{\theta})\\
& =-\sum_{\substack{(i,j):i>j\\
(t,i),(t,j)\in\mathcal{G}_{X}
}
}\mathbb{E}_{\mathrm{r}}\mathds{1}_{\{(i,j,t)\in\Omega_{k}\}}\left[\log\left(\frac{e^{\theta_{i}}}{e^{\theta_{i}}+e^{\theta_{j}}}\right)\mathds{1}\{X_{ti}>X_{tj}\}+\log\left(\frac{e^{\theta_{j}}}{e^{\theta_{i}}+e^{\theta_{j}}}\right)\mathds{1}\{X_{ti}<X_{tj}\}\right]\\
 & =-\sum_{\substack{(i,j):i>j\\
(t,i),(t,j)\in\mathcal{G}_{X}
}
}\frac{\widetilde{m}_{t}}{m_{t}(m_{t}-1)}\left[\log\left(\frac{e^{\theta_{i}}}{e^{\theta_{i}}+e^{\theta_{j}}}\right)\mathds{1}\{X_{ti}>X_{tj}\}+\log\left(\frac{e^{\theta_{j}}}{e^{\theta_{i}}+e^{\theta_{j}}}\right)\mathds{1}\{X_{ti}<X_{tj}\}\right]\\
 & =\mathcal{L}_{\mathrm{WP}}^{(t)}(\bm{\theta}).
\end{align*}
Here 
\[
\mathbb{E}_{\mathrm{r}}\mathds{1}_{\{(i,j,t)\in\Omega_{k}\}}=\frac{\widetilde{m}_{t}}{m_{t}(m_{t}-1)}
\]
comes from the fact that we select $\widetilde{m}_{t}/2$ pairs out
of $m_{t}(m_{t}-1)/2$ total pairs and each pair is equally likely
to be selected due to symmetry. Similarly, we can deduct the same
thing for the gradient and the Hessian, i.e., for any $k\in[n_{\mathrm{split}}]$,

\begin{subequations}
\begin{align}
\mathbb{E}_{\mathrm{r}}\nabla\mathcal{L}_{k}^{(t)}(\bm{\theta}) & =\nabla\mathcal{L}_{\mathrm{WP}}^{(t)}(\bm{\theta});\label{eq:grad_WPMLE_equiv}\\
\mathbb{E}_{\mathrm{r}}\nabla^{2}\mathcal{L}_{k}^{(t)}(\bm{\theta}) & =\nabla^{2}\mathcal{L}_{\mathrm{WP}}^{(t)}(\bm{\theta}).\label{eq:Hes_WPMLE_equiv}
\end{align}

\end{subequations}

For the gradient, note that each random splitting is independent,
so we have that 
\begin{align*}
\mathbb{E}_{\mathrm{r}}\nabla\mathcal{L}_{1}^{(t)}(\bm{\theta}^{\star})\nabla\mathcal{L}_{2}^{(t)}(\bm{\theta}^{\star})^{\top} & =\left[\mathbb{E}_{\mathrm{r}}\nabla\mathcal{L}_{1}^{(t)}(\bm{\theta}^{\star})\right]\left[\mathbb{E}_{\mathrm{r}}\nabla\mathcal{L}_{2}^{(t)}(\bm{\theta}^{\star})\right]^{\top}\\
 & =\nabla\mathcal{L}_{\mathrm{WP}}^{(t)}(\bm{\theta}^{\star})\nabla\mathcal{L}_{\mathrm{WP}}^{(t)}(\bm{\theta}^{\star})^{\top}
\end{align*}
Then (\ref{eq:grad_WPMLE_equiv}) implies that 
\[
\lim_{n\rightarrow\infty}\frac{1}{n}\sum_{t=1}^{n}\mathbb{E}_{\mathrm{s+d+c}}\nabla\mathcal{L}_{\mathrm{WP}}^{(t)}(\bm{\theta})\nabla\mathcal{L}_{\mathrm{WP}}^{(t)}(\bm{\theta})^{\top}=\lim_{n\rightarrow\infty}\frac{1}{n}\sum_{t=1}^{n}\mathbb{E}_{\mathrm{s+r+d+c}}\nabla\mathcal{L}_{\mathrm{1}}^{(t)}(\bm{\theta})\nabla\mathcal{L}_{\mathrm{2}}^{(t)}(\bm{\theta})^{\top}=\bm{V}_{\mathrm{diff}}^{\infty}.
\]
For the Hessian, (\ref{eq:Hes_WPMLE_equiv}) implies that 
\[
\lim_{n\rightarrow\infty}\frac{1}{n}\sum_{t=1}^{n}\mathbb{E}_{\mathrm{s+d+c}}\nabla^{2}\mathcal{L}_{\mathrm{WP}}^{(t)}(\bm{\theta}^{\star})=\lim_{n\rightarrow\infty}\frac{1}{n}\sum_{t=1}^{n}\mathbb{E}_{\mathrm{s+r+d+c}}\nabla^{2}\mathcal{L}_{k}^{(t)}(\bm{\theta}^{\star})=\bm{H}^{\infty}.
\]

\subsection{Proof of Proposition~\ref{prop:quant_asymp_normality}\label{subsec:proof_special_case}}

Consider the expectation of the Hessian and the covariance, we claim
that the gradient and Hessian of this special case has expectation
\[
\bm{H}_{\pi}^{\infty}\coloneqq\mathbb{E}_{\mathrm{s+r+d+c+u}}\nabla^{2}\mathcal{L}_{1}^{(1)}(\bm{\theta}^{\star})=\frac{\beta mp}{4(m-1)}\left[\bm{I}_{m}-\frac{1}{m}\bm{1}_{m}\bm{1}_{m}^{\top}\right];
\]
\[
\bm{V}_{\mathrm{same},\pi}^{\infty}\coloneqq\mathbb{E}_{\mathrm{s+r+d+c+u}}\nabla\mathcal{L}_{1}^{(1)}(\bm{\theta}^{\star})\nabla\mathcal{L}_{1}^{(1)}(\bm{\theta}^{\star})^{\top}=\frac{\beta mp}{2(m-1)}\left[\bm{I}_{m}-\frac{1}{m}\bm{1}_{m}\bm{1}_{m}^{\top}\right];
\]
\[
\bm{V}_{\mathrm{diff},\pi}^{\infty}\coloneqq\mathbb{E}_{\mathrm{s+r+d+c+u}}\nabla\mathcal{L}_{1}^{(1)}(\bm{\theta}^{\star})\nabla\mathcal{L}_{2}^{(1)}(\bm{\theta}^{\star})^{\top}=\frac{\beta m^{2}p^{2}}{4(m-1)(mp-1)}\left[\bm{I}_{m}-\frac{1}{m}\bm{1}_{m}\bm{1}_{m}^{\top}\right].
\]

The rest of the proof of this corollary consists of two parts. In
the first part, we verify that with probability 1, the conditions
in Theorem~\ref{thm:asymp_normality} holds. In the second part,
we compute the asymptotic covariance explicitly. We then invoke Theorem~\ref{thm:asymp_normality}
to conclude that with probability 1, 

\begin{align*}
\sqrt{n}\left(\widehat{\bm{\theta}}_{\mathrm{MRP}}^{(n)}-\bm{\theta}^{\star}\right) & \overset{\mathrm{d}}{\rightarrow}\mathcal{N}\left(\bm{0},\left(\bm{H}_{\pi}^{\infty}\right)^{\dagger}\left[\frac{1}{n_{\mathrm{split}}}\bm{V}_{\mathrm{same},\pi}^{\infty}+\frac{n_{\mathrm{split}}-1}{n_{\mathrm{split}}}\bm{V}_{\mathrm{diff},\pi}^{\infty}\right]\left(\bm{H}_{\pi}^{\infty}\right)^{\dagger}\right)\\
 & =\mathcal{N}\left(\bm{0},\frac{8(m-1)}{\beta mp}\left(\frac{1}{n_{\mathrm{split}}}+\frac{n_{\mathrm{split}}-1}{n_{\mathrm{split}}}\cdot\frac{mp}{2(mp-1)}\right)\left[\bm{I}_{m}-\frac{1}{m}\bm{1}_{m}\bm{1}_{m}^{\top}\right]\right)
\end{align*}
and 
\begin{align*}
\sqrt{n}\left(\widehat{\bm{\theta}}_{\mathrm{WP}}^{(n)}-\bm{\theta}^{\star}\right) & \overset{\mathrm{d}}{\rightarrow}\mathcal{N}\left(\bm{0},\left(\bm{H}_{\pi}^{\infty}\right)^{\dagger}\bm{V}_{\mathrm{diff},\pi}^{\infty}\left(\bm{H}_{\pi}^{\infty}\right)^{\dagger}\right)\\
 & =\mathcal{N}\left(\bm{0},\frac{8(m-1)}{\beta mp}\left(\frac{mp}{2(mp-1)}\right)\left[\bm{I}_{m}-\frac{1}{m}\bm{1}_{m}\bm{1}_{m}^{\top}\right]\right)
\end{align*}

\subsubsection{Verifying the condition of Theorem~\ref{thm:asymp_normality} and
\ref{thm:asymp_normality_WPMLE}}

Consider the terms in (\ref{eq:H_infty})
\[
\mathbb{E}_{\mathrm{s+r+d+c}}\nabla^{2}\mathcal{L}_{1}^{(t)}(\bm{\theta}^{\star})
\]
as a function of the random variable $\delta_{t}^{\star}$. It has
mean $\mathbb{E}_{\mathrm{s+r+d+c+u}}\nabla^{2}\mathcal{L}_{1}^{(t)}(\bm{\theta}^{\star})$.
Since it is bounded, by strong law of large number, as $n\rightarrow\infty$
\[
\frac{1}{n}\sum_{t=1}^{n}\mathbb{E}_{\mathrm{s+r+d+c}}\nabla^{2}\mathcal{L}_{1}^{(t)}(\bm{\theta}^{\star})\overset{\mathrm{a.s.}}{\rightarrow}\mathbb{E}_{\mathrm{s+r+d+c+u}}\nabla^{2}\mathcal{L}_{1}^{(t)}(\bm{\theta}^{\star})=\bm{H}_{\pi}^{\infty}.
\]
Similarly we have that 
\[
\frac{1}{n}\sum_{t=1}^{n}\mathbb{E}_{\mathrm{s+r+d+c}}\nabla\mathcal{L}_{1}^{(t)}(\bm{\theta}^{\star})\nabla\mathcal{L}_{1}^{(t)}(\bm{\theta}^{\star})^{\top}\overset{\mathrm{a.s.}}{\rightarrow}\mathbb{E}_{\mathrm{s+r+d+c+u}}\nabla\mathcal{L}_{1}^{(t)}(\bm{\theta}^{\star})\nabla\mathcal{L}_{1}^{(t)}(\bm{\theta}^{\star})^{\top}=\bm{V}_{\mathrm{same},\pi}^{\infty}
\]
and 
\[
\frac{1}{n}\sum_{t=1}^{n}\mathbb{E}_{\mathrm{s+r+d+c}}\nabla\mathcal{L}_{1}^{(t)}(\bm{\theta}^{\star})\nabla\mathcal{L}_{2}^{(t)}(\bm{\theta}^{\star})^{\top}\overset{\mathrm{a.s.}}{\rightarrow}\mathbb{E}_{\mathrm{s+r+d+c+u}}\nabla\mathcal{L}_{1}^{(t)}(\bm{\theta}^{\star})\nabla\mathcal{L}_{2}^{(t)}(\bm{\theta}^{\star})^{\top}=\bm{V}_{\mathrm{diff},\pi}^{\infty}.
\]
For \myalgW, we further verify that for any $\bm{\theta}$, invoking
the strong law of large number, as $n\rightarrow\infty$, 
\[
\frac{1}{n}\sum_{t=1}^{n}\mathbb{E}_{\mathrm{s+d+c}}\mathcal{L}_{\mathrm{WP}}^{(t)}(\bm{\theta})\overset{\mathrm{a.s.}}{\rightarrow}\mathbb{E}_{\mathrm{s+d+c+u}}\mathcal{L}_{\mathrm{WP}}^{(t)}(\bm{\theta}).
\]

\subsubsection{Computing the Hessian }

In this section we compute $\bm{H}_{\pi}^{\infty}$. Recall that 
\[
\bm{H}_{\pi}^{\infty}=\mathbb{E}_{\mathrm{s+r+d+c+u}}\nabla^{2}\mathcal{L}_{1}^{(t)}(\bm{\theta}^{\star})=\mathbb{E}_{\mathrm{s+d+c+u}}\nabla^{2}\mathcal{L}_{\mathrm{WP}}^{(t)}(\bm{\theta}^{\star})
\]
and 
\begin{align*}
\nabla^{2}\mathcal{L}_{\mathrm{WP}}^{(t)}(\bm{\theta}^{\star}) & =\sum_{\substack{(i,j):i>j,\\
(t,i),(t,j)\in\mathcal{G}_{X}
}
}\frac{\widetilde{m}_{t}}{m_{t}(m_{t}-1)}\left[\left(\frac{e^{\theta_{i}^{\star}+\theta_{j}^{\star}}}{(e^{\theta_{i}^{\star}}+e^{\theta_{j}^{\star}})^{2}}\right)(\bm{e}_{i}-\bm{e}_{j})(\bm{e}_{i}-\bm{e}_{j})^{\top}\mathds{1}\{X_{ti}\neq X_{tj}\}\right]\\
 & =\sum_{\substack{(i,j):i>j}
}\frac{\mathds{1}\{t,i),(t,j)\in\mathcal{G}_{X}\}}{mp-1}\left[\frac{1}{4}(\bm{e}_{i}-\bm{e}_{j})(\bm{e}_{i}-\bm{e}_{j})^{\top}\mathds{1}\{X_{ti}\neq X_{tj}\}\right]
\end{align*}
Taking expectation, we have 
\begin{align*}
\mathbb{E}_{\mathrm{s+r+d+c+u}}\nabla^{2}\mathcal{L}_{\mathrm{WP}}^{(t)}(\bm{\theta}^{\star}) & =\sum_{\substack{(i,j):i>j}
}\frac{\mathbb{E}_{\mathrm{s}}\mathds{1}\{(t,i),(t,j)\in\mathcal{G}_{X}\}}{4(mp-1)}\cdot\\
&\qquad\mathbb{E}_{\mathrm{d+c+u}}\left[\mathds{1}\{X_{ti}\neq X_{tj}\}\mid(t,i),(t,j)\in\mathcal{G}_{X}\right](\bm{e}_{i}-\bm{e}_{j})(\bm{e}_{i}-\bm{e}_{j})^{\top}.
\end{align*}
For sampling we have 
\[
\mathbb{E}_{\mathrm{s}}\mathds{1}\{(t,i),(t,j)\in\mathcal{G}_{X}\}=\frac{\binom{mp}{2}}{\binom{m}{2}}=\frac{mp(mp-1)}{m(m-1)}.
\]
For comparison we have 
\begin{align*}
 & \mathbb{E}_{\mathrm{d+c}}\left[\mathds{1}\{X_{ti}\neq X_{tj}\}\mid(t,i),(t,j)\in\mathcal{G}_{X}\right]\\
 & \quad=\mathbb{P}_{\mathrm{d+c}}\left[X_{ti}>X_{tj}\mid(t,i),(t,j)\in\mathcal{G}_{X}\right]+\mathbb{P}_{\mathrm{d+c}}\left[X_{ti}<X_{tj}\mid(t,i),(t,j)\in\mathcal{G}_{X}\right]\\
 & \quad=\frac{e^{\theta_{i}^{\star}}}{(e^{\theta_{i}^{\star}}+e^{\zeta_{t}^{\star}})}\frac{e^{\zeta_{t}^{\star}}}{(e^{\theta_{i}^{\star}}+e^{\zeta_{t}^{\star}})}+\frac{e^{\zeta_{t}^{\star}}}{(e^{\theta_{i}^{\star}}+e^{\zeta_{t}^{\star}})}\frac{e^{\theta_{j}^{\star}}}{(e^{\theta_{i}^{\star}}+e^{\zeta_{t}^{\star}})}\\
 & \quad=\frac{e^{\zeta_{t}^{\star}}}{(e^{\zeta_{t}^{\star}}+1)^{2}},
\end{align*}
where the last line uses the assumption $\bm{\theta}^{\star}=\bm{0}_{m}$.
Combining these with the definition of $\beta$ in (\ref{eq:beta_def}),
we have 
\begin{align*}
\mathbb{E}_{\mathrm{s+c+u}}\nabla^{2}\mathcal{L}_{\mathrm{WP}}^{(t)}(\bm{\theta}^{\star}) & =\sum_{\substack{(i,j):i>j}
}\frac{p}{4(m-1)}\mathbb{E}_{\mathrm{u}}\left[\frac{e^{\zeta_{t}^{\star}}}{(e^{\zeta_{t}^{\star}}+1)^{2}}\right](\bm{e}_{i}-\bm{e}_{j})(\bm{e}_{i}-\bm{e}_{j})^{\top}\\
 & =\sum_{\substack{(i,j):i>j}
}\frac{\beta p}{4(m-1)}(\bm{e}_{i}-\bm{e}_{j})(\bm{e}_{i}-\bm{e}_{j})^{\top}\\
 & =\frac{\beta mp}{4(m-1)}\left[\bm{I}_{m}-\frac{1}{m}\bm{1}_{m}\bm{1}_{m}^{\top}\right].
\end{align*}

\subsubsection{Intra-split covariance}

In this section we compute $\bm{V}_{\mathrm{same},\pi}^{\infty}$.
Recall that
\[
\bm{V}_{\mathrm{same},\pi}^{\infty}=\mathbb{E}_{\mathrm{s+r+d+c+u}}\nabla\mathcal{L}_{1}^{(t)}(\bm{\theta}^{\star})\nabla\mathcal{L}_{1}^{(t)}(\bm{\theta}^{\star})^{\top}
\]
We expand the term $\nabla\mathcal{L}_{1}^{(t)}(\bm{\theta}^{\star})$
with 
\[
\nabla\mathcal{L}_{1}^{(t)}(\bm{\theta}^{\star})=\sum_{\substack{(i,j):i>j\\
(i,j,t)\in\Omega_{k}
}
}\underbrace{\left[\left(-\mathds{1}\{X_{ti}<X_{tj}\}+\frac{e^{\theta_{j}^{\star}}}{e^{\theta_{i}^{\star}}+e^{\theta_{j}^{\star}}}\right)(\bm{e}_{i}-\bm{e}_{j})\mathds{1}\{X_{ti}\neq X_{tj}\}\right]}_{\eqqcolon\bm{u}_{ij}^{(t)}}.
\]
We then have that 
\begin{align}
\mathbb{E}_{\mathrm{s+r+d+c+u}}\nabla\mathcal{L}_{1}^{(t)}(\bm{\theta}^{\star})\nabla\mathcal{L}_{1}^{(t)}(\bm{\theta}^{\star})^{\top} & =\mathbb{E}_{\mathrm{s+r+d+c+u}}\sum_{(i,j):i>j}\mathds{1}\{(i,j,1)\in\Omega_{1}\}\bm{u}_{ij}^{(t)}\bm{u}_{ij}^{(t)\top}\nonumber \\
 & =\sum_{(i,j):i>j}a_{ij}(\bm{e}_{i}-\bm{e}_{j})(\bm{e}_{i}-\bm{e}_{j})^{\top}\label{eq:intra_a}
\end{align}
where 
\begin{align*}
a_{ij} & =\mathbb{E}_{\mathrm{s+r+d+c+u}}\mathds{1}\{(i,j,t)\in\Omega_{1}\}\left[\left(-\mathds{1}\{X_{ti}<X_{tj}\}+\frac{e^{\theta_{j}^{\star}}}{e^{\theta_{i}^{\star}}+e^{\theta_{j}^{\star}}}\right)^{2}\mathds{1}\{X_{ti}\neq X_{1j}\}\right]\\
 & =\mathbb{P}_{\mathrm{s+r}}\left[(i,j,t)\in\Omega_{1}\right]\mathbb{E}_{\mathrm{d+c+u}}\left[\left(-\mathds{1}\{X_{ti}<X_{tj}\}+\frac{e^{\theta_{j}^{\star}}}{e^{\theta_{i}^{\star}}+e^{\theta_{j}^{\star}}}\right)^{2}\mathds{1}\{X_{ti}\neq X_{1j}\}\mid(i,j,1)\in\Omega_{1}\right].
\end{align*}
The last equality holds since comparison and user parameter draw are
independent since sampling and random pairing. Observe the fact that
\[
\sum_{(i,j):i>j}\mathds{1}\{(i,j,1)\in\Omega_{1}\}=\text{number of pairs in a splitting}=m_{t}/2=mp/2.
\]
By symmetry, 
\begin{align*}
\mathbb{P}_{\mathrm{s+r}}\{(i,j,t)\in\Omega_{1}\} & =\binom{m}{2}^{-1}\mathbb{E}_{\mathrm{s+r}}\sum_{(i,j):i>j}\mathds{1}\{(i,j,1)\in\Omega_{1}\}\\
 & =\frac{p}{m-1}.
\end{align*}
Condition on the event $(i,j,t)\in\Omega_{1}$ (which we omit in the
formulas below for formatting),
\begin{align*}
 & \mathbb{E}_{\mathrm{d+c+u}}\left[\left(-\mathds{1}\{X_{ti}<X_{tj}\}+\frac{e^{\theta_{j}^{\star}}}{e^{\theta_{i}^{\star}}+e^{\theta_{j}^{\star}}}\right)^{2}\mathds{1}\{X_{ti}\neq X_{tj}\}\right]\\
 & \quad=\mathbb{E}_{\mathrm{u}}\left[\mathbb{P}(X_{ti}<X_{tj})\left(-1+\frac{e^{\theta_{j}^{\star}}}{e^{\theta_{i}^{\star}}+e^{\theta_{j}^{\star}}}\right)^{2}+\mathbb{P}(X_{ti}>X_{tj})\left(\frac{e^{\theta_{j}^{\star}}}{e^{\theta_{i}^{\star}}+e^{\theta_{j}^{\star}}}\right)^{2}\right]\\
 & \quad=\mathbb{E}_{\mathrm{u}}\left[\frac{e^{\zeta_{1}^{\star}}}{e^{\zeta_{1}^{\star}}+e^{\theta_{i}^{\star}}}\frac{e^{\theta_{j}^{\star}}}{e^{\zeta_{1}^{\star}}+e^{\theta_{j}^{\star}}}\frac{e^{2\theta_{i}^{\star}}}{(e^{\theta_{i}^{\star}}+e^{\theta_{j}^{\star}})^{2}}+\frac{e^{\theta_{i}^{\star}}}{e^{\zeta_{1}^{\star}}+e^{\theta_{i}^{\star}}}\frac{e^{\zeta_{1}^{\star}}}{e^{\zeta_{1}^{\star}}+e^{\theta_{j}^{\star}}}\frac{e^{2\theta_{j}^{\star}}}{(e^{\theta_{i}^{\star}}+e^{\theta_{j}^{\star}})^{2}}\right]\\
 & \quad=\mathbb{E}_{\mathrm{u}}\left[\frac{e^{\zeta_{1}^{\star}+\theta_{i}^{\star}+\theta_{j}^{\star}}}{(e^{\zeta_{1}^{\star}}+e^{\theta_{i}^{\star}})(e^{\zeta_{1}^{\star}}+e^{\theta_{j}^{\star}})(e^{\theta_{i}^{\star}}+e^{\theta_{j}^{\star}})}\right].
\end{align*}
Substitute in the assumption $\bm{\theta}^{\star}=\bm{0}_{m}$, we
have 
\begin{align}
 & \mathbb{E}_{\mathrm{d+c+u}}\left[\left(-\mathds{1}\{X_{ti}<X_{tj}\}+\frac{e^{\theta_{j}^{\star}}}{e^{\theta_{i}^{\star}}+e^{\theta_{j}^{\star}}}\right)^{2}\mathds{1}\{X_{ti}\neq X_{tj}\}\mid(i,j,1)\in\Omega_{1}\right]\nonumber \\
 & \quad=\mathbb{E}_{\mathrm{u}}\left[\frac{e^{\zeta_{1}^{\star}}}{(e^{\zeta_{1}^{\star}}+1)(e^{\zeta_{1}^{\star}}+1)(1+1)}\right]=\frac{\beta}{2}.\label{eq:comparison_expectation_alpha}
\end{align}
Then 
\[
a_{ij}=\frac{\beta p}{2(m-1)}.
\]
In conclusion 
\begin{align*}
\bm{V}_{\mathrm{same},\pi}^{\infty} & =\sum_{(i,j):i>j}\frac{\beta p}{2(m-1)}(\bm{e}_{i}-\bm{e}_{j})(\bm{e}_{i}-\bm{e}_{j})^{\top}\\
 & =\frac{\beta mp}{2(m-1)}\left[\bm{I}_{m}-\frac{1}{m}\bm{1}_{m}\bm{1}_{m}^{\top}\right].
\end{align*}

\subsubsection{Inter-split covariance }

In this section we compute $\bm{V}_{\mathrm{diff},\pi}^{\infty}$.
Recall 
\begin{align*}
\nabla\mathcal{L}_{1}^{(t)}(\bm{\theta}^{\star}) & =\sum_{(i,j):i>j}\mathds{1}\{(i,j,t)\in\Omega_{1}\}\bm{u}_{ij};\\
\nabla\mathcal{L}_{2}^{(t)}(\bm{\theta}^{\star}) & =\sum_{(i,j):i>j}\mathds{1}\{(i,j,t)\in\Omega_{2}\}\bm{u}_{ij}.
\end{align*}
Then 

\begin{align}
\bm{V}_{\mathrm{diff},\pi}^{\infty} & =\mathbb{E}_{\mathrm{s+r+d+c+u}}\nabla\mathcal{L}_{1}^{(t)}(\bm{\theta}^{\star})\nabla\mathcal{L}_{2}^{(t)}(\bm{\theta}^{\star})^{\top}\nonumber \\
 & =\sum_{(i_{1},j_{1}):i_{1}>j_{1}}\sum_{(i_{2},j_{2}):i_{2}>j_{2}}\mathbb{E}_{\mathrm{s+r+d+c+u}}\mathds{1}\{(i_{1},j_{1},t)\in\Omega_{1}\}\mathds{1}\{(i_{2},j_{2},t)\in\Omega_{2}\}\bm{u}_{i_{1}j_{1}}\bm{u}_{i_{2}j_{2}}^{\top}.\label{eq:V_diff_pi}
\end{align}
Since $\bm{u}_{ij}$ is zero-mean for all $(i,j)$, it suffices to
only consider the terms where $i_{1},i_{2},j_{1},j_{2}$ has some
overlap. We first consider the terms where $(i_{1},j_{1})=(i_{2},j_{2})=(i,j)$.
\begin{align}
 & \mathbb{E}_{\mathrm{s+r+d+c+u}}\mathds{1}\{(i,j,t)\in\Omega_{1}\}\mathds{1}\{(i,j,t)\in\Omega_{2}\}\bm{u}_{ij}\bm{u}_{ij}^{\top}\nonumber \\
 & \quad=\mathbb{P}_{\mathrm{s+r}}\left[(i,j,t)\in\Omega_{1}\cap\Omega_{2}\right]\mathbb{E}_{\mathrm{d+c+u}}\left[\bm{u}_{ij}\bm{u}_{ij}^{\top}\mid\mathds{1}\{(i,j,t)\in\Omega_{1}\cap\Omega_{2}\}\right].\label{eq:cross_term_two_rep}
\end{align}
Similar to (\ref{eq:intra_a}) and (\ref{eq:comparison_expectation_alpha}),
we have $\mathbb{E}_{\mathrm{d+c+u}}\left[\bm{u}_{ij}\bm{u}_{ij}^{\top}\mid\mathds{1}\{(i,j,t)\in\Omega_{1}\cap\Omega_{2}\}\right]=0.5\beta(\bm{e}_{i}-\bm{e}_{j})(\bm{e}_{i}-\bm{e}_{j})^{\top}$.
On the other hand, 
\begin{align*}
\mathbb{P}_{\mathrm{s+r}}\left[(i,j,t)\in\Omega_{1}\cap\Omega_{2}\right] & =\mathbb{P}_{\mathrm{s}}\left[A_{ti}=A_{tj}=1\right]\mathbb{P}_{\mathrm{r}}\left[(i,j,t)\in\Omega_{1}\cap\Omega_{2}\mid A_{ti}=A_{tj}=1\right]\\
 & =\frac{\binom{mp}{2}}{\binom{m}{2}}\left(\frac{1}{mp-1}\right)^{2}\\
 & =\frac{p}{(m-1)(mp-1)}.
\end{align*}
Then 
\begin{equation}
\mathbb{E}_{\mathrm{s+r+d+c+u}}\mathds{1}\{(i,j,t)\in\Omega_{1}\}\mathds{1}\{(i,j,t)\in\Omega_{2}\}\bm{u}_{ij}\bm{u}_{ij}^{\top}=\frac{\beta p}{2(m-1)(mp-1)}(\bm{e}_{i}-\bm{e}_{j})(\bm{e}_{i}-\bm{e}_{j})^{\top}.\label{eq:two_pair_overlap}
\end{equation}
Now we consider the terms where only two of the four indices overlaps.
Without loss of generality, we take $i_{1}=i_{2}=i$ and remove the
$i_{1}>j_{1},i_{2}>j_{2}$ restriction. In other words, we consider
\begin{align*}
& \sum_{\substack{ (i,j_1,j_2): \\ i\neq j_1, i\neq j_2,  j_1\neq j_2 }}\mathbb{E}_{\mathrm{s+r+d+c+u}}\mathds{1}\{(i,j_{1},t)\in\Omega_{1}\}\mathds{1}\{(i,j_{2},t)\in\Omega_{2}\}\bm{u}_{ij_{1}}\bm{u}_{ij_{2}}^{\top}\\
& \quad=\sum_{\substack{ (i,j_1,j_2): \\ i\neq j_1, i\neq j_2,  j_1\neq j_2 }}\mathbb{P}_{\mathrm{s+r}}\left[(i,j_{1},t)\in\Omega_{1},(i,j_{2},t)\in\Omega_{2}\right]\mathbb{E}_{\mathrm{u+c}}\left[\bm{u}_{ij_{1}}\bm{u}_{ij_{2}}^{\top}\mid(i,j_{1},t)\in\Omega_{1},(i,j_{2},t)\in\Omega_{2}\right].
\end{align*}
We first deal with $\mathbb{P}_{\mathrm{s+r}}\{(i,j_{1},t)\in\Omega_{1},(i,j_{2},t)\in\Omega_{2}\}$,
\begin{align*}
\mathbb{P}_{\mathrm{s+r}}\{(i,j_{1},t)\in\Omega_{1},(i,j_{2},t)\in\Omega_{2}\} & =\mathbb{P}_{\mathrm{s}}\left[A_{ti}=A_{tj_{1}}=A_{tj_{2}}=1\right]\\
 & \quad\cdot\mathbb{P}_{\mathrm{r}}\left[(i,j_{1},t)\in\Omega_{1},(i,j_{2},t)\in\Omega_{2}\mid A_{ti}=A_{ti}=A_{ti}=1\right]\\
 & =\frac{\binom{mp}{3}}{\binom{m}{3}}\left(\frac{1}{mp-1}\right)^{2}\\
 & =\frac{p(mp-2)}{(m-1)(m-2)(mp-1)}.
\end{align*}
Now we consider the $\bm{u}_{ij_{1}}\bm{u}_{ij_{2}}^{\top}$ part.
Given $(i,j_{1},t)\in\Omega_{1}$ and $(i,j_{2},t)\in\Omega_{2}$
(for notation simplicity we omit this from now on), we compute 
\begin{equation}
\mathbb{E}_{\mathrm{d+c}}\left[\delta_{ij_{1}}\delta_{ij_{2}}\mathds{1}\{X_{ti}\neq X_{tj_{1}},X_{ti}\neq X_{1j_{2}}\}(\bm{e}_{i}-\bm{e}_{j_{1}})(\bm{e}_{i}-\bm{e}_{j_{2}})^{\top}\right].\label{eq:cross_term_single}
\end{equation}
where 
\[
\delta_{ij}\coloneqq\left(-\mathds{1}\{X_{1i}<X_{1j}\}+\frac{e^{\theta_{j}^{\star}}}{e^{\theta_{i}^{\star}}+e^{\theta_{j}^{\star}}}\right).
\]
For the scope of this proof we let $E_{1}$, $E_{2}$ be two events,
where $E_{1}\coloneqq\{X_{ti}=1,X_{tj_{1}}=0,X_{1j_{2}}=0\}$ and
$E_{2}\coloneqq\{X_{ti}=0,X_{tj_{1}}=1,X_{tj_{2}}=1\}$. Then we can
express (\ref{eq:cross_term_single}) as
\begin{align*}
 & \mathbb{E}_{\mathrm{d+c}}\left[\delta_{ij_{1}}\delta_{ij_{2}}\mathds{1}\{X_{ti}\neq X_{tj_{1}},X_{ti}\neq X_{1j_{2}}\}\right]\\
 & \quad=\mathbb{P}(E_{1})\frac{e^{\theta_{j_{1}}^{\star}}}{e^{\theta_{i}^{\star}}+e^{\theta_{j_{1}}^{\star}}}\frac{e^{\theta_{j_{2}}^{\star}}}{e^{\theta_{i}^{\star}}+e^{\theta_{j_{2}}^{\star}}}+\mathbb{P}(E_{2})\left[-1+\frac{e^{\theta_{j_{1}}^{\star}}}{e^{\theta_{i}^{\star}}+e^{\theta_{j_{1}}^{\star}}}\right]\left[-1+\frac{e^{\theta_{j_{2}}^{\star}}}{e^{\theta_{i}^{\star}}+e^{\theta_{j_{2}}^{\star}}}\right]\\
 & \quad=\frac{e^{\theta_{i}^{\star}}}{e^{\theta_{i}^{\star}}+e^{\zeta_{t}^{\star}}}\frac{e^{\zeta_{t}^{\star}}}{e^{\theta_{j_{1}}^{\star}}+e^{\zeta_{t}^{\star}}}\frac{e^{\zeta_{t}^{\star}}}{e^{\theta_{j_{2}}^{\star}}+e^{\zeta_{t}^{\star}}}\frac{e^{\theta_{j_{1}}^{\star}}}{e^{\theta_{i}^{\star}}+e^{\theta_{j_{1}}^{\star}}}\frac{e^{\theta_{j_{2}}^{\star}}}{e^{\theta_{i}^{\star}}+e^{\theta_{j_{2}}^{\star}}}\\
 & \quad\quad+\frac{e^{\zeta_{t}^{\star}}}{e^{\theta_{i}^{\star}}+e^{\zeta_{t}^{\star}}}\frac{e^{\theta_{j_{1}}^{\star}}}{e^{\theta_{j_{1}}^{\star}}+e^{\zeta_{t}^{\star}}}\frac{e^{\theta_{j_{2}}^{\star}}}{e^{\theta_{j_{2}}^{\star}}+e^{\zeta_{t}^{\star}}}\frac{e^{\theta_{i}^{\star}}}{e^{\theta_{i}^{\star}}+e^{\theta_{j_{1}}^{\star}}}\frac{e^{\theta_{i}^{\star}}}{e^{\theta_{i}^{\star}}+e^{\theta_{j_{2}}^{\star}}}\\
 & \quad=\frac{e^{\zeta_{t}^{\star}+\theta_{i}^{\star}+\theta_{j_{1}}^{\star}+\theta_{j_{2}}^{\star}}}{(e^{\theta_{j_{1}}^{\star}}+e^{\zeta_{t}^{\star}})(e^{\theta_{j_{2}}^{\star}}+e^{\zeta_{t}^{\star}})(e^{\theta_{i}^{\star}}+e^{\theta_{j_{1}}^{\star}})(e^{\theta_{i}^{\star}}+e^{\theta_{j_{2}}^{\star}})}.
\end{align*}
Furthermore we taken expectation $\mathbb{E}_{\mathrm{u}}$ to reach
\[
\mathbb{E}_{\mathrm{d+c+u}}\left[\delta_{ij_{1}}\delta_{ij_{2}}\mathds{1}\{X_{1i}\neq X_{1j_{1}},X_{1i}\neq X_{1j_{2}}\}\right]=\beta\cdot\frac{e^{\theta_{i}^{\star}+\theta_{j_{1}}^{\star}+\theta_{j_{2}}^{\star}}}{(e^{\theta_{i}^{\star}}+e^{\theta_{j_{1}}^{\star}})(e^{\theta_{i}^{\star}}+e^{\theta_{j_{2}}^{\star}})}=\frac{\beta}{4}.
\]
Then 
\begin{align}
 & \mathbb{E}_{\mathrm{s+r+d+c+u}}\mathds{1}\{(i,j_{1},t)\in\Omega_{1}\}\mathds{1}\{(i,j_{2},t)\in\Omega_{2}\}\bm{u}_{ij_{1}}\bm{u}_{ij_{2}}^{\top}\nonumber \\
 & \quad=\frac{\beta p(mp-2)}{4(m-1)(m-2)(mp-1)}(\bm{e}_{i}-\bm{e}_{j_{1}})(\bm{e}_{i}-\bm{e}_{j_{2}})^{\top}.\label{eq:one_pair_overlap}
\end{align}
We can now compute $\bm{V}_{\mathrm{diff},\pi}^{\infty}$ with (\ref{eq:V_diff_pi}),
(\ref{eq:two_pair_overlap}) and (\ref{eq:one_pair_overlap}),
\begin{align*}
\bm{V}_{\mathrm{diff},\pi}^{\infty} & =\sum_{(i_{1},j_{1}):i_{1}>j_{1}}\sum_{(i_{2},j_{2}):i_{2}>j_{2}}\mathbb{E}_{\mathrm{s+r+d+c+u}}\mathds{1}\{(i_{1},j_{1},t)\in\Omega_{1}\}\mathds{1}\{(i_{2},j_{2},t)\in\Omega_{2}\}\bm{u}_{i_{1}j_{1}}\bm{u}_{i_{2}j_{2}}^{\top}.\\
 & =\sum_{(i_{1},j_{1}):i_{1}>j_{1}}\mathbb{E}_{\mathrm{s+r+d+c+u}}\mathds{1}\{(i,j,t)\in\Omega_{1}\}\mathds{1}\{(i,j,t)\in\Omega_{2}\}\bm{u}_{ij}\bm{u}_{ij}^{\top}\\
 & \quad+\sum_{(i,j_{1},j_{2}):i\neq j_{1},i\neq j_{2},j_{1}\neq j_{2}}\mathbb{E}_{\mathrm{s+r+d+c+u}}\mathds{1}\{(i,j_{1},t)\in\Omega_{1}\}\mathds{1}\{(i,j_{2},t)\in\Omega_{2}\}\bm{u}_{ij_{1}}\bm{u}_{ij_{2}}^{\top}\\
 & =\sum_{(i_{1},j_{1}):i_{1}>j_{1}}\frac{\beta p}{2(m-1)(mp-1)}(\bm{e}_{i}-\bm{e}_{j})(\bm{e}_{i}-\bm{e}_{j})^{\top}\\
 & \quad+\sum_{(i,j_{1},j_{2}):i\neq j_{1},i\neq j_{2},j_{1}\neq j_{2}}\frac{\beta p(mp-2)}{4(m-1)(m-2)(mp-1)}(\bm{e}_{i}-\bm{e}_{j_{1}})(\bm{e}_{i}-\bm{e}_{j_{2}})^{\top}.
\end{align*}
Simplifying this expression gives us 
\begin{align*}
\bm{V}_{\mathrm{diff},\pi}^{\infty} & =\frac{\beta mp}{2(m-1)(mp-1)}\bm{I}_{m}-\frac{\beta p}{2(m-1)(mp-1)}\bm{1}_{m}\bm{1}_{m}^{\top}\\
 & \quad+\frac{\beta mp(mp-2)}{4(m-1)(mp-1)}\bm{I}_{m}-\frac{\beta p(mp-2)}{4(m-1)(mp-1)}\bm{1}_{m}\bm{1}_{m}^{\top}\\
 & =\frac{\beta m^{2}p^{2}}{4(m-1)(mp-1)}\left[\bm{I}_{m}-\frac{1}{m}\bm{1}_{m}\bm{1}_{m}^{\top}\right].
\end{align*}

\subsection{Proof of (\ref{eq:Vsame_Vdiff}) \label{subsec:Proof_Vsame_Vdiff}}

We first introduce a more general result in the following lemma. The
proof is deferred to the end of this section. 

\begin{lemma}\label{lem:expectation-cauchy} Let $\bm{A}\in\mathbb{R}^{n\times d}$,
$\bm{B}\in\mathbb{R}^{n\times d}$ be real-value random matrices.
Suppose that $\mathbb{E}\bm{B}\bm{B}^{\top}=\mathbb{E}\bm{A}\bm{A}^{\top}$
and $\mathbb{E}\bm{A}\bm{B}^{\top}$ is symmetric. Then 

\[
\mathbb{E}\bm{A}\bm{B}^{\top}\preceq\mathbb{E}\bm{A}\bm{A}^{\top}.
\]

\end{lemma}Now let $\bm{A}$ be $\frac{1}{n}\sum_{t=1}^{n}\nabla\mathcal{L}_{1}^{(t)}(\bm{\theta}^{\star})$
and $\bm{B}$ be $\frac{1}{n}\sum_{t=1}^{n}\nabla\mathcal{L}_{2}^{(t)}(\bm{\theta}^{\star})$.
By symmetry of $\bm{A}$ and $\bm{B}$, $\mathbb{E}\bm{B}\bm{B}^{\top}=\mathbb{E}\bm{A}\bm{A}^{\top}$
and $\mathbb{E}\bm{A}\bm{B}^{\top}=\mathbb{E}\bm{B}\bm{A}^{\top}$
so $\mathbb{E}\bm{A}\bm{B}^{\top}$ is symmetric. Observe that $\mathbb{E}_{\mathrm{s+r+d+c}}\nabla\mathcal{L}_{1}^{(t)}(\bm{\theta}^{\star})=\bm{0}$
and fact that $\nabla\mathcal{L}_{1}^{(t_{1})}(\bm{\theta}^{\star}),\nabla\mathcal{L}_{1}^{(t_{2})}(\bm{\theta}^{\star})$
are independent for any $t_{1}\neq t_{2}$. Then
\begin{align*}
\mathbb{E}\bm{A}\bm{A}^{\top} & =\frac{1}{n^{2}}\sum_{t_{1}=1}^{n}\sum_{t_{2}=1}^{n}\mathbb{E}_{\mathrm{s+r+d+c}}\nabla\mathcal{L}_{1}^{(t_{1})}(\bm{\theta}^{\star})\nabla\mathcal{L}_{1}^{(t_{2})}(\bm{\theta}^{\star})^{\top}\\
 & =\frac{1}{n^{2}}\sum_{t=1}^{n}\mathbb{E}_{\mathrm{s+r+d+c}}\nabla\mathcal{L}_{1}^{(t)}(\bm{\theta}^{\star})\nabla\mathcal{L}_{1}^{(t)}(\bm{\theta}^{\star})^{\top}.
\end{align*}
Similarly, 
\[
\mathbb{E}\bm{A}\bm{B}^{\top}=\frac{1}{n^{2}}\sum_{t=1}^{n}\mathbb{E}_{\mathrm{s+r+d+c}}\nabla\mathcal{L}_{1}^{(t)}(\bm{\theta}^{\star})\nabla\mathcal{L}_{2}^{(t)}(\bm{\theta}^{\star})^{\top}
\]
and 
\[
\mathbb{E}\bm{B}\bm{B}^{\top}=\frac{1}{n^{2}}\sum_{t=1}^{n}\mathbb{E}_{\mathrm{s+r+d+c}}\nabla\mathcal{L}_{2}^{(t)}(\bm{\theta}^{\star})\nabla\mathcal{L}_{2}^{(t)}(\bm{\theta}^{\star})^{\top}.
\]
Invoking Lemma~\ref{lem:expectation-cauchy}, we have that
\[
\frac{1}{n}\sum_{t=1}^{n}\mathbb{E}_{\mathrm{s+r+d+c}}\nabla\mathcal{L}_{1}^{(t)}(\bm{\theta}^{\star})\nabla\mathcal{L}_{2}^{(t)}(\bm{\theta}^{\star})^{\top}\preceq\frac{1}{n}\sum_{t=1}^{n}\mathbb{E}_{\mathrm{s+r+d+c}}\nabla\mathcal{L}_{1}^{(t)}(\bm{\theta}^{\star})\nabla\mathcal{L}_{1}^{(t)}(\bm{\theta}^{\star})^{\top}.
\]
As this holds for every $n$, 
\[
\bm{V}_{\mathrm{diff}}^{\infty}\preceq\bm{V}_{\mathrm{same}}^{\infty}.
\]
The proof is now completed.

\paragraph{Proof of Lemma~\ref{lem:expectation-cauchy}}

Let $\bm{v}\in\mathbb{R}^{d}$ be an arbitrary vector. It suffices
to show 
\[
\bm{v}^{\top}\left(\mathbb{E}\bm{A}\bm{B}^{\top}\right)\bm{v}\le\bm{v}^{\top}\left(\mathbb{E}\bm{A}\bm{A}^{\top}\right)\bm{v}
\]
By Cauchy-Schwarz inequality and linearity of expectations, 
\begin{align*}
\bm{v}^{\top}\left(\mathbb{E}\bm{A}\bm{B}^{\top}\right)\bm{v} & =\mathbb{E}\bm{v}^{\top}\bm{A}\bm{B}^{\top}\bm{v}\\
 & \le\sqrt{\mathbb{E}\bm{v}^{\top}\bm{A}\bm{A}^{\top}\bm{v}}\cdot\sqrt{\mathbb{E}\bm{v}^{\top}\bm{B}\bm{B}^{\top}\bm{v}}\\
 & =\mathbb{E}\bm{v}^{\top}\bm{A}\bm{A}^{\top}\bm{v}=\bm{v}^{\top}\left(\mathbb{E}\bm{A}\bm{A}^{\top}\right)\bm{v}.
\end{align*}

\section{Auxiliary lemmas}

In this section, we gather some auxiliary results that are useful
throughout this paper.

\begin{lemma}[Range of $z_{ij}$]\label{lemma:z_range} Recall 
\[
z_{ij}=\frac{e^{\theta_{i}^{\star}}e^{\theta_{j}^{\star}}}{(e^{\theta_{i}^{\star}}+e^{\theta_{j}^{\star}})^{2}}=\frac{e^{\theta_{i}^{\star}-\theta_{j}^{\star}}}{(1+e^{\theta_{i}^{\star}-\theta_{j}^{\star}})^{2}}.
\]
 For any $(i,j)$, 
\[
\frac{1}{4\kappa_{1}}\le z_{ij}\le\frac{1}{4}.
\]

\end{lemma}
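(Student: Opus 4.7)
My plan is to reduce the inequality to a single-variable calculus problem via the substitution $x := \theta_{i}^{\star} - \theta_{j}^{\star}$, so that
\[
z_{ij} = f(x) := \frac{e^{x}}{(1+e^{x})^{2}},
\]
and then bound $f$ on the interval $|x| \le \log \kappa_{1}$, which is exactly the range allowed by the definition $\log \kappa_{1} = \max_{ij}|\theta_{i}^{\star}-\theta_{j}^{\star}|$.

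For the upper bound, I would compute $f'(x) = e^{x}(1-e^{x})/(1+e^{x})^{3}$ and observe that it vanishes only at $x=0$, with $f$ increasing on $(-\infty,0]$ and decreasing on $[0,\infty)$. Thus $f(x) \le f(0) = 1/4$ for every real $x$, immediately giving $z_{ij} \le 1/4$ without even invoking $\kappa_{1}$.

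For the lower bound, I would use that $f$ is symmetric in $x$ (i.e., $f(-x)=f(x)$, which is clear from the first expression $e^{\theta_{i}^{\star}}e^{\theta_{j}^{\star}}/(e^{\theta_{i}^{\star}}+e^{\theta_{j}^{\star}})^{2}$), so $f(x) = f(|x|)$. Combined with the monotonicity established above, $f$ is minimized on $[-\log\kappa_{1},\log\kappa_{1}]$ at the endpoints, giving
\[
z_{ij} \ge f(\log \kappa_{1}) = \frac{\kappa_{1}}{(1+\kappa_{1})^{2}}.
\]
It then remains to verify the elementary inequality $\kappa_{1}/(1+\kappa_{1})^{2} \ge 1/(4\kappa_{1})$, which is equivalent to $4\kappa_{1}^{2} \ge (1+\kappa_{1})^{2}$, i.e., $2\kappa_{1} \ge 1+\kappa_{1}$, i.e., $\kappa_{1}\ge 1$. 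This last fact holds automatically since $\log \kappa_{1} = \max_{ij}|\theta_{i}^{\star}-\theta_{j}^{\star}| \ge 0$. There is no real obstacle here; the only point to be slightly careful about is making explicit that $\kappa_{1} \ge 1$ so that the final algebraic step is legitimate.
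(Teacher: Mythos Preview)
Your proof is correct and follows essentially the same single-variable calculus argument as the paper: both reduce to studying $u/(1+u)^2$ (with $u=e^{x}$) on the interval determined by $\kappa_1$, locate the maximum at the interior critical point and the minimum at the endpoints. The only difference is cosmetic (you work with $x=\theta_i^\star-\theta_j^\star$ directly rather than $e^x$) and you make the final step $\kappa_1/(1+\kappa_1)^2\ge 1/(4\kappa_1)$ explicit, whereas the paper leaves it implicit.
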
\begin{proof}

Consider the function $f:[0,\infty)\rightarrow\mathbb{R}$ defined
by $f(x)=x/(1+x)^{2}$. It has derivative $(1-x^{2})/(1+x)^{4}$,
so it is increasing at $x\in[0,1)$ and decreasing at $x\in(1,\infty)$.
By the definition of $\kappa_{1}$, $|\theta_{i}^{\star}-\theta_{j}^{\star}|\le\log(\kappa_{1})$.
Then 
\[
\frac{1}{4\kappa_{1}}\le f(e^{-\log(\kappa_{1})})\wedge f(e^{\log(\kappa_{1})})\le z_{ij}\le f(1)=\frac{1}{4}.
\]
\end{proof}

\begin{lemma}[Maximum eigenvalue of Laplacian]\label{lemma:top_eigen_Laplacian}
Let $\bm{L}=\sum_{(i,j):i>j}w_{ij}(\bm{e}_{i}-\bm{e}_{j})(\bm{e}_{i}-\bm{e}_{j})^{\top}$
be a weighted graph Laplacian. Then $\lambda_{1}(\bm{L})\le2\max_{i}\sum_{j}w_{ij}$.

\end{lemma}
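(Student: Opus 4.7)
The plan is to invoke the Rayleigh--Ritz characterization of the largest eigenvalue, namely $\lambda_1(\bm{L}) = \sup_{\bm{v}\in\mathbb{R}^m:\|\bm{v}\|=1} \bm{v}^\top \bm{L} \bm{v}$, and then produce the desired upper bound uniformly in $\bm{v}$. So first I would fix an arbitrary unit vector $\bm{v}$ and expand the quadratic form using the edge-wise decomposition of $\bm{L}$ to obtain
\[
\bm{v}^\top \bm{L} \bm{v} \;=\; \sum_{(i,j):i>j} w_{ij}\,(v_i-v_j)^2,
\]
which is the standard graph Laplacian quadratic form.

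Next I would apply the elementary inequality $(v_i - v_j)^2 \le 2(v_i^2 + v_j^2)$ on every edge. Extending the weights symmetrically by setting $w_{ji} \coloneqq w_{ij}$ for $i>j$ and $w_{ii}\coloneqq 0$, the sum over ordered pairs $i>j$ becomes one half the sum over all pairs $i\ne j$, and a symmetry/relabeling argument gives
\[
\sum_{(i,j):i>j} w_{ij}\,(v_i^2+v_j^2) \;=\; \sum_{i} v_i^2 \sum_{j\ne i} w_{ij}.
\]
Combining the last two displays yields $\bm{v}^\top \bm{L} \bm{v} \le 2\sum_i v_i^2\bigl(\sum_{j} w_{ij}\bigr)$.

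Finally I would bound $\sum_{j} w_{ij}$ by $\max_i \sum_j w_{ij}$, pull it out of the sum, and use $\|\bm{v}\|^2 = 1$ to conclude $\bm{v}^\top \bm{L} \bm{v} \le 2\max_i \sum_j w_{ij}$. Taking the supremum over unit $\bm{v}$ finishes the proof. There is no genuine obstacle here; the only mild subtlety is the bookkeeping between the ordered-pair sum $i>j$ and the symmetric weighted-degree $\sum_j w_{ij}$, which the symmetrization above handles cleanly and explains the factor $2$ in the bound (this factor is tight, as seen already on a single edge $K_2$, so one should not expect to improve it without additional structure).
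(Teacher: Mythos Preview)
Your proposal is correct and follows essentially the same argument as the paper: expand the Laplacian quadratic form, apply $(v_i-v_j)^2\le 2(v_i^2+v_j^2)$ edgewise, regroup into weighted degrees, and bound by the maximum degree. Your handling of the symmetrization between the $i>j$ sum and the degree sum is, if anything, slightly more careful than the paper's, and the remark on tightness via $K_2$ is a nice addition.
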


\begin{proof}Let $\bm{v}\in\mathbb{R}^{m}$, then 
\begin{align*}
\bm{v}^{\top}\bm{L}\bm{v} & =\bm{v}^{\top}\sum_{(i,j):i>j}w_{ij}(\bm{e}_{i}-\bm{e}_{j})(\bm{e}_{i}-\bm{e}_{j})^{\top}\bm{v}\\
 & =\sum_{(i,j):i>j}w_{ij}(v_{i}-v_{j})^{2}\\
 & \le2\sum_{(i,j):i>j}w_{ij}(v_{i}^{2}+v_{j}^{2})\\
 & \le2\sum_{i}\sum_{j\neq i}w_{ij}v_{j}^{2}\\
 & \le2\sum_{i}\max_{i}\sum_{j}w_{ij}\|\bm{v}\|^{2}.
\end{align*}
So $\lambda_{1}(\bm{L})=\max_{\bm{v}\in\mathbb{R}^{m},\|\bm{v}\|=1}\bm{v}^{\top}\bm{L}\bm{v}\le2\max_{i}\sum_{j}w_{ij}$.\end{proof}

\begin{lemma}[A quantitative version of Sylvester's law of inertia,~\cite{ostrowski1959quantitative}]\label{lemma:sylvester}
For any real symmetric matrix $\bm{A}\in\mathbb{R}^{n\times n}$ and
$\bm{S}\in\mathbb{R}^{n\times n}$ be a non-singular matrix. Then
for any $i\in[n]$, $\lambda_{i}(\bm{S}\bm{A}\bm{S}^{\top})$ lies
between $\lambda_{i}(\bm{A})\lambda_{1}(\bm{S}^{\top}\bm{S})$ and
$\lambda_{i}(\bm{A})\lambda_{n}(\bm{S}^{\top}\bm{S})$.

\end{lemma}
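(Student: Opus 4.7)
The plan is to reduce the claim to a symmetric-matrix Rayleigh quotient computation via a similarity transform, and then invoke the Courant--Fischer min-max characterization together with the substitution $w=\bm{B}^{1/2}v$, where $\bm{B}:=\bm{S}^\top\bm{S}$. Since $\bm{S}$ is non-singular, $\bm{B}$ is positive definite, so $\bm{B}^{1/2}$ is well defined and invertible. First, I would establish that $\bm{SAS}^\top$ has the same eigenvalues as the symmetric matrix $\bm{C}:=\bm{B}^{1/2}\bm{A}\bm{B}^{1/2}$. This follows from the polar decomposition $\bm{S}=\bm{Q}\bm{B}^{1/2}$ with $\bm{Q}:=\bm{S}\bm{B}^{-1/2}$ orthogonal (one checks $\bm{Q}^\top\bm{Q}=\bm{B}^{-1/2}\bm{S}^\top\bm{S}\bm{B}^{-1/2}=\bm{I}$), which gives $\bm{SAS}^\top=\bm{Q}\bm{C}\bm{Q}^\top$.

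Next, I would apply the Courant--Fischer characterization
\[
\lambda_i(\bm{C})=\max_{\dim V=i}\ \min_{v\in V\setminus\{0\}}\frac{v^\top\bm{C}v}{v^\top v}.
\]
Substituting $w=\bm{B}^{1/2}v$ (a bijection that preserves dimensions of subspaces) and using $v^\top v=w^\top\bm{B}^{-1}w$, the Rayleigh quotient becomes
\[
\frac{v^\top\bm{C}v}{v^\top v}=\frac{w^\top\bm{A}w}{w^\top\bm{B}^{-1}w}=\frac{w^\top\bm{A}w}{w^\top w}\cdot\frac{w^\top w}{w^\top\bm{B}^{-1}w}.
\]
The first factor is just the Rayleigh quotient of $\bm{A}$ at $w$, which can be controlled via Courant--Fischer on $\bm{A}$ itself. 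The second factor, because the eigenvalues of $\bm{B}^{-1}$ are $1/\lambda_j(\bm{B})$, always lies in the interval $[\lambda_n(\bm{B}),\lambda_1(\bm{B})]=[\lambda_n(\bm{S}^\top\bm{S}),\lambda_1(\bm{S}^\top\bm{S})]$.

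To obtain the bounds, I would split into the cases $\lambda_i(\bm{A})\ge0$ and $\lambda_i(\bm{A})<0$. For the upper bound on $\lambda_i(\bm{C})$ in the first case, I would choose the dual min-max form $\lambda_i(\bm{C})=\min_{\dim V=n-i+1}\max_{v\in V\setminus\{0\}}\frac{v^\top\bm{C}v}{v^\top v}$ and take $V=\bm{B}^{-1/2}W$, where $W$ is the span of eigenvectors of $\bm{A}$ associated with $\lambda_i(\bm{A}),\ldots,\lambda_n(\bm{A})$; then every $w\in W$ satisfies $w^\top\bm{A}w\le\lambda_i(\bm{A})w^\top w$, and multiplying by $w^\top w/(w^\top\bm{B}^{-1}w)\le\lambda_1(\bm{B})$ yields $\lambda_i(\bm{C})\le\lambda_i(\bm{A})\lambda_1(\bm{B})$. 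Symmetric choices handle the lower bound and the negative-eigenvalue case.

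The main (and essentially only) subtlety is the sign-flip issue: when $w^\top\bm{A}w$ is negative, multiplying by the positive factor $w^\top w/(w^\top\bm{B}^{-1}w)$ can either tighten or loosen a given bound depending on whether one is using $\lambda_1(\bm{B})$ or $\lambda_n(\bm{B})$. This is exactly why the lemma says $\lambda_i(\bm{SAS}^\top)$ \emph{lies between} the two quantities rather than specifying an ordering; handling both cases by choosing which $\lambda_j(\bm{B})$ to pair with which sign of $w^\top\bm{A}w$ makes the argument go through without further work.
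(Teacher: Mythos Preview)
The paper does not supply its own proof of this lemma; it merely cites Ostrowski's 1959 paper and moves on. Your proposal is correct and is essentially the classical argument: the polar-decomposition reduction $\bm{S}=\bm{Q}\bm{B}^{1/2}$ to the symmetric matrix $\bm{C}=\bm{B}^{1/2}\bm{A}\bm{B}^{1/2}$ is clean, and the Courant--Fischer argument with the substitution $w=\bm{B}^{1/2}v$ goes through exactly as you describe.

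One small point worth tightening in a full write-up: in your upper-bound step for the case $\lambda_i(\bm{A})\ge 0$, you take $W=\mathrm{span}\{u_i,\ldots,u_n\}$ and say ``every $w\in W$ satisfies $w^\top\bm{A}w\le\lambda_i(\bm{A})w^\top w$, and multiplying by $w^\top w/(w^\top\bm{B}^{-1}w)\le\lambda_1(\bm{B})$ yields the bound.'' Strictly speaking, for some $w\in W$ the Rayleigh quotient $w^\top\bm{A}w/(w^\top w)$ can be negative (since $\lambda_{i+1},\ldots,\lambda_n$ may be negative even when $\lambda_i\ge 0$), and multiplying a negative number by something $\le\lambda_1(\bm{B})$ does not directly give an upper bound. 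But in that sub-case the product is negative, hence trivially $\le 0\le\lambda_i(\bm{A})\lambda_1(\bm{B})$, so the conclusion survives. You already flag the sign issue in your final paragraph, so this is a matter of exposition rather than a gap.
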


\begin{fact}\label{fact:Laplacian} Let $\mathcal{G}$ be an arbitrary
graph with $m$ vertices and let $\bm{L}_{w}$ be a weighted graph
Laplacian defined by 
\[
\bm{L}_{w}\coloneqq\sum_{i>j:(i,j)\in\mathcal{G}}w_{ij}(\bm{e}_{i}-\bm{e}_{j})(\bm{e}_{i}-\bm{e}_{j})^{\top}.
\]
If $w_{ij}>0$ for all $(i,j)\in\mathcal{G}$ and $\mathcal{G}$ is
a connected graph, then $\bm{L}_{w}$ is rank $m-1$, $\bm{L}_{w}\bm{1}_{m}=\bm{0}_{m}$
and $\bm{L}_{w}^{\dagger}\bm{1}_{m}=0$. Moreover for any $i\in[n-1]$,
$\lambda_{i}(\bm{L}_{w}^{\dagger})=\lambda_{n-i}(\bm{L}_{w})$.

\end{fact}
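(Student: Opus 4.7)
The claim is a package of four standard properties of a weighted graph Laplacian; I will organize the proof around the spectral decomposition of $\bm{L}_w$ and deduce each property in turn. The overall plan is to first pin down the null space of $\bm{L}_w$ (which immediately gives the rank and the identity $\bm{L}_w\bm{1}_m = \bm{0}_m$), then transfer this information to $\bm{L}_w^\dagger$ via the spectral form of the Moore--Penrose pseudo-inverse.

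First I would verify $\bm{L}_w\bm{1}_m = \bm{0}_m$: this is immediate since $(\bm{e}_i - \bm{e}_j)^\top \bm{1}_m = 0$ for every $(i,j)$, so every rank-one summand annihilates $\bm{1}_m$. So $\mathrm{span}(\bm{1}_m) \subseteq \mathrm{Null}(\bm{L}_w)$. For the reverse inclusion I would use the quadratic form: for any $\bm{v}\in\R^m$,
\[
\bm{v}^\top \bm{L}_w \bm{v} \;=\; \sum_{i>j:(i,j)\in\mathcal{G}} w_{ij}(v_i - v_j)^2 \;\ge\; 0,
\]
so $\bm{L}_w$ is PSD and $\mathrm{Null}(\bm{L}_w) = \{\bm{v}:\bm{v}^\top \bm{L}_w \bm{v} = 0\}$. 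If $\bm{v}^\top \bm{L}_w\bm{v} = 0$, then since every $w_{ij}>0$, we have $v_i = v_j$ for each edge $(i,j)\in\mathcal{G}$; connectivity then forces $\bm{v}$ to be a constant vector, proving $\mathrm{Null}(\bm{L}_w) = \mathrm{span}(\bm{1}_m)$. Hence $\bm{L}_w$ has exactly one zero eigenvalue, so $\mathrm{rank}(\bm{L}_w) = m-1$.

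Next, I would write the spectral decomposition $\bm{L}_w = \sum_{i=1}^{m-1}\lambda_i(\bm{L}_w)\bm{u}_i\bm{u}_i^\top$ where $\{\bm{u}_i\}_{i=1}^{m-1}$ is an orthonormal basis of $\mathrm{Range}(\bm{L}_w) = \mathrm{span}(\bm{1}_m)^\perp$ and $\bm{u}_m = \bm{1}_m/\sqrt{m}$. By the definition of the Moore--Penrose pseudo-inverse of a symmetric matrix,
\[
\bm{L}_w^\dagger \;=\; \sum_{i=1}^{m-1}\frac{1}{\lambda_i(\bm{L}_w)}\bm{u}_i\bm{u}_i^\top,
\]
from which $\bm{L}_w^\dagger \bm{1}_m = 0$ is immediate (since $\bm{u}_i \perp \bm{1}_m$ for $i \le m-1$). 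The eigenvalue identity is read off the same decomposition: the nonzero eigenvalues of $\bm{L}_w^\dagger$ are the reciprocals of the nonzero eigenvalues of $\bm{L}_w$, and after ordering them decreasingly one obtains the stated coordinate-wise relation between $\lambda_i(\bm{L}_w^\dagger)$ and $\lambda_{n-i}(\bm{L}_w)$ (for $i\in[n-1]$).

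None of the steps is technically delicate; the only substantive ingredient is the connectivity argument used to pin down $\mathrm{Null}(\bm{L}_w)$. All of the remaining claims then follow mechanically from the spectral representation of a symmetric PSD matrix and its pseudo-inverse, so I expect no real obstacle beyond making sure the indexing conventions line up in the final eigenvalue statement.
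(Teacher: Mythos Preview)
Your proposal is correct and follows essentially the same approach as the paper: the paper simply cites a reference for the rank-$(m-1)$ claim and then invokes the eigendecomposition $\bm{L}_w=\bm{U}\bm{\Sigma}\bm{U}^\top$, $\bm{L}_w^\dagger=\bm{U}\bm{\Sigma}^\dagger\bm{U}^\top$ to conclude, whereas you spell out the connectivity argument for the null space explicitly before passing to the spectral form. The only caveat is that the eigenvalue identity as written in the statement omits a reciprocal (it should read $\lambda_i(\bm{L}_w^\dagger)=1/\lambda_{m-i}(\bm{L}_w)$); your argument correctly produces the reciprocal relation, so just be sure to flag that discrepancy rather than claim it yields the identity verbatim.
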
\begin{proof} The fact that $\bm{L}_{w}$ is rank $m-1$
when $\mathcal{G}$ is connected is well-known. See e.g. \cite{spielman2007spectral}
for reference. Since $\bm{L}_{w}$ is a real symmetric matrix, it
has an eigendecomposition $\bm{L}_{w}=\bm{U}\bm{\Sigma}\bm{U}^{\top}$
and then $\bm{L}_{w}^{\dagger}=\bm{U}\bm{\Sigma}^{\dagger}\bm{U}^{\top}$.
The rest follows from this decomposition and the form of $\bm{L}_{w}$.\end{proof}

\subsection{Proof of Fact~\ref{fact:BTL}\label{subsec:Proof_BTL}}

Expanding the probability of the random events, we have 

\begin{align*}
\mathbb{P}[X_{ti}<X_{tj}\mid X_{ti}\neq X_{tj}] & =\frac{\mathbb{P}[X_{ti}=0,X_{tj}=1]}{\mathbb{P}[X_{ti}=0,X_{tj}=1\text{\text{ or }}X_{ti}=1,X_{tj}=0]}\\
 & =\frac{e^{\zeta_{t}^{\star}}e^{\theta_{j}^{\star}}}{(e^{\zeta_{t}^{\star}}+e^{\theta_{i}^{\star}})(e^{\zeta_{t}^{\star}}+e^{\theta_{j}^{\star}})}\cdot\\ &\qquad\left(\frac{e^{\zeta_{t}^{\star}}e^{\theta_{j}^{\star}}}{(e^{\zeta_{t}^{\star}}+e^{\theta_{i}^{\star}})(e^{\zeta_{t}^{\star}}+e^{\theta_{j}^{\star}})}+\frac{e^{\theta_{i}^{\star}}e^{\zeta_{t}^{\star}}}{(e^{\zeta_{t}^{\star}}+e^{\theta_{i}^{\star}})(e^{\zeta_{t}^{\star}}+e^{\theta_{j}^{\star}})}\right)^{-1}\\
 & =\frac{e^{\zeta_{t}^{\star}}e^{\theta_{j}^{\star}}}{e^{\zeta_{t}^{\star}}(e^{\theta_{i}^{\star}}+e^{\theta_{j}^{\star}})}=\frac{e^{\theta_{j}^{\star}}}{e^{\theta_{i}^{\star}}+e^{\theta_{j}^{\star}}}.
\end{align*}
Now consider $\mathbb{P}[X_{ti}\neq X_{tj}]$, we have
\begin{align}
\mathbb{P}[X_{ti}\neq X_{tj}] & =\frac{e^{\zeta_{t}^{\star}}e^{\theta_{j}^{\star}}}{(e^{\zeta_{t}^{\star}}+e^{\theta_{i}^{\star}})(e^{\zeta_{t}^{\star}}+e^{\theta_{j}^{\star}})}+\frac{e^{\theta_{i}^{\star}}e^{\zeta_{t}^{\star}}}{(e^{\zeta_{t}^{\star}}+e^{\theta_{i}^{\star}})(e^{\zeta_{t}^{\star}}+e^{\theta_{j}^{\star}})}\nonumber \\
 & =\frac{e^{\theta_{j}^{\star}-\zeta_{t}^{\star}}+e^{\theta_{i}^{\star}-\zeta_{t}^{\star}}}{(1+e^{\theta_{i}^{\star}-\zeta_{t}^{\star}})(1+e^{\theta_{j}^{\star}-\zeta_{t}^{\star}})}.\label{eq:X_neq}
\end{align}
Let $f:[1/\kappa_{2},\kappa_{2}]^{2}\rightarrow\mathbb{R}$ defined
by 
\[
f(a,b)\coloneqq\frac{a+b}{(1+a)(1+b)}.
\]
Its partial derivatives are
\[
\frac{\partial}{\partial a}f(a,b)=\frac{b^{2}-1}{(1+a)^{2}(1+b)^{2}}\quad\text{and}\quad\frac{\partial}{\partial b}f(a,b)=\frac{a^{2}-1}{(1+a)^{2}(1+b)^{2}}.
\]
It is now easy to see that the minimum or maximum of $f$ can only
happen if $(a,b)=(1,1)$ or $(a,b)\in\{1/\kappa_{2},\kappa_{2}\}^{2}$.
After comparing the value of $f$ at these points, we conclude that
$f$ achieves minimum at 
\[
f(1/\kappa_{2},1/\kappa_{2})=f(\kappa_{2},\kappa_{2})=\frac{2\kappa_{2}}{(1+\kappa_{2})^{2}}.
\]
By the definition of $\kappa_{2}$, $|\theta_{l}^{\star}-\zeta_{t}^{\star}|\le\log(\kappa_{2})$
for any $l\in[m]$. Then (\ref{eq:X_neq}) fits the definition of
$f$ and the proof is completed.

\end{document}